\icmltitlerunning{Model-free RL in Infinite-horizon Average-reward MDPs}
\newtheorem{theorem}{Theorem}
\newtheorem{corollary}{Corollary}[theorem]
\newtheorem{lemma}[theorem]{Lemma}
\newtheorem{definition}{Definition}[section]
\DeclareMathOperator*{\argmin}{argmin}
\DeclareMathOperator*{\argmax}{argmax}
\DeclareMathOperator*{\spn}{sp}
\newcommand{\one}{\mathbf{1}}
\newcommand{\mix}{t_{\text{mix}}}
\newcommand{\calS}{{\mathcal{S}}}
\newcommand{\calA}{{\mathcal{A}}}
\newcommand{\calT}{{\mathcal{T}}}
\newcommand{\calI}{{\mathcal{I}}}
\newcommand{\estimate}{\textsc{EstimateQ}\xspace}
\newcommand{\pg}{\textsc{MDP-OOMD}\xspace}
\newcommand{\oomd}{\textsc{OomdUpdate}\xspace}
\newcommand{\politex}{\textsc{Politex}\xspace}
\newcommand{\expthree}{\textsc{Exp3}\xspace}
\newcommand{\E}{\mathbb{E}}
\newcommand{\hit}{t_{\text{hit}}}
\newcommand{\ratio}{\rho}
\newcommand{\order}{\mathcal{O}}
\newcommand{\otil}{\widetilde{\mathcal{O}}}
\newcommand{\field}[1]{\mathbb{#1}}
\newcommand{\fR}{\field{R}}
\newcommand{\inner}[1]{ \left\langle {#1} \right\rangle }
\newcommand{\inn}[1]{ \langle {#1} \rangle }
\newcommand{\norm}[1]{\left\|{#1}\right\|}
\begin{document}

\twocolumn[
\icmltitle{Model-free Reinforcement Learning in Infinite-horizon \\ Average-reward Markov Decision Processes}



\icmlsetsymbol{equal}{*}

\begin{icmlauthorlist}
\icmlauthor{Chen-Yu Wei\ \ }{}
\icmlauthor{Mehdi Jafarnia-Jahromi\ \ }{}
\icmlauthor{Haipeng Luo\ \ }{}
\icmlauthor{Hiteshi Sharma\ \ }{}
\icmlauthor{Rahul Jain\ \ }{}
\end{icmlauthorlist}
\begin{center}
\{chenyu.wei, mjafarni, haipengl, hiteshis, rahul.jain\}@usc.edu
\end{center}

\icmlcorrespondingauthor{Chen-Yu Wei}{chenyu.wei@usc.edu}
\icmlcorrespondingauthor{Mehdi Jafarnia-Jahromi}{mjafarni@usc.edu}

\icmlkeywords{Reinforcement learning, model-free algorithms, infinite-horizon, average-reward}

\vskip 0.3in
]




\begin{abstract}
\textit{Model-free} reinforcement learning is known to be memory and computation efficient and more amendable to large scale problems. In this paper, two model-free algorithms are introduced for learning \textit{infinite-horizon average-reward} Markov Decision Processes (MDPs).
The first algorithm reduces the problem to the discounted-reward version and achieves $\order(T^{2/3})$ regret after $T$ steps, under the minimal assumption of weakly communicating MDPs.
To our knowledge, this is the first model-free algorithm for general MDPs in this setting.
The second algorithm makes use of recent advances in adaptive algorithms for adversarial multi-armed bandits and improves the regret to $\order(\sqrt{T})$, albeit with a stronger ergodic assumption.
This result significantly improves over the $\order(T^{3/4})$ regret achieved by the only existing model-free algorithm by~\citet{abbasi2019politex} for ergodic MDPs in the infinite-horizon average-reward setting.
\end{abstract}


\section{Introduction}
\label{sec: introduction}

Reinforcement learning (RL) refers to the problem of an agent interacting with an unknown environment with the goal of maximizing its cumulative reward through time.
The environment is usually modeled as a Markov Decision Process (MDP) with an unknown transition kernel and/or an unknown reward function.
The fundamental trade-off between exploration and exploitation is the key challenge for RL: should the agent exploit the available information to optimize the immediate performance, or should it explore the poorly understood states and actions to gather more information to improve future performance?

There are two broad classes of RL algorithms: \textit{model-based} and \textit{model-free}. Model-based algorithms maintain an estimate of the underlying MDP and use that to determine a policy during the learning process. Examples include UCRL2 \citep{jaksch2010near}, REGAL \citep{bartlett2009regal}, PSRL \citep{ouyang2017learning}, SCAL \citep{fruit2018efficient}, UCBVI \citep{azar2017minimax}, EBF \citep{zhang2019regret} and EULER \citep{zanette2019tighter}. Model-based algorithms are well-known for their sample efficiency. However, there are two general disadvantages of model-based algorithms: First, model-based algorithms require large memory to store the estimate of the model parameters.
Second, it is hard to extend model-based approaches to non-parametric settings, e.g., continuous state MDPs.

Model-free algorithms, on the other hand, try to resolve these issues by directly maintaining an estimate of the optimal Q-value function or the optimal policy. Examples include Q-learning \citep{watkins1989learning}, Delayed Q-learning \citep{strehl2006pac}, TRPO \citep{schulman2015trust}, DQN \citep{mnih2013playing}, A3C \citep{mnih2016asynchronous}, and more. 
Model-free algorithms are not only computation and memory efficient, but also easier to be extended to large scale problems by incorporating function approximation.

It was believed that model-free algorithms are less sample-efficient compared to model-based algorithms. However, recently \citet{jin2018q} showed that (model-free) Q-learning algorithm with UCB exploration achieves a nearly-optimal regret bound, implying the possibility of designing algorithms with advantages of both model-free and model-based methods.
\citet{jin2018q} addressed the problem for episodic finite-horizon MDPs. 
Following this work, \citet{dong2019q} extended the result to the infinite-horizon discounted-reward setting. 

\begin{table*}
\caption{Regret comparisons for RL algorithms in infinite-horizon average-reward MDPs with $S$ states, $A$ actions, and $T$ steps. $D$ is the diameter of the MDP, $\spn(v^*)\leq D$ is the span of the optimal value function, $\mathbb{V}_{s, a}^\star := \text{Var}_{s' \sim p(\cdot | s, a) }[v^*(s')] \leq \spn(v^*)^2$ is the variance of the optimal value function, $\mix$ is the mixing time (Def~\ref{def:mixing}), $\hit$ is the hitting time (Def~\ref{def:hitting}), and $\ratio \leq \hit$ is some distribution mismatch coefficient (Eq.~\eqref{eqn:ratio}). 
For more concrete definition of these parameters, see Sections~\ref{sec: preliminaries}-\ref{section: policy optimization}.}
\label{tab: regret comparison}
\begin{center}
\renewcommand{\arraystretch}{1.3}
\begin{tabular}{ |c|l|l|l| }
\hline
 & \textbf{Algorithm} & \textbf{Regret} & \textbf{Comment} \\ \hline
\multirow{8}{*}{Model-based} & REGAL {\small\citep{bartlett2009regal}} & $\otil(\spn(v^*)\sqrt{SAT})$ & no efficient implementation \\
 & UCRL2 {\small\citep{jaksch2010near}} & $\otil(DS\sqrt{AT})$ & - \\
 & PSRL {\small\citep{ouyang2017learning}}& $\otil(\spn(v^*)S\sqrt{AT})$ & Bayesian regret \\
 & OSP {\small\citep{ortner2018regret}}& $\otil(\sqrt{\mix SAT})$ & \makecell[l]{ergodic assumption and \\ no efficient implementation } \\
 & SCAL {\small\citep{fruit2018efficient}} & $\otil(\spn(v^*)S\sqrt{AT})$ & -\\
 & KL-UCRL {\small\citep{talebi2018variance}} & $\otil(\sqrt{S\sum_{s, a}\mathbb{V}_{s, a}^\star T})$ & -\\
 & UCRL2B {\small\citep{fruit2019improved}} & $\otil(S\sqrt{DAT})$ & -\\
 &  EBF {\small\citep{zhang2019regret}}& $\otil(\sqrt{DSAT})$ & no efficient implementation \\ \hline
 \hline
\multirow{3}{*}{Model-free} 
 & \politex {\small\citep{abbasi2019politex}} & $\mix^3\hit\sqrt{SA}T^\frac{3}{4}$ & ergodic assumption \\
 & Optimistic Q-learning (\textbf{\small this work}) & $\otil(\spn(v^*)(SA)^{\frac{1}{3}}T^{\frac{2}{3}})$ &- \\
 & \pg (\textbf{\small this work}) & $\otil(\sqrt{\mix^3 \ratio AT})$ &ergodic assumption \\ \hline
 \hline
 & lower bound {\small\citep{jaksch2010near}} & $\Omega(\sqrt{DSAT})$ & - \\ \hline
\end{tabular}
\end{center}
\end{table*}

However, Q-learning based model-free algorithms with low regret for \textit{infinite-horizon average-reward} MDPs, an equally heavily-studied setting in the RL literature, 
remains unknown.
Designing such algorithms has proven to be rather challenging since the Q-value function estimate may grow unbounded over time and it is hard to control its magnitude in a way that guarantees efficient learning. Moreover, techniques such as backward induction in the finite-horizon setting or contraction mapping in the infinite-horizon discounted setting can not be applied to the infinite-horizon average-reward setting.

In this paper, we make significant progress in this direction and propose two model-free algorithms for learning infinite-horizon average-reward MDPs. The first algorithm, Optimistic Q-learning (Section \ref{sec: qlearning}), achieves a regret bound of $\otil(T^{2/3})$ with high probability for the broad class of weakly communicating MDPs.\footnote{%
Throughout the paper, we use the notation $\otil(\cdot)$ to suppress log terms.
} 
This is the first model-free algorithm in this setting under only the minimal weakly communicating assumption.
The key idea of this algorithm is to artificially introduce a discount factor for the reward, to avoid the aforementioned unbounded Q-value estimate issue,
and to trade-off this effect with the approximation introduced by the discount factor.
We remark that this is very different from the R-learning algorithm of~\cite{schwartz1993reinforcement}, which is a variant of Q-learning with no discount factor for the infinite-horizon average-reward setting.

The second algorithm, \pg (Section \ref{section: policy optimization}), attains an improved regret bound of $\otil(\sqrt{T})$ for the more restricted class of \textit{ergodic} MDPs. 
This algorithm maintains an instance of a multi-armed bandit algorithm at each state to learn the best action.
Importantly, the multi-armed bandit algorithm needs to ensure several key properties to achieve our claimed regret bound,
and to this end we make use of the recent advances for adaptive adversarial bandit algorithms from \citep{wei2018more} in a novel way.

To the best of our knowledge, 
the only existing model-free algorithm for this setting is the \politex algorithm~\citep{abbasi2019politex, abbasi2019exploration}, which achieves $\otil(T^{3/4})$ regret for ergodic MDPs only. 
Both of our algorithms enjoy a better bound compared to \politex, and the first algorithm even removes the ergodic assumption completely.\footnote{\politex is studied in a more general setup with function approximation though. See the end of Section~\ref{sec:PO} for more comparisons.}

For comparisons with other existing model-based approaches for this problem, see Table~\ref{tab: regret comparison}.
We also conduct experiments comparing our two algorithms.
Details are deferred to Appendix~\ref{app:experiments} due to space constraints.
\section{Related Work}
We review the related literature with regret guarantees for learning MDPs with finite state and action spaces (there are many other works on asymptotic convergence or sample complexity, a different focus compared to our work). Three common settings have been studied: 1) finite-horizon episodic setting, 2) infinite-horizon discounted setting, and 3) infinite-horizon average-reward setting. For the first two settings, previous works have designed efficient algorithms with regret bound or sample complexity that is (almost) information-theoretically optimal, using either model-based approaches such as~\citep{azar2017minimax},
or model-free approaches such as~\citep{jin2018q, dong2019q}.

For the infinite-horizon average-reward setting, 
many model-based algorithms have been proposed, such as~\citep{auer2007logarithmic, jaksch2010near, ouyang2017learning, agrawal2017optimistic, talebi2018variance, fruit2018near, fruit2018efficient}.
These algorithms either conduct \textit{posterior sampling} or follow  the \textit{optimism in face of uncertainty} principle to build an MDP model estimate and then plan according to the estimate (hence model-based). 
They all achieve $\tilde{\mathcal{O}}(\sqrt{T})$ regret, but the dependence on other parameters are suboptimal.
Recent works made progress toward obtaining the optimal bound \citep{ortner2018regret, zhang2019regret}; however, their algorithms are not computationally efficient -- the time complexity scales exponentially in the number of states. 
On the other hand, except for the naive approach of combining Q-learning with $\epsilon$-greedy exploration (which is known to suffer regret exponential in some parameters~\citep{osband2014generalization}),
the only existing model-free algorithm for this setting is \politex, which only works for ergodic MDPs.


Two additional works are closely related to our second algorithm \pg: \cite{neu2013online} and~\cite{wang2017primal}.
They all belong to \textit{policy optimization} method where the learner tries to learn the parameter of the optimal policy directly.
Their settings are quite different from ours and the results are not comparable.
We defer more detailed comparisons with these two works to the end of Section~\ref{sec:PO}.

\section{Preliminaries}
\label{sec: preliminaries}
An infinite-horizon average-reward Markov Decision Process (MDP) can be described by $(\mathcal{S, A}, r, p)$ where $\mathcal{S}$ is the state space, $\mathcal{A}$ is the action space, $r: \mathcal{S \times A} \to [0, 1]$ is the reward function and $p: \mathcal{S}^2 \times \mathcal{A} \to [0, 1]$ is the transition probability such that $p(s'| s, a) := \mathbb{P}(s_{t+1}=s' \mid s_t=s, a_t=a)$ for $s_t \in \mathcal{S}, a_t \in \mathcal{A}$ and $t=1,2,3,\cdots$. We assume that $\mathcal{S}$ and $\mathcal{A}$ are finite sets with cardinalities $S$ and $A$, respectively. The average reward per stage of a deterministic/stationary policy $\pi: \mathcal{S \to A}$ starting from state $s$ is defined as 
\begin{align*}
J^\pi(s) := \liminf_{T \to \infty} \frac{1}{T} \mathbb{E} \left[\sum_{t=1}^{T} r(s_t, \pi(s_t)) \Bigm\vert s_1= s \right]
\end{align*}
where $s_{t+1}$ is drawn from $p(\cdot | s_t, \pi(s_t))$.
Let $J^*(s) := \max_{\pi \in \mathcal{A}^\mathcal{S}} J^\pi(s)$. A policy $\pi^*$ is said to be optimal if it satisfies $J^{\pi^*}(s) = J^*(s)$ for all $s \in \mathcal{S}$. 

We consider two standard classes of MDPs in this paper: (1) \textit{weakly communicating} MDPs defined in Section \ref{sec: qlearning} and (2) \textit{ergodic} MDPs defined in Section \ref{section: policy optimization}.
The weakly communicating assumption is weaker than the ergodic assumption, and is in fact known to be necessary for learning infinite-horizon MDPs with low regret \citep{bartlett2009regal}.

Standard MDP theory \citep{puterman2014markov} shows that for these two classes, there exist $q^*: \mathcal{S \times A} \to \mathbb{R}$ (unique up to an additive constant) and unique $J^* \in [0, 1]$ such that $J^*(s) = J^*$ for all $s \in \mathcal{S}$ and the following Bellman equation holds:
\begin{align}
\label{eq: bellman equation}
J^* + q^*(s, a) = r(s, a) + \mathbb{E}_{s' \sim p(\cdot | s, a)}[v^*(s')],
\end{align}
where $v^*(s) := \max_{a\in\mathcal{A}} q^*(s, a)$. The optimal policy is then obtained by $\pi^*(s) = \argmax_a q^*(s, a)$. 

We consider a learning problem where $\mathcal{S}, \mathcal{A}$ and the reward function $r$ are known to the agent, but not the transition probability $p$ (so one cannot directly solve the Bellman equation). 
The knowledge of the reward function is a typical assumption as in \cite{bartlett2009regal,gopalan2015thompson,ouyang2017learning}, and can be removed at the expense of a constant factor for the regret bound.

Specifically, the learning protocol is as follows. 
An agent starts at an arbitrary state $s_1\in\mathcal{S}$.
At each time step $t= 1, 2, 3, \cdots$, the agent observes state $s_t \in \mathcal{S}$ and takes action $a_t \in \mathcal{A}$ which is a function of the history $s_1, a_1, s_2, a_2,  \cdots, s_{t-1}, a_{t-1}, s_t$. The environment then determines the next state by drawing $s_{t+1}$ according to $p( \cdot | s_t, a_t)$.  
The performance of a learning algorithm is evaluated through the notion of \textit {cumulative regret}, defined as the difference between the total reward of the optimal policy and that of the algorithm:
\begin{align*}
R_T := \sum_{t=1}^T \Big(J^* - r(s_t, a_t)\Big).
\end{align*}
Since $r \in [0, 1]$ (and subsequently $J^* \in [0, 1]$), the regret can at worst grow linearly with $T$. If a learning algorithm achieves sub-linear regret, then $R_T/T$ goes to zero, i.e.,  the average reward of the algorithm converges to the optimal per stage reward $J^*$. 
The best existing regret bound is $\otil(\sqrt{DSAT})$ achieved by a model-based algorithm \citep{zhang2019regret} (where $D$ is the diameter of the MDP) and it matches the lower bound of \cite{jaksch2010near}. 


\section{Optimistic Q-Learning}
\label{sec: qlearning}
In this section, we introduce our first algorithm, \textsc{Optimistic Q-learning} (see Algorithm~\ref{alg: optimistic qlearning} for pseudocode).
The algorithm works for any weakly communicating MDPs.
An MDP is weakly communicating if its state space $\mathcal{S}$ can be partitioned into two subsets: in the first subset, all states are transient under any stationary policy; in the second subset, every two states are accessible from each other under some stationary policy.
It is well-known that the weakly communicating condition is necessary for ensuring low regret in this setting \citep{bartlett2009regal}.

Define $\spn(v^*) = \max_{s} v^*(s) - \min_{s} v^*(s)$  to be the span of the value function, which is known to be bounded for weakly communicating MDPs.
In particular, it is bounded by the diameter of the MDP (see~\citep[Lemma~38.1]{lattimore2018bandit}).
We assume that $\spn(v^*)$ is known and use it to set the parameters. 
However, in the case when it is unknown, we can replace $\spn(v^*)$ with any upper bound of it (e.g. the diameter) in both the algorithm and the analysis.

\setcounter{AlgoLine}{0}
\begin{algorithm}[t]
\caption{\textsc{Optimistic Q-learning}}
\label{alg: optimistic qlearning}
\textbf{Parameters: } $H \geq 2$, confidence level $\delta \in (0,1)$\\
\textbf{Initialization: } 
 $\gamma=1-\frac{1}{H}, \quad \forall s: \hat{V}_1(s)=H$ \\
 $\forall s,a: Q_1(s,a) = \hat{Q}_1(s,a)=H, \;\; n_1(s,a)=0$ \\
\textbf{Define:}  
 $\forall \tau, \alpha_\tau = \frac{H+1}{H+\tau}$, $b_\tau = 4\spn(v^*)\sqrt{\frac{H}{\tau}\ln \frac{2T}{\delta}}$ \\
\For{$t=1, \ldots, T$}{
\nl    Take action $a_t = \argmax_{a\in\calA} \hat{Q}_{t}(s_t, a)$. \label{line:greedy} \\
\nl    Observe $s_{t+1}$. \\
\nl    Update: 
    \begin{align}
    n_{t+1}(s_t, a_t) &\leftarrow n_{t}(s_t, a_t)+1 \notag\\
    \tau &\leftarrow n_{t+1}(s_t, a_t)  \notag \\
    Q_{t+1}(s_t,a_t) &\leftarrow (1-\alpha_\tau)Q_{t}(s_t,a_t) \notag \\
      + \alpha_\tau & \left[r(s_t,a_t) + \gamma \hat{V}_{t}(s_{t+1}) + b_\tau\right]  \label{eqn: q update}\\
     \hat{Q}_{t+1}(s_t,a_t) &\leftarrow \min\left\{ \hat{Q}_{t}(s_t,a_t), Q_{t+1}(s_t,a_t) \right\} \notag \\
     \hat{V}_{t+1}(s_t)&\leftarrow \max_{a\in\calA}\hat{Q}_{t+1}(s_t, a) . \notag
    \end{align}
    (All other entries of $n_{t+1}, Q_{t+1}, \hat{Q}_{t+1}, \hat{V}_{t+1}$ remain the same as those in $n_{t}, Q_t, \hat{Q}_{t}, \hat{V}_{t}$.) 
} 
\end{algorithm}

The key idea of Algorithm~\ref{alg: optimistic qlearning} is to solve the undiscounted problem via learning a discounted MDP (with the same states, actions, reward function, and transition), for some discount factor $\gamma$ (defined in terms of a parameter $H$).
Define $V^*$ and $Q^*$ to be the optimal value-function and Q-function of the discounted MDP, satisfying the Bellman equation:
\begin{align*}
\forall (s, a), \quad Q^*(s, a) &= r(s, a) + \gamma\mathbb{E}_{s' \sim p(\cdot | s, a)}[V^*(s')] \\
\forall s, \quad\quad V^*(s) &= \max_{a\in\mathcal{A}} Q^*(s,a).
\end{align*}

The way we learn this discounted MDP is essentially the same as the algorithm of~\citet{dong2019q}, which itself is based on the idea from~\cite{jin2018q}.
Specifically, the algorithm maintains an estimate $\hat{V}_t$ for the optimal value function $V^*$ and $\hat{Q}_t$ for the optimal Q-function $Q^*$, which itself is a clipped version of another estimate $Q_t$.
Each time the algorithm takes a greedy action with the maximum estimated Q value (Line~\ref{line:greedy}).
After seeing the next state, the algorithm makes a stochastic update of $Q_t$ based on the Bellman equation, importantly with an extra {\it bonus} term $b_\tau$ and a carefully chosen step size $\alpha_\tau$ (Eq.\eqref{eqn: q update}).
Here, $\tau$ is the number of times the current state-action pair has been visited, and the bonus term $b_\tau$ scales as $\order(\sqrt{H/\tau})$, which encourages exploration since it shrinks every time a state-action pair is executed.
The choice of the step size $\alpha_\tau$ is also crucial as pointed out in~\cite{jin2018q} and determines a certain effective period of the history for the current update.

While the algorithmic idea is similar to~\cite{dong2019q},
we emphasize that our analysis is different and novel:
\begin{itemize}
\item
First, \citet{dong2019q} analyze the sample complexity of their algorithm while we analyze the regret.

\item
Second, we need to deal with the approximation effect due to the difference between the discounted MDP and the original undiscounted one (Lemma~\ref{lem: discounted average connection}).

\item
Finally, part of our analysis improves over that of \cite{dong2019q} (specifically our Lemma~\ref{lem: v-star minus q-star}). 
Following the original analysis of~\citep{dong2019q} would lead to a worse bound here.
\end{itemize}

We now state the main regret guarantee of Algorithm~\ref{alg: optimistic qlearning}.

\begin{theorem}
\label{thm: q-learning}
If the MDP is weakly communicating, Algorithm \ref{alg: optimistic qlearning} with $H = \min\left\{\sqrt{\frac{\spn(v^*)T}{SA}}, \left(\frac{T}{SA\ln\frac{4T}{\delta}}\right)^{\frac{1}{3}}\right\}$ ensures that with probability at least $1 - \delta$, $R_T$ is of order
\begin{align*}
\order\left(\sqrt{\spn(v^*)SAT} + \spn(v^*)\left(T^\frac{2}{3}\left(SA\ln \tfrac{T}{\delta}\right)^\frac{1}{3} + \sqrt{T\ln\tfrac{1}{\delta}}\right)\right).
\end{align*}
\end{theorem}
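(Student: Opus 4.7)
My plan is to reduce the average-reward problem to a $\gamma$-discounted one with $\gamma=1-1/H$ and adapt the optimistic Q-learning analysis of \citet{jin2018q, dong2019q}, with two ingredients crucial to this setting: (i) a bridge lemma quantifying the approximation between discounted and average-reward value functions, and (ii) a sharper bonus calibration driven by $\spn(v^*)$ rather than $H$. First I would establish that, for a weakly communicating MDP, $\spn(V^*)\leq\spn(v^*)$ and $|(1-\gamma)V^*(s)-J^*|\leq(1-\gamma)\spn(v^*)$ for every state $s$; these follow by a standard perturbation argument starting from the gain-bias Bellman equation~\eqref{eq: bellman equation}. Combined with optimism (established below), this gives
\begin{align*}
R_T \leq \frac{T\,\spn(v^*)}{H} + \sum_{t=1}^T \bigl((1-\gamma)\hat{V}_t(s_t) - r(s_t,a_t)\bigr).
\end{align*}

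Next I would prove optimism by induction on $t$: with high probability, $\hat{Q}_t(s,a)\geq Q^*(s,a)$ and hence $\hat{V}_t\geq V^*$ everywhere. Unrolling the weighted-average form of the update~\eqref{eqn: q update} via the step-size lemma of \citet{jin2018q}, the deviation of the backup from $Q^*(s,a)$ splits into (a) a martingale term of the form $\gamma\sum_i\alpha_\tau^i(V^*(s_{\tau_i+1})-\mathbb{E}_{s'\sim p(\cdot|s,a)}V^*(s'))$, bounded via Azuma--Hoeffding by $\order(\spn(V^*)\sqrt{(H/\tau)\ln(T/\delta)})$, and (b) a nonnegative inductive slack $\gamma\sum_i\alpha_\tau^i(\hat{V}_{\tau_i+1}(s_{\tau_i+1})-V^*(s_{\tau_i+1}))$. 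Since $\spn(V^*)\leq\spn(v^*)$, the chosen bonus $b_\tau=4\spn(v^*)\sqrt{(H/\tau)\ln(2T/\delta)}$ dominates (a) after a union bound over state-action pairs and visit counts. The crux, and the main technical hurdle, is precisely this replacement of $H$ by $\spn(v^*)$ in (a): it is what buys the $\sqrt{\spn(v^*)}$ rather than $\sqrt{H}$ scaling in the final bound and is the content of the promised improvement over \citet{dong2019q}.

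Third, to control $\sum_t ((1-\gamma)\hat{V}_t(s_t)-r(s_t,a_t))$, I use greedy action selection to write $\hat{V}_t(s_t)=\hat{Q}_t(s_t,a_t)\leq Q_t(s_t,a_t)$ and expand $Q_t(s_t,a_t)$ via the same weighted-average identity. This bounds $\hat{V}_t(s_t)-r(s_t,a_t)-\gamma\hat{V}_{\tilde{t}}(s_{\tilde{t}+1})$ by $\order(\spn(v^*)\sqrt{(H/n_t)\ln(T/\delta)})$ up to martingale noise, where $\tilde{t}$ denotes the most recent visit to $(s_t,a_t)$ before $t$. Rewriting the left side as $(1-\gamma)\hat{V}_t(s_t)+\gamma(\hat{V}_t(s_t)-\hat{V}_{\tilde{t}}(s_{\tilde{t}+1}))$, telescoping the second group along the trajectory using the uniform bound $\hat{V}_t\leq H$, and applying Cauchy--Schwarz on $\sum_{s,a}\sum_{\tau=1}^{n_T(s,a)}b_\tau$ produces a bound of order $\spn(v^*)\sqrt{HSAT\ln(T/\delta)}+H\spn(v^*)SA$ on the whole sum. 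Adding the $T\spn(v^*)/H$ bias term and the Azuma fluctuation $\order(\spn(v^*)\sqrt{T\ln(1/\delta)})$, I tune $H$ as prescribed to balance the regimes: the $\sqrt{\spn(v^*)SAT}$ contribution arises from $H=\sqrt{\spn(v^*)T/(SA)}$, while $\spn(v^*)(SA)^{1/3}T^{2/3}$ comes from the other branch, reproducing the claimed bound.
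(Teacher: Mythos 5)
Your first two steps track the paper's route: your ``bridge lemma'' is the paper's Lemma~\ref{lem: discounted average connection} and your optimism-by-induction with a $\spn(v^*)$-calibrated bonus is its Lemma~\ref{lem: optimistic Q} (one quibble: the standard perturbation argument you cite gives $\spn(V^*)\le 2\spn(v^*)$, not $\spn(V^*)\le\spn(v^*)$; this only affects constants since $b_\tau=4\spn(v^*)\sqrt{(H/\tau)\ln(2T/\delta)}$ still dominates the Azuma term). The genuine gap is in your third step, which replaces the paper's central cancellation argument (Lemma~\ref{lem: v-star minus q-star}). Unrolling the update gives $Q_t(s_t,a_t)=H\alpha^0_\tau+\sum_{i=1}^{\tau}\alpha^i_\tau\bigl[r(s_t,a_t)+\gamma\hat V_{t_i}(s_{t_i+1})+b_i\bigr]$ with $\tau=n_t(s_t,a_t)$ and $t_1<\dots<t_\tau=\tilde t$ ranging over \emph{all} previous visits, so your claimed per-step bound on $\hat V_t(s_t)-r(s_t,a_t)-\gamma\hat V_{\tilde t}(s_{\tilde t+1})$ requires controlling $\sum_i\alpha^i_\tau\hat V_{t_i}(s_{t_i+1})-\hat V_{\tilde t}(s_{\tilde t+1})$, the gap between a weighted average over all past visits and the single most recent sample. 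This is not $\order(b_{n_t})$ plus benign martingale noise: the fluctuations of $\hat V_{t_i}(s_{t_i+1})$ around their conditional means scale with $\spn(\hat V_{t_i})$, which is only bounded by $H$ (values range from roughly $V^*$ up to the initial value $H$), not by $\spn(v^*)$, and an $H\sqrt{T\ln(1/\delta)}$ contribution is fatal under your choice of $H$ (for $H=\sqrt{\spn(v^*)T/(SA)}$ it is linear in $T$). Moreover, after re-indexing by the previous-visit map, the putative martingale sums have future-dependent index sets (whether a time step is revisited later), so Azuma does not apply directly; and the telescoping you invoke only handles the separate term $\gamma\sum_t(\hat V_t(s_t)-\hat V_{\tilde t}(s_{\tilde t+1}))$, not this one.

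What the paper does instead, and what your outline is missing, is to never bound a per-step quantity anchored at $\hat V$ alone. It writes $\sum_t(\hat V_t(s_t)-V^*(s_t))+\sum_t(V^*(s_t)-Q^*(s_t,a_t))=\sum_t(\hat Q_t(s_t,a_t)-Q^*(s_t,a_t))$, expands the right-hand side via the optimism lemma so that the only concentration needed involves $V^*$ (span at most $2\spn(v^*)$, absorbed by the bonus), and treats the recursive slack $\gamma\sum_t\sum_i\alpha^i_{n_t}\bigl[\hat V_{t_i}(s_{t_i+1})-V^*(s_{t_i+1})\bigr]$ purely combinatorially: the summands are nonnegative by optimism, each past index carries total weight $\sum_{j\ge i}\alpha^i_j\le 1+1/H$, and $(1+1/H)\gamma\le1$ together with monotonicity of $\hat V$ turns the slack into $\sum_{t=2}^{T+1}(\hat V_t(s_t)-V^*(s_t))+\order(SH)$, which cancels against the matching sum on the left-hand side. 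Your top-level decomposition $R_T\le T\spn(v^*)/H+\sum_t\bigl((1-\gamma)\hat V_t(s_t)-r(s_t,a_t)\bigr)$ discards $V^*$ too early, so there is no matching term left to cancel against and no mechanism keeping the stochastic error at scale $\spn(v^*)$. To repair the proof, keep the paper's three-term split into $J^*-(1-\gamma)V^*(s_t)$, $V^*(s_t)-Q^*(s_t,a_t)$, and $Q^*(s_t,a_t)-\gamma V^*(s_t)-r(s_t,a_t)$ (the last handled by a single Azuma application at scale $\spn(v^*)$), and prove the cancellation lemma for the middle term.
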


Our regret bound scales as $\otil(T^{2/3})$ and is suboptimal compared to model-based approaches with $\otil(\sqrt{T})$ regret (such as UCRL2) that matches the information-theoretic lower bound~\citep{jaksch2010near}.
However, this is the first model-free algorithm with sub-linear regret (under only the weakly communicating condition), and how to achieve $\otil(\sqrt{T})$ regret via model-free algorithms remains unknown.
Also note that our bound depends on $\spn(v^*)$ instead of the potentially much larger diameter of the MDP.
 To our knowledge, existing approaches that achieve $\spn(v^*)$ dependence are all model-based~\citep{bartlett2009regal, ouyang2017learning, fruit2018efficient} and use very different arguments. 

\subsection{Proof sketch of Theorem \ref{thm: q-learning}}
The proof starts by decomposing the regret as
\begin{align*}
R_T &= \sum_{t=1}^T \left(J^* - r(s_t, a_t)\right) \\
&\begin{aligned}
= \sum_{t=1}^T &\left(J^* - (1 - \gamma)V^*(s_t) \right) \\
&+ \sum_{t=1}^T \left(V^*(s_t) - Q^*(s_t, a_t)\right) \\
&+ \sum_{t=1}^T \left(Q^*(s_t, a_t) - \gamma V^*(s_t) - r(s_t, a_t)\right).
\end{aligned} 
\end{align*}
Each of these three terms are handled through Lemmas \ref{lem: discounted average connection}, \ref{lem: v-star minus q-star} and \ref{lem: discounted bellman} whose proofs are deferred to the appendix. 
Plugging in $\gamma = 1 - \frac{1}{H}$ and picking the optimal $H$ finish the proof.
One can see that the $\otil(T^{2/3})$ regret comes from the bound $\frac{T}{H}$ from the first term and the bound $\sqrt{HT}$ from the second.


\begin{lemma}
\label{lem: discounted average connection}
The optimal value function $V^*$ of the discounted MDP satisfies
\begin{enumerate}
    \item $|J^* - (1-\gamma) V^*(s)| \leq (1-\gamma)\spn(v^*)$, $\forall s\in\calS$,
    \item $\spn(V^*) \leq 2\spn(v^*)$.
\end{enumerate}
\end{lemma}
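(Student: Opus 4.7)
The plan is to compare the discounted optimal value function $V^*$ with the shifted undiscounted bias $u(s) := v^*(s) + J^*/(1-\gamma)$, and show $-\spn(v^*) \le V^*(s) - u(s) \le 0$ for every state $s$; both parts of the lemma will fall out of this sandwich. Since $v^*$ is only defined up to an additive constant, I first normalize it so that $\min_s v^*(s) = 0$, which gives $\max_s v^*(s) = \spn(v^*)$ and places $u$ in the interval $[J^*/(1-\gamma),\, J^*/(1-\gamma) + \spn(v^*)]$. A direct substitution in the average-reward Bellman equation~\eqref{eq: bellman equation} shows $u$ satisfies $u(s) = \max_a\{r(s,a) - J^* + \mathbb{E}_{s' \sim p(\cdot|s,a)}[u(s')]\}$, which is the structural identity I exploit throughout.

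For the upper bound $V^*(s) \le u(s)$, I pick the discounted-greedy action $a^* \in \argmax_a Q^*(s,a)$, use it exactly in the Bellman equation for $V^*$ and suboptimally in the one for $u$, and subtract, obtaining
\[
V^*(s) - u(s) \le J^* + \gamma\, \mathbb{E}_{s'}[V^*(s') - u(s')] + (\gamma - 1)\,\mathbb{E}_{s'}[u(s')].
\]
The bound $(\gamma-1)\mathbb{E}_{s'}[u(s')] \le (\gamma-1)\min_s u(s) = -J^*$ (valid because $\gamma - 1 \le 0$) collapses this to $V^*(s) - u(s) \le \gamma \max_{s'}(V^*(s') - u(s'))$, and taking the max over $s$ forces $\max_s(V^*-u) \le 0$. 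The lower bound is symmetric: pick $\tilde a$ attaining the max in the Bellman equation for $u$ at state $s$, use it suboptimally in the Bellman equation for $V^*$, and bound $(\gamma-1)\mathbb{E}_{s'}[u(s')] \ge (\gamma-1)\max_s u(s) = -J^* - (1-\gamma)\spn(v^*)$. This gives $V^*(s) - u(s) \ge \gamma \min_{s'}(V^*(s') - u(s')) - (1-\gamma)\spn(v^*)$, and taking the min over $s$ yields $\min_s(V^*-u) \ge -\spn(v^*)$.

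To finish, Part~1 follows because $V^*(s) - u(s) \in [-\spn(v^*), 0]$ rearranges to $v^*(s) - \spn(v^*) \le V^*(s) - J^*/(1-\gamma) \le v^*(s)$; combined with $v^*(s) \in [0, \spn(v^*)]$ and multiplied through by $(1-\gamma)$, this is exactly $|J^* - (1-\gamma)V^*(s)| \le (1-\gamma)\spn(v^*)$. Part~2 follows from $\max_s V^* \le \max_s u$ together with $\min_s V^* \ge \min_s u - \spn(v^*)$, giving $\spn(V^*) \le \spn(u) + \spn(v^*) = 2\spn(v^*)$. I expect the only real subtlety to be the sign bookkeeping around $(\gamma-1)\mathbb{E}_{s'}[u(s')]$: because $\gamma - 1 \le 0$, one must use $\min u$ in the upper-bound step and $\max u$ in the lower-bound step, and the normalization $\min v^* = 0$ is precisely what makes the cancellations with the $J^*$ terms line up cleanly so that the factors of $\gamma$ can be peeled off by a max/min argument rather than iterated indefinitely.
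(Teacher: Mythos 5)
Your proof is correct, and it takes a genuinely different route from the paper. The paper compares the two optimality criteria through policies and trajectories: it evaluates the undiscounted-optimal policy $\pi^*$ in the discounted problem and the discounted-optimal policy $\pi_\gamma$ in the undiscounted one, rewrites the rewards along the trajectory via the average-reward Bellman equation, and bounds the resulting Abel-summed series $\sum_{t\ge 2}(\gamma^{t-2}-\gamma^{t-1})v^*(s_t)$ by $\min_s v^*$ and $\max_s v^*$; Part 2 then follows from Part 1 by the triangle inequality. You instead work purely at the level of the two Bellman optimality equations: defining the shifted bias $u = v^* + J^*/(1-\gamma)$ (with the harmless normalization $\min_s v^* = 0$), choosing in each direction the action that is optimal for one equation and feasible for the other, and peeling off the factor $\gamma$ by a max/min fixed-point argument to get the pointwise sandwich $u(s) - \spn(v^*) \le V^*(s) \le u(s)$, from which both parts drop out. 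Your argument is more elementary and self-contained — it needs no infinite-horizon series expansion, no interchange of expectation and summation, and no explicit reference to $\pi^*$ or $\pi_\gamma$, only finiteness of $\calS$, $\calA$ and $\gamma<1$ so that the maxima are attained and $(1-\gamma)M\le 0$ forces $M\le 0$. The paper's trajectory argument, in exchange, makes the cross-evaluation of the two optimal policies explicit, which is the more standard intuition for why the discounted and average-reward values differ by at most $\spn(v^*)/(1-\gamma)$; both routes yield exactly the same constants, and your sign bookkeeping around $(\gamma-1)\E_{s'}[u(s')]$ (using $\min_s u$ for the upper bound and $\max_s u$ for the lower bound) is handled correctly.
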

This lemma shows that the difference between the optimal value in the discounted setting (scaled by $1-\gamma$) and that of the undiscounted setting is small as long as $\gamma$ is close to $1$.
The proof is by combining the Bellman equation of the these two settings and direct calculations.

\begin{lemma}
\label{lem: v-star minus q-star}
With probability at least $1 - \delta$, we have
\begin{align*}
    &\sum_{t=1}^T \left(V^*(s_t)- Q^*(s_t,a_t)\right) \\
    &\leq 4HSA + 24\spn(v^*)\sqrt{HSAT \ln \tfrac{2T}{\delta}}.
\end{align*}
\end{lemma}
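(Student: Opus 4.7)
The plan is to establish optimism $\hat{Q}_t(s,a) \geq Q^*(s,a)$ uniformly with high probability, reduce the lemma's sum to a sum of per-step estimation gaps, and bound the result by unrolling the Q-learning recursion.

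\textbf{Optimism by induction.} I would show by induction on $t$ that with probability $\geq 1-\delta$, $\hat{Q}_t \geq Q^*$ (and hence $\hat{V}_t \geq V^*$) everywhere. The one-step update yields
\[
Q_{t+1}(s_t,a_t) - Q^*(s_t,a_t) = (1-\alpha_\tau)(Q_t - Q^*)(s_t,a_t) + \alpha_\tau\bigl[\gamma(\hat{V}_t(s_{t+1}) - V^*(s_{t+1})) + \gamma \xi_t + b_\tau\bigr],
\]
where $\xi_t := V^*(s_{t+1}) - \E_{s' \sim p(\cdot\mid s_t,a_t)}[V^*(s')]$ satisfies $|\xi_t| \leq \spn(V^*) \leq 2\spn(v^*)$ by Lemma~\ref{lem: discounted average connection}. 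Unrolling over visits and applying Azuma--Hoeffding to the weighted martingale $\sum_i \alpha_n^i \xi_i$, together with the step-size bound $\sum_i (\alpha_n^i)^2 = O(H/n)$, shows that $\sum_i \alpha_n^i b_i$ dominates the deviation by the choice of $b_\tau$; the $\gamma(\hat{V}_t - V^*)$ term is nonnegative by the inductive hypothesis, so $Q_t \geq Q^*$ and then $\hat{Q}_t = \min(\cdot, Q_t) \geq Q^*$.

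\textbf{From the target sum to a recursive inequality.} Optimism gives $V^*(s_t) - Q^*(s_t,a_t) \leq \hat{V}_t(s_t) - Q^*(s_t,a_t) = \hat{Q}_t(s_t,a_t) - Q^*(s_t,a_t) =: f_t$, using the greedy choice of $a_t$. Since $\hat{Q}_t \leq Q_t$ and $\alpha_n^0 = 0$ for $n \geq 1$, for each visit with $k_t \geq 2$ the recursion unrolls to
\[
f_t \leq \sum_{i=1}^{k_t - 1} \alpha_{k_t - 1}^i \bigl[\gamma\, u_{t_i}(s_{t_i+1}) + \gamma \xi_i + b_i\bigr],
\]
where $u_t(s) := \hat{V}_t(s) - V^*(s) \geq 0$ and $t_i$ is the $i$-th prior visit of $(s_t,a_t)$. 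First-visit contributions sum to at most $HSA$. Switching the summation order and using the key identity $\sum_{\tau \geq i} \alpha_\tau^i \leq 1 + 1/H$ gives
\[
\sum_t f_t \leq HSA + (1 + 1/H)\Bigl[\gamma \sum_t u_t(s_{t+1}) + \gamma \sum_t \xi_t + \sum_t b_{k_t}\Bigr].
\]

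\textbf{Bounding the easy pieces.} The martingale $\sum_t \xi_t$ is controlled by Azuma with $|\xi_t| \leq 2\spn(v^*)$, giving $\otil(\spn(v^*)\sqrt{T})$. The bonus sum factors as $4\spn(v^*)\sqrt{H\ln(2T/\delta)}\cdot \sum_{(s,a)}\sum_{k=1}^{K(s,a)} k^{-1/2}$; using $\sum_k k^{-1/2} \leq 2\sqrt{K}$ and Cauchy--Schwarz in $\sum_{(s,a)}\sqrt{K(s,a)}$ with $\sum_{(s,a)} K(s,a) = T$ bounds this by $8\spn(v^*)\sqrt{HSAT\ln(2T/\delta)}$, already matching the leading term of the claim.

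\textbf{The main obstacle: the value-gap sum.} Controlling $\sum_t u_t(s_{t+1})$ is the crux. Because the update at time $t$ changes $\hat{V}$ only at $s_t$, one has $u_t(s_{t+1}) = u_{t+1}(s_{t+1})$ whenever $s_{t+1} \neq s_t$; in the remaining case the one-step slack telescopes via monotonicity of $\hat{V}$ into the total decrease $\sum_s (\hat{V}_1(s) - \hat{V}_{T+1}(s)) \leq SH$. Combined with $u_{t+1}(s_{t+1}) \leq f_{t+1}$, which follows from greedy action selection plus optimism, this expresses $\sum_t u_t(s_{t+1})$ in terms of $\sum_t f_t$ plus a lower-order $SH$ correction. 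The delicate step --- the place where, according to the paper, the analysis improves over \citet{dong2019q} --- is closing this self-bounding recursion so that the nominal contraction factor $(1 - \gamma(1+1/H))^{-1} = H^2$ does not enter the final bound, leaving $H$ only inside the boundary $HSA$ and the bonus $\spn(v^*)\sqrt{HSAT}$ as claimed.
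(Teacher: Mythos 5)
Your setup (optimism by induction, unrolling the update, swapping the order of summation with $\sum_{\tau\ge i}\alpha^i_\tau\le 1+\tfrac1H$, and the pigeonhole bound $\sum_t n_t^{-1/2}\le 2\sqrt{SAT}$) follows the paper's route, but the proof is not closed at exactly the point you flag, and the way you set it up makes it hard to close. By first discarding the gap $\hat V_t(s_t)-V^*(s_t)$ through $V^*(s_t)-Q^*(s_t,a_t)\le f_t:=\hat Q_t(s_t,a_t)-Q^*(s_t,a_t)$, you are forced into the self-bounding recursion $F\le HSA+(1+\tfrac1H)\gamma\,\bigl(F+\order(SH)\bigr)+(\text{bonus/martingale terms})$ with $F=\sum_t f_t$; since $(1+\tfrac1H)\gamma=1-\tfrac1{H^2}$, solving it multiplies everything by $H^2$ and yields roughly $H^3SA+H^2\spn(v^*)\sqrt{HSAT\ln(2T/\delta)}$, far worse than the claimed bound. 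You explicitly leave open how to avoid this factor, but that avoidance is the entire content of the lemma (and of the paper's stated improvement over \citet{dong2019q}), so as written the proposal has a genuine gap rather than a complete proof.

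The paper's resolution is to \emph{not} discard the gap but to carry it on the left: write $\sum_t(\hat V_t(s_t)-V^*(s_t)) + \sum_t(V^*(s_t)-Q^*(s_t,a_t)) = \sum_t(\hat Q_t(s_t,a_t)-Q^*(s_t,a_t))$, bound the right-hand side via the optimism lemma and the summation swap exactly as you do, and then use $(1+\tfrac1H)\gamma\le 1$ together with monotonicity of $\hat V$ to turn the resulting term $(1+\tfrac1H)\gamma\sum_t\bigl(\hat V_t(s_{t+1})-V^*(s_{t+1})\bigr)$ into $\sum_{t=2}^{T+1}\bigl(\hat V_t(s_t)-V^*(s_t)\bigr)+\order(SH)$. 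This sum cancels the first sum on the left-hand side up to boundary terms of size $\order(H+SH)$, so the geometric factor $(1-(1+\tfrac1H)\gamma)^{-1}=H^2$ never appears, and one obtains $4HSA+24\spn(v^*)\sqrt{HSAT\ln\tfrac{2T}{\delta}}$ directly. A secondary, fixable issue: after pulling the weights $\sum_{j\ge i}\alpha^i_j$ out as $(1+\tfrac1H)$ you apply Azuma to $\sum_t\xi_t$, but those weights depend on the future visit counts and the $\xi$'s are signed, so they cannot simply be replaced by their upper bound; the paper instead applies Azuma to the fixed-weight sums $\sum_{i\le\tau}\alpha^i_\tau\xi_i$ inside Lemma~\ref{lem: optimistic Q} for each $(s,a,\tau)$ with a union bound over $T$ events, absorbing the martingale deviation into the $12\spn(v^*)\sqrt{(H/\tau)\ln\tfrac{2T}{\delta}}$ term before the order of summation is swapped.
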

This lemma is one of our key technical contributions.
To prove this lemma one can write
\begin{align*}
    &\sum_{t=1}^T \left(V^*(s_t)- Q^*(s_t,a_t)\right) \\
    &= \sum_{t=1}^T (V^*(s_t) - \hat{V}_t(s_t))+ \sum_{t=1}^T (\hat{Q}_t(s_t, a_t) - Q^*(s_t, a_t)),
\end{align*}
using the fact that $\hat{V}_t(s_t) = \hat{Q}_t(s_t, a_t)$ by the greedy policy.
The main part of the proof is to show that the second summation can in fact be bounded as $\sum_{t=2}^{T+1} (\hat{V}_{t}(s_{t}) - V^*(s_{t}))$ plus a small sub-linear term, which cancels with the first summation.

\begin{lemma}
\label{lem: discounted bellman}
With probability at least $1 - \delta$,
\begin{align*}
&\sum_{t=1}^T \left(Q^*(s_t, a_t) - \gamma V^*(s_t) - r(s_t, a_t)\right) \\
& \leq 2 \spn(v^*) \sqrt{2T \ln \tfrac{1}{\delta}} + 2 \spn(v^*).
\end{align*}
\end{lemma}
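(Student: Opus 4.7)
The plan is to use the Bellman equation for the discounted MDP to rewrite each summand as a one-step prediction error, and then split the resulting sum into a telescoping part and a martingale difference sequence. Concretely, the discounted Bellman equation gives $Q^*(s_t,a_t) - r(s_t,a_t) = \gamma\,\mathbb{E}_{s' \sim p(\cdot|s_t,a_t)}[V^*(s')]$, so the quantity of interest equals $\gamma \sum_{t=1}^T \bigl(\mathbb{E}_{s'\sim p(\cdot|s_t,a_t)}[V^*(s')] - V^*(s_t)\bigr)$.

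Next, let $\mathcal{F}_t$ denote the history through time $t$ (including $s_t, a_t$) and define $X_t := V^*(s_{t+1}) - \mathbb{E}[V^*(s_{t+1})\mid \mathcal{F}_t]$; since $s_{t+1} \sim p(\cdot|s_t,a_t)$, the sequence $\{X_t\}$ is a martingale difference sequence with respect to $\{\mathcal{F}_t\}$. Substituting $\mathbb{E}_{s'\sim p(\cdot|s_t,a_t)}[V^*(s')] = V^*(s_{t+1}) - X_t$ and telescoping, the sum becomes
\[
\gamma\sum_{t=1}^T \bigl(V^*(s_{t+1}) - V^*(s_t)\bigr) - \gamma \sum_{t=1}^T X_t = \gamma\bigl(V^*(s_{T+1}) - V^*(s_1)\bigr) - \gamma\sum_{t=1}^T X_t.
\]
The boundary term is at most $\spn(V^*) \leq 2\spn(v^*)$ by Lemma~\ref{lem: discounted average connection}(2), which accounts for the additive $2\spn(v^*)$ in the stated bound.

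Finally, for the martingale term, $|X_t|$ is bounded by $\spn(V^*) \leq 2\spn(v^*)$ since both $V^*(s_{t+1})$ and its conditional expectation lie in the range $[\min_s V^*(s), \max_s V^*(s)]$. Azuma--Hoeffding then yields $-\sum_{t=1}^T X_t \leq 2\spn(v^*)\sqrt{2T\ln(1/\delta)}$ with probability at least $1-\delta$. Combining the two pieces and using $\gamma \leq 1$ produces the claim.

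The argument is almost entirely mechanical once Lemma~\ref{lem: discounted average connection} is in hand; the only delicate point is the bound on $|X_t|$. The naive bound $\|V^*\|_\infty \leq 1/(1-\gamma) = H$ would yield a weaker $\mathcal{O}(H\sqrt{T})$ martingale term, so the span control from part (2) of Lemma~\ref{lem: discounted average connection} is essential for obtaining the advertised $\spn(v^*)$ dependence (and is what prevents this factor from dominating the final $T^{2/3}$ regret of Theorem~\ref{thm: q-learning}).
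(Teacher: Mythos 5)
Your proof is correct and follows essentially the same route as the paper: rewrite each summand via the discounted Bellman equation, add and subtract $V^*(s_{t+1})$ to split into a telescoping boundary term and a martingale difference sequence, then apply Azuma's inequality together with the bound $\spn(V^*)\leq 2\spn(v^*)$ from Lemma~\ref{lem: discounted average connection}. Your closing remark about why the span bound (rather than $\|V^*\|_\infty \leq H$) is essential is also accurate.
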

This lemma is proven via Bellman equation for the discounted setting and Azuma's inequality.

\section{$\tilde{\mathcal{O}}(\sqrt{T})$ Regret for Ergodic MDPs}
\label{section: policy optimization}
In this section, we propose another model-free algorithm that achieves $\tilde{\mathcal{O}}(\sqrt{T})$ regret bound for \textit{ergodic} MDPs, a sub-class of weakly communicating MDPs. An MDP is ergodic if for any stationary policy $\pi$, the induced Markov chain is \textit{irreducible} and \textit{aperiodic}. Learning ergodic MDPs is arguably easier than the general case because the MDP is explorative by itself.  However, achieving $\tilde{\mathcal{O}}(\sqrt{T})$ regret bound in this case with model-free methods is still highly non-trivial and we are not aware of any such result in the literature. 
Below, we first introduce a few useful properties of ergodic MDPs,
all of which can be found in~\citep{puterman2014markov}.

We use \textit{randomized policies} in this approach. A randomized policy $\pi$ maps every state $s$ to a distribution over actions $\pi(\cdot|s)\in\Delta_A$, where $\Delta_A=\{x\in \mathbb{R}_+^A: \sum_{a} x(a)=1\}$. In an ergodic MDP, any policy $\pi$ induces a Markov chain with a unique stationary distribution $\mu^\pi\in\Delta_S$ satisfying $(\mu^\pi)^\top P^\pi = (\mu^\pi)^\top$, where $P^{\pi}\in \mathbb{R}^{S\times S}$ is the induced transition matrix defined as $P^\pi(s, s') = \sum_a \pi(a|s)p(s'|s,a)$. 
We denote the stationary distribution of the optimal policy $\pi^*$ by $\mu^*$.

For ergodic MDPs, the long-term average reward $J^\pi$ of any fixed policy $\pi$ is independent of the starting state and can be written as $J^\pi=(\mu^\pi)^\top r^\pi$ where $r^\pi \in [0,1]^S$ is such that $r^\pi(s):=\sum_a \pi(a|s)r(s,a)$. For any policy $\pi$, the following Bellman equation has a solution $q^\pi: \calS\times \calA\rightarrow \mathbb{R}$ that is unique up to an additive constant: 
\begin{align*}
     J^\pi + q^\pi(s,a) = r(s,a) + \E_{s'\sim p(\cdot|s,a)}[v^\pi(s')],  
\end{align*}
where $v^\pi(s)=\sum_a \pi(a|s)q^\pi(s,a)$. In this section, we impose an extra constraint: $\sum_s \mu^{\pi}(s) v^\pi(s)=0$ so that $q^\pi$ is indeed unique. In this case, it can be shown that $v^\pi$ has the following form: 
\begin{align}
     v^\pi(s) = \sum_{t=0}^\infty \left(\mathbf{e}_s^\top (P^\pi)^t - (\mu^\pi)^\top \right) r^\pi \label{eqn:v_pi_identity}
\end{align}
where $\mathbf{e}_s$ is the basis vector with $1$ in coordinate $s$. 

Furthermore, ergodic MDPs have finite \textit{mixing time} and \textit{hitting time}, defined as follows. 
\begin{definition}[\citep{levin2017markov, wang2017primal}]
\label{def:mixing}
The mixing time of an ergodic MDP is defined as 
\begin{align*}
     \mix:=\max_{\pi}\min\left\{ t\geq 1~\Big\vert ~ \| (P^{\pi})^t(s,\cdot) - \mu^\pi \|_1\leq \frac{1}{4}, \forall s \right\},  
\end{align*}
that is, the maximum time required for any policy starting at any initial state to make the state distribution $\frac{1}{4}$-close (in $\ell_1$ norm) to the stationary distribution. 
\end{definition}
\begin{definition}
\label{def:hitting}
The hitting time of an ergodic MDP is defined as 
\[
\hit := \max_{\pi}\max_s \frac{1}{\mu^\pi(s)},
\]  
that is, the maximum inverse stationary probability of visiting any state under any policy.
\end{definition}
Our regret bound also depends on the following {\it distribution mismatch coefficient}:
\begin{align}
        \ratio &:= \max_{\pi} \sum_s \frac{\mu^*(s)}{\mu^\pi(s)} \label{eqn:ratio}
\end{align} 
which has been used in previous work~\citep{kakade2002approximately, agarwal2019optimality}. 
Clearly, one has $\ratio \leq \hit\sum_s\mu^*(s) = \hit$.
Note that these quantities are all parameters of the MDP only and are considered as finite constants compared to the horizon $T$.
We thus assume that $T$ is large enough so that $\mix$ and $\hit$ are both smaller than $T/4$. 
Also, we assume that these quantities are known to the algorithm.

\subsection{Policy Optimization via Optimistic OMD}
\label{sec:PO}

The key to get $\otil(\sqrt{T})$ bound is to learn the optimal policy $\pi^*$ directly, by reducing the problem to solving an adversarial multi-armed bandit (MAB)~\citep{auer2002nonstochastic} instance at each individual state.


\setcounter{AlgoLine}{0}
\begin{algorithm}[t]
\caption{\pg}
\label{alg: PG-online-2}
\textbf{Define:} episode length $B=16\mix\hit(\log_2 T)^2$ and number of episodes $K=T/B$ \\
\textbf{Initialize:} $\pi_1'(a|s)=\pi_1(a|s)=\frac{1}{A}, \forall s, a$. \\
\For{$k=1, 2, \ldots, K$}{
    \For{$t=(k-1)B+1, \ldots, kB$}{
        Draw $a_t\sim \pi_k(\cdot|s_t)$ and observe $s_{t+1}$. 
    }
    Define trajectory $\calT_k = $ \\ \hspace*{20pt} $(s_{(k-1)B+1}, a_{(k-1)B+1}, \ldots, s_{kB}, a_{kB})$. \\
    \For{all $s\in\calS$}{
          $\widehat{\beta}_k(s,\cdot) = \estimate(\calT_k, \pi_k, s)$. \\
          $(\pi_{k+1}'(\cdot|s), \pi_{k+1}(\cdot|s)) = $\\
          \hspace*{20pt} $ \oomd(\pi_k'(\cdot|s), \widehat{\beta}_k(s,\cdot))$.
    }
}
\end{algorithm}
\setcounter{AlgoLine}{0}
\begin{algorithm}[h!]
\caption{\estimate}
\label{alg: estimate}
\textbf{Input: } $\calT, \pi, s$
\begin{align*}
 \calT&: \text{\ a state-action trajectory from $t_1$ to $t_2$} \\
        &\quad (s_{t_1}, a_{t_1}, \ldots, s_{t_2}, a_{t_2}) \\
\pi&: \text{a policy used to sample the trajectory $\calT$} \\
 s&: \text{\ target state} 
\end{align*}\\
\textbf{Define:} $N =4\mix\log_2 T$ ({\small window length minus $1$}) \\
\textbf{Initialize:} $\tau\leftarrow t_1$, $i\leftarrow 0$\\
\nl \While{$\tau\leq t_2-N$}{
\nl      \If{$s_\tau=s$}{
\nl	        $i\leftarrow i+1$ \\
\nl             Let $R=\sum_{t=\tau}^{\tau+N}r(s_t,a_t)$. \\      
\nl             Let \scalebox{1}{$y_{i}(a) = \frac{R}{\pi(a|s)}\one[a_\tau=a],\forall a$}. \ \  ($y_i\in \mathbb{R}^A$)\label{line:IPS}\\
\nl             $\tau \leftarrow \tau+2N$   \label{line: skip 2N}  
      }
\nl      \Else{
             $\tau \leftarrow \tau+1$
      }
}
\nl \If{$i\neq 0$}{
      \textbf{return\ } $\frac{1}{i}\sum_{j=1}^i y_j$.  
}
\nl \Else{
     \textbf{return\ } $\mathbf{0}$. 
}
\end{algorithm}

\setcounter{AlgoLine}{0}
\begin{algorithm}[h!]
\caption{\oomd}
\label{alg:oomd}
\textbf{Input:} $\pi'\in \Delta_A, \widehat{\beta}\in \mathbb{R}^A$ \\
\textbf{Define:} \\
\quad Regularizer $\psi(x) = \frac{1}{\eta}\sum_{a=1}^A \log \frac{1}{x(a)}, \text{\ for\ }x\in \mathbb{R}_{+}^A$\\
\quad Bregman divergence associated with $\psi$: \[D_\psi(x, x') = \psi(x)-\psi(x') - \langle \nabla \psi(x'), x-x' \rangle \] 
\ \\
\textbf{Update:}
\begin{align}
      \pi_{next}' &= \argmax_{\pi\in\Delta_A}\left\{ \langle  \pi,  \widehat{\beta} \rangle - D_{\psi}(\pi, \pi')  \right\} \label{eqn:OMD} \\
      \pi_{next} &= \argmax_{\pi\in\Delta_A}\left\{ \langle  \pi,   \widehat{\beta} \rangle - D_{\psi}(\pi, \pi_{next}')  \right\} \label{eqn:optimism}
\end{align}\\
\textbf{return\ } $(\pi_{next}', \pi_{next})$. 
\end{algorithm}

The details of our algorithm \pg is shown in Algorithm~\ref{alg: PG-online-2}.
It proceeds in episodes, and maintains an independent copy of a specific MAB algorithm for each  state. At the beginning of episode $k$, each MAB algorithm outputs an action distribution $\pi_k(\cdot|s)$ for the corresponding state $s$, which together induces a policy $\pi_k$. The learner then executes policy $\pi_k$ throughout episode $k$. At the end of the episode, for every state $s$ we feed a reward estimator $\widehat{\beta}_k(s,\cdot) \in \mathbb{R}^A$ to the corresponding MAB algorithm, where $\widehat{\beta}_k$ is constructed using the samples collected in episode $k$ (see Algorithm~\ref{alg: estimate}). Finally all MAB algorithms update their distributions and output $\pi_{k+1}$ for the next episode (Algorithm~\ref{alg:oomd}). 

The reward estimator $\widehat{\beta}_k(s,\cdot)$ is an almost unbiased estimator for 
\begin{equation}\label{eqn:beta}
\beta^{\pi_k}(s,\cdot):=q^{\pi_k}(s,\cdot) +NJ^{\pi_k}
\end{equation}
with negligible bias ($N$ is defined in Algorithm~\ref{alg: estimate}).  
The term $NJ^{\pi_k}$ is the same for all actions and thus the corresponding MAB algorithm is trying to learn the best action at state $s$ in terms of the average of Q-value functions $q^{\pi_1}(s,\cdot), \ldots, q^{\pi_K}(s,\cdot)$.
To construct the reward estimator for state $s$, the sub-routine \estimate collects non-overlapping intervals of length $N+1=\otil(\mix)$ that start from state $s$, and use the standard inverse-propensity scoring to construct an estimator $y_i$ for interval $i$ (Line~\ref{line:IPS}). In fact, to reduce the correlation among the non-overlapping intervals, we also make sure that these intervals are at least $N$ steps apart from each other (Line~\ref{line: skip 2N}). The final estimator $\widehat{\beta}_k(s,\cdot)$ is simply the average of all estimators $y_i$ over these disjoint intervals. 
This averaging is important for reducing variance as explained later (see also Lemma~\ref{lemma: PG unbiasedness}).

The MAB algorithm we use is \textit{optimistic online mirror descent} (OOMD)~\citep{rakhlin2013online} with \textit{log-barrier} as the regularizer, analyzed in depth in \citep{wei2018more}.
Here, optimism refers to something different from the optimistic exploration in Section~\ref{sec: qlearning}.
It corresponds to the fact that after a standard mirror descent update (Eq.~\eqref{eqn:OMD}), the algorithm further makes a similar update using an optimistic prediction of the next reward vector, which in our case is simply the previous reward estimator (Eq.~\eqref{eqn:optimism}).
We refer the reader to \citep{wei2018more} for more details,
but point out that the optimistic prediction we use here is new.

It is clear that each MAB algorithm faces a non-stochastic problem (since $\pi_k$ is changing over time) and thus it is important to deploy an adversarial MAB algorithm.
The standard algorithm for adversarial MAB is \expthree~\citep{auer2002nonstochastic},
which was also used for solving adversarial MDPs~\citep{neu2013online} (more comparisons with this to follow).
However, there are several important reasons for our choice of the recently developed OOMD with log-barrier:
\begin{itemize}
\item
First, the log-barrier regularizer produces a more exploratory distribution compared to \expthree (as noticed in e.g.~\citep{agarwal2017corralling}), so we do not need an explicit exploration over the actions, which significantly simplifies the analysis compared to \citep{neu2013online}. 

\item
Second, log-barrier regularizer provides more \textit{stable} updates compared to \expthree in the sense that $\pi_k(a|s)$ and $\pi_{k-1}(a|s)$ are within a multiplicative factor of each other (see Lemma~\ref{lemma: stability lemma}). This implies that the corresponding policies and their Q-value functions are also stable, which is critical for our analysis. 

\item
Finally, the optimistic prediction of OOMD, together with our particular reward estimator from \estimate, provides a variance reduction effect that leads to a better regret bound in terms of $\ratio$ instead of $\hit$.
See Lemma~\ref{lemma: PG reduction to bandit} and Lemma~\ref{lemma: PG regret main term}.
\end{itemize}

The regret guarantee of our algorithm is shown below. 

\begin{theorem}
     \label{thm: policy optimization thm}
     For ergodic MDPs, with an appropriate chosen learning rate $\eta$ for Algorithm~\ref{alg:oomd}, MDP-OOMD achieves 
     \begin{align*}
          \E[R_T] = \otil\left(\sqrt{\mix^3\ratio AT}  \right). 
     \end{align*}
\end{theorem}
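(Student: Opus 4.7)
The plan is to bound $\E[R_T]$ by first reducing the regret to a $\mu^*$-weighted sum of per-state adversarial bandit regrets, and then invoking the variance-reduced regret bound of \oomd with log-barrier. Concretely, I split each episode's reward as $\sum_{t\in\text{ep}_k} r(s_t,a_t) = B\,J^{\pi_k} + (\text{mixing remainder})$. Because $B=\otil(\mix \hit)$ is much longer than $\mix$ and the policy $\pi_k$ is fixed within the episode, the mixing-time definition guarantees that the remainder is $\otil(\mix)$ per episode in expectation, with an additional Azuma term of $\otil(\sqrt{T})$ in total. This reduces everything to controlling $B\sum_k (J^* - J^{\pi_k})$.

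For the policy gap, I would apply the standard performance-difference lemma for average-reward MDPs, which gives
\[
J^* - J^{\pi_k} \;=\; \sum_s \mu^*(s)\,\bigl\langle \pi^*(\cdot|s) - \pi_k(\cdot|s),\; q^{\pi_k}(s,\cdot)\bigr\rangle .
\]
Since adding a constant $NJ^{\pi_k}$ across actions does not change the inner product, $q^{\pi_k}$ can be replaced by $\beta^{\pi_k}$ from \eqref{eqn:beta}. After summing over $k$, I get a $\mu^*$-weighted sum of per-state regrets against the loss sequence $\{-\beta^{\pi_k}(s,\cdot)\}_k$, each of which is exactly what the per-state OOMD instance is trying to minimize -- except that it sees the noisy estimator $\widehat{\beta}_k$ rather than $\beta^{\pi_k}$.

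Next I analyze \estimate. Conditioning on the non-overlapping $N$-step windows that start at state $s$, the reward sum $R$ concentrates around $q^{\pi_k}(s,a_\tau) + NJ^{\pi_k} = \beta^{\pi_k}(s,a_\tau)$ up to an $\otil(\mix)$ mixing bias from identity \eqref{eqn:v_pi_identity}, and the IPS weighting makes $y_i$ unbiased in the action coordinate. The $2N$ separation between consecutive windows decouples them up to $1/\text{poly}(T)$ error. With high probability the number of windows satisfies $i_k(s)=\Theta\!\bigl(B\mu^{\pi_k}(s)/N\bigr)$, so the average $\widehat{\beta}_k$ has bias $\otil(\mix/B)$ and its second moment is reduced by a factor $i_k(s)$. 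I would then invoke the log-barrier OOMD analysis of \citet{wei2018more}, which gives, per state $s$,
\[
\sum_{k=1}^K \bigl\langle \pi^*(\cdot|s) - \pi_k(\cdot|s),\; \widehat{\beta}_k(s,\cdot)\bigr\rangle \;\le\; \frac{A\log(KT)}{\eta} \;+\; \eta\sum_k \E_{a\sim \pi_k(\cdot|s)}\!\bigl[(\widehat{\beta}_k(s,a)-\widehat{\beta}_{k-1}(s,a))^2\bigr],
\]
together with the multiplicative stability $\pi_k(a|s)/\pi_{k-1}(a|s)\in[\tfrac12,2]$ needed for the next step.

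The crucial point is that the optimistic prediction makes only $\widehat{\beta}_k-\widehat{\beta}_{k-1}$ appear. By multiplicative stability plus mixing, the means $\beta^{\pi_k}$ and $\beta^{\pi_{k-1}}$ are close, so the dominant contribution is IPS noise; combining with the $1/i_k(s)$ variance reduction from averaging gives $\E_{a\sim\pi_k}[(\widehat\beta_k-\widehat\beta_{k-1})^2(s,a)] = \otil(N^3/(B\mu^{\pi_k}(s)))$. After multiplying by $\mu^*(s)$ and summing over $s$, the ratio $\mu^*(s)/\mu^{\pi_k}(s)$ yields the coefficient $\ratio$ rather than $\hit$. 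The resulting bound is
\[
\E[R_T] \;\le\; \otil\!\left( \frac{BA}{\eta} + \frac{\eta T N^3 \ratio}{B}\right) + \text{lower order},
\]
and optimizing $\eta$ together with $N=\otil(\mix)$ gives $\otil(\sqrt{\mix^3 \ratio A T})$. The main obstacle is the tight coupling of three effects in Step~4: the multiplicative stability of $\pi_k$ (which controls both the OOMD regret and the closeness of $\beta^{\pi_k},\beta^{\pi_{k-1}}$), the approximate independence of the non-overlapping windows inside \estimate, and the random number of windows $i_k(s)$; handling these simultaneously with high probability is exactly what the helper lemmas (stability, unbiasedness, and reduction to bandits) are designed for.
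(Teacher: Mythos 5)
Your handling of the dominant term $B\sum_k(J^*-J^{\pi_k})$ follows the paper's route essentially verbatim (performance-difference lemma, per-state log-barrier OOMD with the $(\widehat{\beta}_k-\widehat{\beta}_{k-1})^2$ local-norm term, variance bound $\otil(N^3/(B\mu^{\pi_k}(s)))$, multiplicative stability, and the $\mu^*(s)/\mu^{\pi_k}(s)$ ratio producing $\ratio$). The genuine gap is in your very first reduction. You claim that the per-episode remainder $\sum_{t\in\calI_k} r(s_t,a_t)-BJ^{\pi_k}$ is $\otil(\mix)$ in expectation and that this suffices. It does not: there are $K=T/B$ episodes, so summing per-episode biases of size $\order(\mix)$ gives a term of order $K\mix=T\mix/B=\order\bigl(T/(\hit\log^2 T)\bigr)$, which is linear in $T$ (the MDP parameters are constants relative to $T$) and cannot be absorbed into $\otil(\sqrt{\mix^3\ratio AT})$. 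The conditional expectation of the episode-$k$ remainder is (up to zero-mean martingale terms) exactly $v^{\pi_k}(s_{(k-1)B+1})-\E[v^{\pi_k}(s_{kB+1})]$, i.e.\ a signed boundary term of magnitude $\order(\mix)$; the paper makes these cancel \emph{across} episodes by rewriting $\sum_k\bigl(v^{\pi_k}(s_{kB+1})-v^{\pi_k}(s_{(k-1)B+1})\bigr)$ as $\sum_{k=1}^{K-1}\bigl(v^{\pi_k}(s_{kB+1})-v^{\pi_{k+1}}(s_{kB+1})\bigr)$ plus two boundary terms, and then invoking the OOMD stability bound $|v^{\pi_k}(s)-v^{\pi_{k+1}}(s)|=\order(\eta N^3)$ of Lemma~\ref{lemma: stability lemma}, which yields $\order(\eta N^3 T/B+\mix)$ in total. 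Your sketch contains no such cross-episode cancellation; an alternative (arguing that the start-of-episode state is nearly $\mu^{\pi_k}$-distributed so that $\E[v^{\pi_k}(s_{(k-1)B+1})]\approx 0$) would also need the stability of $\pi_k$ and careful conditioning, since $\pi_k$ is built from episode $k-1$'s data and is correlated with $s_{(k-1)B+1}$. Without one of these arguments the theorem does not follow from your decomposition.

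Two smaller inaccuracies: with $N=4\mix\log_2 T$, the mixing bias of each window estimate is $\order(1/T^3)$ and the bias of $\widehat{\beta}_k(s,a)$ is $\order(1/T)$ (Lemma~\ref{lemma: PG unbiasedness}), not ``$\otil(\mix)$'' per window or ``$\otil(\mix/B)$'' per episode as you write; a bias as large as $\mix/B$ would, after multiplying by $B$ and summing over $K$ episodes, again contribute a term of order $T\mix/B$ and break the bound (establishing the $\order(1/T)$ bias is also where the correlation with the random number of windows must be handled, e.g.\ via the paper's coupling argument, which you flag but do not resolve). Also, the OOMD local-norm term is $\sum_a\pi_k(a|s)^2\bigl(\widehat{\beta}_k(s,a)-\widehat{\beta}_{k-1}(s,a)\bigr)^2$, with the \emph{squared} probability; writing it as an expectation over $a\sim\pi_k(\cdot|s)$ loses one factor of $\pi_k(a|s)$ and would inflate the variance term by a factor of $A$.
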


Note that in this bound, the dependence on the number of states $S$ is hidden in $\rho$, since $\rho \geq  \sum_s \frac{\mu^*(s)}{\mu^*(s)} = S$.
Compared to the bound of Algorithm~\ref{alg: optimistic qlearning} or some other model-based algorithms such as UCRL2, this bound has an extra dependence on $\mix$, a potentially large constant.
As far as we know, all existing mirror-descent-based algorithms for the average-reward setting has the same issue (such as~\citep{neu2013online, wang2017primal, abbasi2019politex}).
The role of $\mix$ in our analysis is almost the same as that of $1/(1-\gamma)$ in the discounted setting ($\gamma$ is the discount factor). Specifically, a small $\mix$ ensures 1) a short trajectory needed to approximate the Q-function with expected trajectory reward (in view of Eq.~\eqref{eqn: unbiased mean}) and 2) an upper bound for the magnitude of $q(s,a)$ and $v(s)$ (Lemma~\ref{lemma: bounded span of pi}). 
For the discounted setting these are ensured by the discount factor already.

\paragraph{Comparisons.} 
\citet{neu2013online} considered learning ergodic MDPs with {\it known} transition kernel and {\it adversarial} rewards,  a setting incomparable to ours.
Their algorithm maintains a copy of \expthree for each state, but the reward estimators fed to these algorithms are constructed using the knowledge of the transition kernel and are very different from ours.
They proved a regret bound of order $\otil\left(\sqrt{\mix^3\hit AT}\right)$,
which is worse than ours since $\ratio\leq\hit$.

In another recent work, \cite{wang2017primal} considered learning ergodic MDPs under the assumption that the learner is provided with a generative model (an oracle that takes in a state-action pair and output a sample of the next state). They derived a sample-complexity bound of order $\otil\left(\frac{\mix^2\tau^2 SA}{\epsilon^2}\right)$ for finding an $\epsilon$-optimal policy, where $\tau=\max\left\{ \max_s \left(\frac{\mu^*(s)}{1/S}\right)^2, \max_{s', \pi}\left(\frac{1/S}{\mu^{\pi}(s')}\right)^2\right\} $, which is at least $\max_{\pi}\max_{s,s'}\frac{\mu^*(s)}{\mu^{\pi}(s')}$ by AM-GM inequality. 
This result is again incomparable to ours, but we point out that our distribution mismatch coefficient $\ratio$ is always bounded by $\tau S$, while $\tau$ can be much larger than $\ratio$ on the other hand.

Finally, \citet{abbasi2019politex} considers a more general setting with function approximation, and their algorithm \politex maintains a copy of the standard exponential weight algorithm for each state, very similar to~\citep{neu2013online}. 
When specified to our tabular setting, one can verify (according to their Theorem 5.2) that \politex achieves $\mix^3\hit\sqrt{SA}T^\frac{3}{4}$ regret, which is significantly worse than ours in terms of all parameters.

\subsection{Proof sketch of Theorem~\ref{thm: policy optimization thm}}
\label{subsection: pg analysis}
We first decompose the regret as follows: 
\begin{align}
     &R_T = \sum_{t=1}^T J^* - r(s_t,a_t) \nonumber \\
     &= B\sum_{k=1}^K \left(J^* - J^{\pi_k}\right) + \sum_{k=1}^K \sum_{t\in\calI_k} \left( J^{\pi_k} - r(s_t,a_t)\right), \label{eqn: PG regret decompose}
\end{align}
where $\calI_k:= \{(k-1)B+1, \ldots, kB\}$ is the set of time steps for episode $k$. 
Using the \textit{reward difference lemma} (Lemma~\ref{lemma: reward diff lemma} in the appendix), the first term of Eq.~\eqref{eqn: PG regret decompose} can be written as
\begin{align*}
    &B\sum_s \mu^*(s) \left[ \sum_{k=1}^K \sum_a(\pi^*(a|s)-\pi_k(a|s))q^{\pi_k}(s,a)\right], 
\end{align*}
where the term in the square bracket can be recognized as exactly the regret of the MAB algorithm for state $s$ and is analyzed in Lemma~\ref{lemma: PG reduction to bandit} of Section~\ref{sec:lemmas}.
Combining the regret of all MAB algorithms,
Lemma~\ref{lemma: PG regret main term} then shows that in expectation the first term of Eq.~\eqref{eqn: PG regret decompose} is at most
\begin{align}
     \otil\left( \frac{BA}{\eta} + \frac{\eta T N^3 \ratio}{B} + \eta^3 TN^6  \right).  \label{eqn: PG first term}
\end{align}

On the other hand, the expectation of the second term in Eq.\eqref{eqn: PG regret decompose} can be further written as
\begin{align}
    &\E\left[\sum_{k=1}^K\sum_{t\in\calI_k} (J^{\pi_k} - r(s_t,a_t))\right] \nonumber \\
    &=\E\left[\sum_{k=1}^K \sum_{t\in\calI_k} (\E_{s'\sim p(\cdot|s_t,a_t)} [v^{\pi_k}(s')] - q^{\pi_k}(s_t, a_t))\right] \tag{Bellman equation} \\
    &= \E\left[\sum_{k=1}^K\sum_{t\in\calI_k} (\E_{s'\sim p(\cdot|s_t,a_t)} [v^{\pi_k}(s')] - v^{\pi_k}(s_{t+1})) \right] \nonumber \\
    &\qquad \quad + \E\left[\sum_{k=1}^K \sum_{t\in\calI_k} (v^{\pi_k}(s_{t}) - q^{\pi_k}(s_t, a_t))\right] \nonumber \\
    &\qquad \quad + \E\left[\sum_{k=1}^K \sum_{t\in\calI_k} (v^{\pi_k}(s_{t+1}) - v^{\pi_k}(s_t))\right]\nonumber \\
    &=\E\left[\sum_{k=1}^K (v^{\pi_k}(s_{kB+1}) - v^{\pi_k}(s_{(k-1)B+1}))\right] \tag{the first two terms above are zero}\nonumber \\
    &= \E\left[\sum_{k=1}^{K-1} (v^{\pi_k}(s_{kB+1}) - v^{\pi_{k+1}}(s_{kB+1}))\right] \nonumber  \\ 
    &\qquad \quad + \E\Big[v^{\pi_K}(s_{KB+1})-v^{\pi_1}(s_1)\Big]. 
\end{align}
The first term in the last expression can be bounded by $\order(\eta N^3K) = \order(\eta N^3T/B)$ due to the stability of \oomd (Lemma~\ref{lemma: stability lemma}) and the second term is at most $\order(\mix)$ according to Lemma~\ref{lemma: bounded span of pi} in the appendix.  

Combining these facts with $N = \otil(\mix)$, $B=\otil(\mix\hit)$, Eq.~\eqref{eqn: PG regret decompose} and Eq.~\eqref{eqn:  PG first term} and choosing the optimal $\eta$, we arrive at
\begin{align*}
    &\E[R_T]  = \otil\left(\frac{BA}{\eta} + \eta \frac{\mix^3 \ratio T}{B} + \eta^3 \mix^6T\right)\\
    &= \otil\left(\sqrt{\mix^3 \ratio AT} + \left(\mix^3\hit A\right)^{\frac{3}{4}}T^{\frac{1}{4}} + \mix^2\hit A\right). 
\end{align*}

\subsection{Auxiliary Lemmas}\label{sec:lemmas}
To analyze the regret, we establish several useful lemmas, whose proofs can be found in the Appendix. First, we show that $\widehat{\beta}_k(s,a)$ is an almost unbiased estimator for $\beta^{\pi_k}(s,a)$. 
\begin{lemma}
    \label{lemma: PG unbiasedness}
      Let $\E_{k}[x]$ denote the expectation of a random variable $x$ conditioned on all history before episode $k$. Then for any $k, s,a$ (recall $\beta$ defined in Eq.~\eqref{eqn:beta}), 
\begin{align}
    &\Big\lvert \E_k \left[ \widehat{\beta}_k(s,a) \right] - \beta^{\pi_k}(s,a)\Big\rvert \leq \order\left(\frac{1}{T}\right),  \label{eqn: unbiased mean}  \\
    &\E_k \left[\left( \widehat{\beta}_k(s,a) - \beta^{\pi_k}(s,a)\right)^2 \right] \leq \order\left(\frac{N^3 \log T}{B\pi_k(a|s)\mu^{\pi_k}(s)}\right). \label{eqn: unbiased variance}
\end{align}
\end{lemma}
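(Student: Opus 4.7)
I would split the lemma into the bias bound \eqref{eqn: unbiased mean} and the variance bound \eqref{eqn: unbiased variance}, and in each case reduce to the behavior of a single ``accepted'' interval via the strong Markov property, then deal with the random number $i$ of accepted intervals separately.

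\emph{Bias.} Condition on the random start time $\tau_j$ of the $j$-th accepted interval. Since each $\tau_j$ satisfies $s_{\tau_j}=s$ and the policy $\pi_k$ is frozen throughout episode $k$, the strong Markov property together with the importance-weighting identity gives
\[
\E_k\!\left[y_j(a) \,\big|\, \tau_j,\, i \geq j\right]
= \E_k\!\left[\sum_{n=0}^{N} r(s_{\tau_j+n},a_{\tau_j+n}) \,\Big|\, s_{\tau_j}=s,\ a_{\tau_j}=a\right].
\]
Unrolling the Bellman equation once and plugging in the identity \eqref{eqn:v_pi_identity} expresses this expected reward-sum as $\beta^{\pi_k}(s,a)=q^{\pi_k}(s,a)+NJ^{\pi_k}$ plus a tail of the form $\sum_{t \geq N}\big(\mathbf{p}_1^\top (P^{\pi_k})^t - (\mu^{\pi_k})^\top\big) r^{\pi_k}$, where $\mathbf{p}_1=p(\cdot|s,a)$. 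The mixing-time definition forces this tail to be $\order(\mix \cdot 2^{-N/\mix}) = \order(T^{-3})$ thanks to $N=4\mix\log_2 T$. On the event $\{i=0\}$ the estimator is $\mathbf{0}$, so its contribution to the bias is $\|\beta^{\pi_k}\|_\infty \cdot \Pr_k[i=0]$; since $B/\mix$ is much larger than $1/\mu^{\pi_k}(s)$ (up to log factors) and $s$ mixes geometrically, a Markov-chain Chernoff bound makes $\Pr_k[i=0]$ polynomially small, giving the $\order(1/T)$ bound.

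\emph{Variance.} First, the conditional second moment of a single $y_j$ given $s_{\tau_j}=s$ is at most $(N+1)^2/\pi_k(a|s)$, since $R \leq N+1$ and the indicator selects action $a$ with probability $\pi_k(a|s)$. Next, the $2N$-spacing enforced in Line~\ref{line: skip 2N} ensures that consecutive intervals are separated by at least $N$ ``gap'' steps, so a standard coupling argument using the mixing-time definition shows that the joint law of $(y_1,\ldots,y_i)$ is within total variation $\order(i\cdot 2^{-N/\mix}) = \order(T^{-3})$ of a product law; this justifies treating the $y_j$'s as essentially independent when averaging, with negligible cross-terms. Conditional on $i$, the variance of $\widehat{\beta}_k(s,a)$ is therefore $\order\!\left(\frac{N^2}{i\,\pi_k(a|s)}\right)$. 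Finally, by a concentration bound for the occupation time of $s$ in a fast-mixing chain, with probability $1-\order(T^{-c})$ the number of visits to $s$ during the episode is $\Omega(\mu^{\pi_k}(s) B)$, and after the $2N$-skipping rule at least $\Omega(\mu^{\pi_k}(s) B / N)$ of these survive as accepted intervals. Substituting this lower bound on $i$ and taking expectation yields the claimed $\otil\!\left(\frac{N^3}{B\,\pi_k(a|s)\,\mu^{\pi_k}(s)}\right)$ bound, with the $\log T$ factor absorbing the logarithmic component of $N$.

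\emph{Main obstacle.} The delicate step is not the per-interval calculation but the joint control of the accepted intervals: they are neither independent (they share the same policy and the same trajectory) nor deterministic in number, so formally justifying the ``i.i.d.\ average'' intuition requires chaining a coupling argument (using the $N$-step gap) with a concentration bound for the visit count to $s$ (which lower bounds $1/i$ in expectation). The quantitative care needed to ensure that the TV errors from coupling and the failure probability of the concentration inequality do not pollute either the $\order(1/T)$ bias or the $N^3$ factor in the variance is where I expect most of the work to lie.
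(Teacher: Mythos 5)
Your per-interval computations match the paper's: conditioning on the start time of an accepted interval, the Bellman equation together with Eq.~\eqref{eqn:v_pi_identity} gives $\beta^{\pi_k}(s,a)$ up to a mixing tail of order $T^{-3}$ (thanks to $N=4\mix\log_2 T$), the single-interval second moment is $\order(N^2/\pi_k(a|s))$, and the $i=0$ event is negligible. You also correctly identify the crux: the number of accepted intervals is random and correlated with the interval estimators. The gap is that your proposed fix for this crux does not go through as stated. You want to couple the real process to one where the $(y_1,\dots,y_i)$ follow a product law, with total-variation error $\order(T^{-3})$, and separately lower-bound $i$ by $\Omega(\mu^{\pi_k}(s)B/N)$ on a high-probability event. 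But the importance-weighted estimator is unbounded: $\widehat{\beta}_k(s,a)\leq (N+1)/\pi_k(a|s)$ and $\pi_k(a|s)$ admits no polynomial-in-$1/T$ lower bound under the log-barrier updates. Converting a TV bound (or a small failure probability for the visit-count concentration) into a bound on expectations therefore costs a factor of the estimator's sup norm, producing terms like $N/(\pi_k(a|s)T^3)$ for the bias and $N^2/(\pi_k(a|s)^2T^{c})$ for the second moment, which can exceed the targets in \eqref{eqn: unbiased mean} and \eqref{eqn: unbiased variance} when $\pi_k(a|s)$ is tiny. Relatedly, your variance step conditions on the total count $i$ and then treats the $y_j$ as independent with the stated moment bounds; conditioning on $i$ (an event determined by the whole trajectory) can skew the conditional law of each interval, so the per-interval moment bounds are not automatically preserved.

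The paper resolves exactly this issue differently, and the difference is what makes the bounds come out. It introduces an ``imaginary world'' in which, after each length-$N$ rest period, the state distribution is reset exactly to $\mu^{\pi_k}$; there the wait times $\{w_i\}$ (hence $M$) are genuinely independent of the per-interval estimators, so one can condition on $\{w_i\}$ (not on $M$), use that the centered estimators are conditionally zero-mean and independent, and bound $\E'[1/M\mid M>0]=\order(N\log T/(B\mu^{\pi_k}(s)))$ via the waiting-time bound of Lemma~\ref{lemma: visit at least once}. The transfer back to the real world is done not by an additive TV argument but by a uniform likelihood-ratio bound $\max_X \mathbb{P}(X)/\mathbb{P}'(X)\leq 1+2/T^2$ combined with the non-negativity of the estimator, i.e.\ a multiplicative change of measure that is insensitive to the estimator's magnitude and hence to the missing lower bound on $\pi_k(a|s)$. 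If you want to salvage your coupling route you would need an analogous multiplicative device (or an argument that pulls out one factor of $\pi_k(a|s)$ from the bad-event contribution via the action indicator), and you would need to replace ``condition on $i$'' by conditioning on a quantity independent of the interval contents, which is precisely what the imaginary-world construction supplies.
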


The next lemma shows that in OOMD, $\pi_k$ and $\pi_{k-1}$ are close in a strong sense,
which further implies the stability for several other related quantities.
\begin{lemma}
      \label{lemma: stability lemma}
      For any $k, s, a$, 
      \begin{align}
           \left\lvert \pi_k(a|s) - \pi_{k-1}(a|s) \right\rvert &\leq \order(\eta N \pi_{k-1}(a|s)),  \label{eqn: OMD eta stable}  \\
           \left\vert J^{\pi_k}-J^{\pi_{k-1}} \right\rvert  &\leq \order(\eta N^2),  \nonumber  \\
           \left\lvert v^{\pi_k}(s)-v^{\pi_{k-1}}(s)  \right\rvert &\leq \order(\eta N^3),   \nonumber \\
           \left\lvert q^{\pi_k}(s,a)-q^{\pi_{k-1}}(s,a)  \right\rvert &\leq \order(\eta N^3), \nonumber  \\
           \left\lvert \beta^{\pi_k}(s,a)-\beta^{\pi_{k-1}}(s,a)  \right\rvert &\leq \order(\eta N^3).  \nonumber 
      \end{align}
\end{lemma}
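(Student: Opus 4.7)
The plan is to establish the five bounds in the order listed, using the first (policy stability) as the foundation for all subsequent bounds.

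First, I would derive the multiplicative policy stability bound $|\pi_k(a|s) - \pi_{k-1}(a|s)| \leq \order(\eta N \pi_{k-1}(a|s))$ from the log-barrier OMD update. Writing the first-order optimality condition for Eq.~\eqref{eqn:OMD} gives $1/\pi_{k+1}'(a|s) = 1/\pi_k'(a|s) + \eta(\lambda - \widehat{\beta}_k(s,a))$ for some normalizer $\lambda\in\mathbb{R}$, i.e., the multiplicative form $\pi_{k+1}'(a|s) = \pi_k'(a|s)/(1 + \eta \pi_k'(a|s)(\lambda - \widehat{\beta}_k(s,a)))$. The key observation is that the IPS estimator from Algorithm~\ref{alg: estimate} satisfies $\pi_k(a|s)\widehat{\beta}_k(s,a) \leq \order(N)$ pointwise, because each nonzero $y_i(a)$ equals $R/\pi_k(a|s)$ with $R \leq N+1$. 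A short self-bootstrapping argument (using that Eq.~\eqref{eqn:optimism} is itself a log-barrier OMD step from $\pi_{k+1}'$ with the same reward vector) shows $\pi_k$ and $\pi_k'$ are within a constant multiplicative factor, so $\eta \pi_k'(a|s)|\widehat{\beta}_k(s,a) - \lambda| = \order(\eta N)$. Applying this to the OMD step $\pi_k' \to \pi_{k+1}'$ and to the optimistic step $\pi_{k+1}' \to \pi_{k+1}$, then composing with the analogous estimates at index $k-1$, yields the claim.

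Second, for the average reward I would invoke the reward difference lemma (cited in Section~\ref{subsection: pg analysis}) to write
\[
J^{\pi_k} - J^{\pi_{k-1}} = \sum_s \mu^{\pi_{k-1}}(s) \sum_a (\pi_k(a|s) - \pi_{k-1}(a|s)) q^{\pi_k}(s,a).
\]
Combining the a priori bound $|q^{\pi_k}(s,a)| = \order(N)$ from Lemma~\ref{lemma: bounded span of pi} with the $\ell_1$ consequence $\sum_a |\pi_k(a|s) - \pi_{k-1}(a|s)| \leq \order(\eta N)$ of the first bound gives $|J^{\pi_k} - J^{\pi_{k-1}}| \leq \order(\eta N^2)$.

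Third, I would bound $|v^{\pi_k}(s) - v^{\pi_{k-1}}(s)|$ via the explicit formula Eq.~\eqref{eqn:v_pi_identity}. Mixing lets us truncate the infinite sum to $T' = \order(\mix \log T) = \order(N)$ terms with negligible tail. For the truncated portion, the telescoping identity
\[
(P^{\pi_k})^t - (P^{\pi_{k-1}})^t = \sum_{j=0}^{t-1} (P^{\pi_k})^j (P^{\pi_k} - P^{\pi_{k-1}}) (P^{\pi_{k-1}})^{t-1-j},
\]
together with $\|P^{\pi_k} - P^{\pi_{k-1}}\|_\infty \leq \max_s \sum_a |\pi_k(a|s) - \pi_{k-1}(a|s)| \leq \order(\eta N)$, gives $\|(P^{\pi_k})^t - (P^{\pi_{k-1}})^t\|_\infty \leq \order(t\eta N)$. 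Summing over $t \leq T'$ produces $\order((T')^2 \eta N) = \order(\eta N^3)$, and the $(\mu^\pi)^\top r^\pi$ term contributes at most $T' \cdot \order(\eta N^2) = \order(\eta N^3)$ via the $J^\pi$ bound. Finally, the $q^\pi$ bound follows from the Bellman equation $q^\pi(s,a) = r(s,a) - J^\pi + \E_{s'\sim p(\cdot|s,a)}[v^\pi(s')]$, yielding $|q^{\pi_k}(s,a) - q^{\pi_{k-1}}(s,a)| \leq |J^{\pi_k} - J^{\pi_{k-1}}| + \max_{s'}|v^{\pi_k}(s') - v^{\pi_{k-1}}(s')| = \order(\eta N^3)$, and $\beta^\pi = q^\pi + N J^\pi$ gives $|\beta^{\pi_k} - \beta^{\pi_{k-1}}| \leq \order(\eta N^3) + N \cdot \order(\eta N^2) = \order(\eta N^3)$.

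The main obstacle is the first bound. The subtlety is that $\widehat{\beta}_k(s,a)$ itself can be as large as $\order(N/\pi_k(a|s))$, so only the product $\pi_k(a|s)\widehat{\beta}_k(s,a)$ is controlled; one cannot use a na\"ive sup-norm bound, but must push the $\pi_k(a|s)$ factor through the log-barrier stability analysis, which is precisely what the log-barrier regularizer enables. Relating the sampling policy $\pi_k$ to the OMD iterate $\pi_k'$ without circular reasoning is an additional delicate step. Once the first bound is in hand, bounds two through five are essentially mechanical perturbation estimates controlled by the mixing time.
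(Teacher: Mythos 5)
Your handling of the last four bounds matches the paper's proof essentially step for step: the reward difference lemma together with $|q^\pi(s,a)|=\order(\mix)$ for the $J$ bound, truncation of Eq.~\eqref{eqn:v_pi_identity} to $\order(N)$ terms plus the telescoping expansion of $(P^{\pi_k})^t-(P^{\pi_{k-1}})^t$ with $\|\mathbf{e}_s^\top(P^{\pi_k}-P^{\pi_{k-1}})\|_1=\order(\eta N)$ for the $v$ bound, and then the Bellman equation and the definition $\beta^\pi=q^\pi+NJ^\pi$ for the remaining two. Those parts are fine once Eq.~\eqref{eqn: OMD eta stable} is in hand.

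The gap is in the first bound, which is the crux. You derive the multiplicative form $\pi_{k+1}'(a|s)=\pi_k'(a|s)/(1+\eta\pi_k'(a|s)(\lambda-\widehat{\beta}_k(s,a)))$ and then assert $\eta\pi_k'(a|s)\lvert\widehat{\beta}_k(s,a)-\lambda\rvert=\order(\eta N)$. The $\widehat{\beta}_k$ half is indeed immediate from the pointwise IPS bound $\pi_k(a|s)\widehat{\beta}_k(s,a)\leq N+1$, but the $\lambda$ half is exactly what needs proof and is not argued: the only a priori control on the simplex multiplier is $|\lambda|\leq\max_a\widehat{\beta}_k(s,a)$, which can be as large as $(N+1)/\min_a\pi_k(a|s)$, and multiplying that by $\pi_k'(a|s)$ for a \emph{different} action does not give $\order(N)$. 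To salvage your route you would need a genuine argument that $\lambda\cdot\max_a\pi_k(a|s)=\order(N)$ (e.g., that $\lambda$ is, to leading order, a $\pi_k^2$-weighted average of the $\widehat{\beta}_k$'s), and this must be established jointly, by induction over $k$, with the claim that $\pi_k$ and $\pi_k'$ stay within a constant multiplicative factor; it also requires the learning-rate condition $\eta\lesssim 1/N$, which your sketch never states. The paper sidesteps the multiplier entirely: its Lemma~\ref{lem:stability} compares $\pi_k$ and $\pi_{k+1}'$, which are minimizers over the \emph{same} Bregman center $\pi_k'$ with linear terms $\widehat{\beta}_{k-1}$ and $\widehat{\beta}_k$ (and similarly $\pi_{k+1}'$ vs.\ $\pi_{k+1}$ over center $\pi_{k+1}'$), and invokes the perturbation result of Lemma~\ref{lemma: auxiliary} (proved by contradiction with Taylor's theorem in the local norm $\nabla^{-2}\psi(\pi_k)$): if $\|\widehat{\beta}_k-\widehat{\beta}_{k-1}\|_{\nabla^{-2}\psi(\pi_k)}\leq 12\sqrt{\eta}C$ then every coordinate moves by at most $60\eta C$ relatively, and the local-norm condition is verified from $\sum_a\pi_k(a|s)\widehat{\beta}_k(s,a)\leq C=N+1$ together with the induction hypothesis $\pi_k\leq 2\pi_{k-1}$, under $\eta\leq\tfrac{1}{270(N+1)}$. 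Either supply the normalizer argument rigorously or adopt this local-norm/induction scheme; as written, the key step is asserted rather than proven.
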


The next lemma shows the regret bound of OOMD based on an analysis similar to~\citep{wei2018more}.
\begin{lemma}
    \label{lemma: PG reduction to bandit}
    For a specific state $s$, we have
\begin{align*}
    &\E\left[\sum_{k=1}^K \sum_a  (\pi^*(a|s)-\pi_k(a|s))\widehat{\beta}_k(s,a)\right] \leq  \order\Bigg(\frac{A\ln T}{\eta}\\
    & + \eta \E\left[\sum_{k=1}^K \sum_a \pi_k(a|s)^2 \left( \widehat{\beta}_k(s,a)- \widehat{\beta}_{k-1}(s,a) \right)^2\right]\Bigg),
\end{align*}
where we define $\widehat{\beta}_0(s,a) =0$ for all $s$ and $a$.
\end{lemma}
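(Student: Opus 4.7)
The plan is to specialize the standard Optimistic OMD regret analysis for the log-barrier regularizer (as in \citet{wei2018more}) to the fixed-state bandit instance at $s$; we drop $s$ from the notation below. The starting point is the three-way decomposition
\begin{align*}
\sum_k \langle \pi^* - \pi_k, \widehat\beta_k\rangle
&= \underbrace{\sum_k \langle \pi^* - \pi_{k+1}', \widehat\beta_k\rangle}_{(A)} \\
&\quad + \underbrace{\sum_k \langle \pi_{k+1}' - \pi_k, \widehat\beta_{k-1}\rangle}_{(B)} \\
&\quad + \underbrace{\sum_k \langle \pi_{k+1}' - \pi_k, \widehat\beta_k - \widehat\beta_{k-1}\rangle}_{(C)},
\end{align*}
which isolates the optimistic correction $\widehat\beta_k-\widehat\beta_{k-1}$ in $(C)$.

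Terms $(A)$ and $(B)$ are handled by the standard three-point OMD identity applied to the updates \eqref{eqn:OMD} and \eqref{eqn:optimism} respectively, giving $(A)\leq \sum_k[D_\psi(\pi^*,\pi_k')-D_\psi(\pi^*,\pi_{k+1}')-D_\psi(\pi_{k+1}',\pi_k')]$ and $(B)\leq \sum_k[D_\psi(\pi_{k+1}',\pi_k')-D_\psi(\pi_{k+1}',\pi_k)-D_\psi(\pi_k,\pi_k')]$. Summing, the $\pm D_\psi(\pi_{k+1}',\pi_k')$ contributions cancel, the divergences from $(A)$ telescope to $D_\psi(\pi^*,\pi_1')$, and a leftover $-\sum_k D_\psi(\pi_{k+1}',\pi_k)$ remains for absorbing a part of $(C)$.

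For $(C)$ I would apply Cauchy-Schwarz in the local norm induced by $\nabla^2\psi(\xi_k)$ at some $\xi_k$ on the segment $[\pi_k,\pi_{k+1}']$: $(C)\leq \sum_k[\tfrac14\|\pi_{k+1}'-\pi_k\|_{\nabla^2\psi(\xi_k)}^2+\|\widehat\beta_k-\widehat\beta_{k-1}\|_{*,\xi_k}^2]$. Since $\nabla^2\psi(x)=\mathrm{diag}(1/(\eta x(a)^2))$, the dual local norm equals $\eta\sum_a \xi_k(a)^2(\widehat\beta_k(a)-\widehat\beta_{k-1}(a))^2$; the primal piece is absorbed by the leftover $-D_\psi(\pi_{k+1}',\pi_k)$ via the strong-convexity lower bound $D_\psi(x,y)\geq \tfrac12\|x-y\|_{\nabla^2\psi(\xi)}^2$. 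Using the multiplicative stability of log-barrier OMD (of the same flavor as Lemma~\ref{lemma: stability lemma}) to replace $\xi_k(a)$ by $\Theta(\pi_k(a))$, the bound becomes $\order(\eta\sum_a \pi_k(a)^2(\widehat\beta_k(a)-\widehat\beta_{k-1}(a))^2)$. The comparator issue---$D_\psi(\pi^*,\pi_1')$ is infinite when $\pi^*$ is deterministic---is handled by competing against $\tilde\pi^*=(1-1/T)\pi^*+\mathbf{1}/(AT)$, which has $D_\psi(\tilde\pi^*,\pi_1')=\order(A\log(AT)/\eta)$ and contributes only $\order(1)$ extra regret.

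The main technical hurdle is the multiplicative stability $\xi_k(a)=\Theta(\pi_k(a))$ together with the absorption of the primal local norm by $-D_\psi(\pi_{k+1}',\pi_k)$: both demand that each OMD step under log-barrier perturb no coordinate by more than a constant multiplicative factor, which requires $\eta$ to be calibrated against the magnitude of $\widehat\beta_k$ (as is implicit in the main theorem's choice of $\eta$).
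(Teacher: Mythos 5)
Your proposal is correct and takes essentially the same route as the paper: the paper invokes the standard optimistic-OMD one-step inequality (which is proved by exactly your (A)/(B)/(C) three-point decomposition), uses the same shifted comparator $\tilde\pi=(1-\tfrac{1}{T})\pi^*+\tfrac{1}{AT}\one$, and controls the correction term in the local log-barrier norm via the same multiplicative stability property (valid only when $\eta \lesssim 1/(N+1)$, as you note). The only cosmetic difference is that you retain the negative Bregman terms and absorb the primal local norm by AM--GM, while the paper instead derives $\|\pi_k-\pi_{k+1}'\|_{\nabla^2\psi(\pi_k)}\le 8\,\|\widehat{\beta}_k-\widehat{\beta}_{k-1}\|_{\nabla^{-2}\psi(\pi_k)}$ from a two-sided bound on $F_{k+1}'(\pi_k)-F_{k+1}'(\pi_{k+1}')$; both give the same variance term up to constants.
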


Finally, we state a key lemma for proving Theorem~\ref{thm: policy optimization thm}.  
\begin{lemma}
     \label{lemma: PG regret main term}
     \pg ensures
     \begin{align}
     &\E\left[B\sum_{k=1}^K\sum_s\sum_a \mu^*(s)\left( \pi^*(a|s)-\pi_k(a|s) \right) q^{\pi_k}(s,a) \right] \nonumber \\
     &=  \order\left(\frac{BA\ln T}{\eta} + \eta \frac{TN^3\ratio}{B} + \eta^3 TN^6\right).  \nonumber 
     \end{align}
\end{lemma}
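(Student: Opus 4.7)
The plan is to apply the single-state OOMD regret bound of Lemma~\ref{lemma: PG reduction to bandit} separately for each state $s$, then aggregate the $S$ inequalities with weight $B\mu^*(s)$. To invoke that lemma I first convert $q^{\pi_k}(s,a)$ into $\widehat{\beta}_k(s,a)$. Since $\beta^{\pi_k}(s,a)=q^{\pi_k}(s,a)+NJ^{\pi_k}$ by Eq.~\eqref{eqn:beta} and the extra $NJ^{\pi_k}$ is constant in $a$, it contributes zero to $\sum_a(\pi^*(a|s)-\pi_k(a|s))(\cdot)$ (as $\pi^*(\cdot|s)$ and $\pi_k(\cdot|s)$ are both distributions on $\calA$); hence I may replace $q^{\pi_k}(s,a)$ by $\beta^{\pi_k}(s,a)$. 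Then by Eq.~\eqref{eqn: unbiased mean}, I further replace $\beta^{\pi_k}(s,a)$ by $\E_k[\widehat{\beta}_k(s,a)]$ at a total additive cost of at most $B\cdot K\cdot 2A\cdot \order(1/T)=\order(A)$, which is absorbed into the final bound.

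After this reduction, applying Lemma~\ref{lemma: PG reduction to bandit} at each $s$, multiplying by $B\mu^*(s)$, and summing, the identity $\sum_s\mu^*(s)=1$ yields $\order(BA\ln T/\eta)$ from the first term of that lemma. What remains is to bound
\[
     \eta B \sum_s \mu^*(s)\,\E\!\left[\sum_{k=1}^K\sum_a \pi_k(a|s)^2\bigl(\widehat{\beta}_k(s,a)-\widehat{\beta}_{k-1}(s,a)\bigr)^2\right].
\]
For this I use the triangle-inequality decomposition
\[
     (\widehat{\beta}_k-\widehat{\beta}_{k-1})^2\leq 3(\widehat{\beta}_k-\beta^{\pi_k})^2 + 3(\beta^{\pi_k}-\beta^{\pi_{k-1}})^2 + 3(\beta^{\pi_{k-1}}-\widehat{\beta}_{k-1})^2.
\]
The first and third summands are handled by the variance bound~\eqref{eqn: unbiased variance} of Lemma~\ref{lemma: PG unbiasedness}, each contributing $\order\!\bigl(N^3\log T/(B\pi_k(a|s)\mu^{\pi_k}(s))\bigr)$ after converting the $(k-1)$-indexed quantities to $k$-indexed ones via the multiplicative closeness of $\pi_{k-1}$ and $\pi_k$ from Eq.~\eqref{eqn: OMD eta stable} (and the inherited multiplicative closeness of $\mu^{\pi_{k-1}}$ and $\mu^{\pi_k}$ by a standard stationary-distribution perturbation argument). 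The middle summand is at most $\order(\eta^2 N^6)$ by the $\beta$-stability estimate of Lemma~\ref{lemma: stability lemma}.

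The critical cancellation — the one that turns the worse $\hit$ into the sharper $\ratio$ — happens when the variance contribution is multiplied by $\pi_k(a|s)^2$ and summed over $a$: the factor $\pi_k(a|s)^2/\pi_k(a|s)=\pi_k(a|s)$ sums to $1$, leaving $\order(N^3\log T/(B\mu^{\pi_k}(s)))$. Weighting by $\mu^*(s)$ and summing over $s$ then produces exactly $\sum_s \mu^*(s)/\mu^{\pi_k}(s)\leq \ratio$ by the definition~\eqref{eqn:ratio}. Summing over $k=1,\dots,K$ and multiplying by $\eta B$, with $K=T/B$, yields the first variance contribution $\otil(\eta TN^3\ratio/B)$. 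The stability summand contributes $\eta B\cdot K\cdot \eta^2 N^6 = \eta^3 TN^6$, matching the stated bound. The step I expect to require the most care is this bookkeeping: it is essential that only a \emph{single} factor of $1/\pi_k(a|s)$ survives the summation over $a$ (otherwise one would pick up $\hit A$ in place of $\ratio$), and justifying the $\pi_{k-1}\to\pi_k$ substitutions inside the inverse-propensity-scored variance term requires carefully chaining Lemma~\ref{lemma: stability lemma} with the corresponding stability of $\mu^{\pi_k}$.
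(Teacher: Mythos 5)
Your proposal follows essentially the same route as the paper's proof: the cancellation of the $NJ^{\pi_k}$ term to pass from $q^{\pi_k}$ to $\beta^{\pi_k}$, the bias bound~\eqref{eqn: unbiased mean} to pass to $\E_k[\widehat{\beta}_k]$, Lemma~\ref{lemma: PG reduction to bandit} applied per state, the three-way decomposition of $(\widehat{\beta}_k-\widehat{\beta}_{k-1})^2$, the variance bound~\eqref{eqn: unbiased variance} with the $\pi_k(a|s)^2/\pi_k(a|s)$ cancellation that yields $\sum_s \mu^*(s)/\mu^{\pi_k}(s)\leq\ratio$, and the $\beta$-stability of Lemma~\ref{lemma: stability lemma} for the middle term; the aggregation with weights $B\mu^*(s)$ is identical.

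One caution on the only place you deviate: for the term $\pi_k(a|s)^2(\widehat{\beta}_{k-1}(s,a)-\beta^{\pi_{k-1}}(s,a))^2$ you propose converting the $(k-1)$-indexed quantities to $k$-indexed ones, invoking multiplicative closeness of $\mu^{\pi_{k-1}}$ and $\mu^{\pi_k}$. This is both unnecessary and not free: such a multiplicative perturbation bound for stationary distributions is not established in the paper and would need conditions beyond $\eta\leq \tfrac{1}{270(N+1)}$ (roughly $\eta N^2\hit\lesssim 1$). Moreover, the substitution should go the other way for measurability reasons: $\pi_k$ depends on $\widehat{\beta}_{k-1}$, so it cannot be pulled out of $\E_{k-1}$; the clean argument (the paper's) bounds $\pi_k(a|s)\leq 2\pi_{k-1}(a|s)$ pathwise via Lemma~\ref{lem:stability} \emph{before} taking $\E_{k-1}$, obtains $\order\bigl(N^3\log T/(B\mu^{\pi_{k-1}}(s))\bigr)$, and then simply re-indexes the sum over $k$, so no stability of $\mu^{\pi}$ is ever needed. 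With that adjustment your argument matches the paper's proof.
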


\section{Conclusions}
In this work we propose two model-free algorithms for learning infinite-horizon average-reward MDPs.
They are based on different ideas: one reduces the problem to the discounted version, while the other optimizes the policy directly via a novel application of adaptive adversarial multi-armed bandit algorithms.
The main open question is how to achieve the information-theoretically optimal regret bound via a model-free algorithm, if it is possible at all.
We believe that the techniques we develop in this work would be useful in answering this question.

\section*{Acknowledgements}
The authors would like to thank Csaba Szepesvari for pointing out the related works \citep{abbasi2019politex, abbasi2019exploration}, Mengxiao Zhang for helping us prove Lemma~\ref{lemma: PG unbiasedness}, Gergely Neu for clarifying the analysis in \citep{neu2013online}, and Ronan Fruit for discussions on a related open problem presented at ALT 2019.  
Support from NSF for MJ (award ECCS-1810447), HL (award IIS-1755781), HS (award CCF-1817212) and RJ (awards ECCS-1810447 and CCF-1817212) is gratefully acknowledged.

\bibliography{online_rl}
\bibliographystyle{icml2020}


\newpage
\onecolumn 
\appendix


\section{Omitted Proofs in Section \ref{sec: qlearning}}
In this section, we provide detailed proof for the lemmas used in Section \ref{sec: qlearning}. Recall that the learning rate $\alpha_\tau = \frac{H+1}{H+\tau}$ is similar to the one used by \cite{jin2018q}. For notational convenience, let
\begin{align}
\label{eq: alpha-tau-i}
\alpha_\tau^0 := \prod_{j=1}^\tau (1 - \alpha_j), \qquad \alpha_\tau^i := \alpha_i \prod_{j=i+1}^\tau (1 - \alpha_j). 
\end{align}
It can be verified that $\alpha_\tau^0 = 0$ for $\tau \geq 1$ and we define $\alpha_0^0 = 1$. These quantities are used in the proof of Lemma \ref{lem: v-star minus q-star} and have some nice properties summarized in the following lemma.
\begin{lemma}[\cite{jin2018q}]
\label{lem: learning rate}
The following properties hold for $\alpha_\tau^i$:
\begin{enumerate}
\item $\frac{1}{\sqrt{\tau}} \leq \sum_{i=1}^\tau \frac{\alpha_\tau^i}{\sqrt i} \leq \frac{2}{\sqrt \tau}$ for every $\tau \geq 1$.
\item $\sum_{i=1}^\tau  (\alpha_\tau^i)^2 \leq \frac{2H}{\tau}$ for every $\tau \geq 1$.
\item $\sum_{i=1}^\tau \alpha_\tau^i = 1$ for every $\tau \geq 1$ and $\sum_{\tau=i}^\infty \alpha_\tau^i = 1 + \frac{1}{H}$ for every $i \geq 1$.
\end{enumerate}
\end{lemma}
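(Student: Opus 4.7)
The plan is to exploit the two-term recurrence $\alpha_\tau^i = (1-\alpha_\tau)\alpha_{\tau-1}^i$ for $i<\tau$ together with $\alpha_\tau^\tau = \alpha_\tau$, plus the closed forms $\alpha_j = \frac{H+1}{H+j}$ and $1-\alpha_j = \frac{j-1}{H+j}$, and to prove each of the three claims by induction on $\tau$ (or by telescoping in the case of the infinite sum). Since this is Lemma 4.1 of \citet{jin2018q}, one could simply cite it; I spell out the route I would take below.

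The easiest is claim 3. For $\sum_{i=1}^\tau \alpha_\tau^i = 1$, the base case $\tau=1$ gives $\alpha_1 = 1$, and the recurrence yields $\sum_{i=1}^\tau \alpha_\tau^i = \alpha_\tau + (1-\alpha_\tau)\sum_{i=1}^{\tau-1}\alpha_{\tau-1}^i = \alpha_\tau + (1-\alpha_\tau)\cdot 1 = 1$ by induction. For $\sum_{\tau=i}^\infty \alpha_\tau^i = 1+\tfrac{1}{H}$, unrolling the product gives the closed form $\alpha_\tau^i = \frac{(H+1)(\tau-1)!(H+i-1)!}{(i-1)!(H+\tau)!}$, and the algebraic identity $\frac{(\tau-1)!}{(H+\tau)!} = \frac{1}{H}\bigl(\frac{(\tau-1)!}{(H+\tau-1)!} - \frac{\tau!}{(H+\tau)!}\bigr)$ (verified by clearing denominators) lets the infinite sum telescope to $\frac{1}{H}\cdot\frac{(i-1)!}{(H+i-1)!}$, giving $\frac{H+1}{H}$ after multiplication by the prefactor.

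For claim 2, the plan is to show the maximum of $\alpha_\tau^i$ over $i$ is attained at $i=\tau$. A direct computation of the ratio gives $\frac{\alpha_\tau^{j+1}}{\alpha_\tau^j} = \frac{\alpha_{j+1}}{\alpha_j(1-\alpha_{j+1})} = \frac{H+j}{j} > 1$, so $\alpha_\tau^i$ is increasing in $i$ and $\max_i \alpha_\tau^i = \alpha_\tau = \frac{H+1}{H+\tau} \leq \frac{2H}{\tau}$ (using $H\geq 2 \geq 1$, so $H+1\leq 2H$ and $H+\tau \geq \tau$). Combining with claim 3 gives $\sum_i (\alpha_\tau^i)^2 \leq \max_i \alpha_\tau^i \cdot \sum_i \alpha_\tau^i \leq \frac{2H}{\tau}$.

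Claim 1 is the subtlest; I would again use induction on $\tau$ to bound $S_\tau := \sum_{i=1}^\tau \frac{\alpha_\tau^i}{\sqrt{i}}$ from above by $\frac{2}{\sqrt{\tau}}$. The recurrence gives $S_\tau = \frac{\alpha_\tau}{\sqrt{\tau}} + (1-\alpha_\tau) S_{\tau-1}$, and invoking the inductive hypothesis $S_{\tau-1}\leq \frac{2}{\sqrt{\tau-1}}$ along with the elementary inequality $\sqrt{\tau/(\tau-1)} \leq 1 + \frac{1}{2(\tau-1)}$ reduces the induction step to the condition $\tau\alpha_\tau \geq 1$, which holds for all $\tau\geq 1$ since $\tau\alpha_\tau = \frac{\tau(H+1)}{H+\tau}\geq 1$. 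The lower bound $S_\tau\geq \frac{1}{\sqrt{\tau}}$ is immediate from $\sqrt{i}\leq \sqrt{\tau}$ combined with claim 3. The main obstacle is organizing the algebra in the induction step cleanly; once $\tau\alpha_\tau \geq 1$ is isolated as the sufficient condition, the proof closes transparently.
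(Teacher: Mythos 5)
Your proposal is correct. Note first that the paper itself gives no proof of this lemma at all --- it is stated with a citation to \citet{jin2018q} (their Lemma 4.1) and used as a black box --- so there is nothing in the paper to compare against line by line; what you have written is a self-contained replacement for that citation. Your arguments check out: the recurrence $\alpha_\tau^i=(1-\alpha_\tau)\alpha_{\tau-1}^i$ for $i<\tau$ with $\alpha_\tau^\tau=\alpha_\tau$ is correct, the induction for $\sum_i\alpha_\tau^i=1$ is immediate, the ratio computation $\alpha_\tau^{j+1}/\alpha_\tau^j=(H+j)/j>1$ does show the maximum is $\alpha_\tau=\frac{H+1}{H+\tau}\le\frac{2H}{\tau}$, and bounding $\sum_i(\alpha_\tau^i)^2$ by $\max_i\alpha_\tau^i\cdot\sum_i\alpha_\tau^i$ is exactly the standard (Jin et al.) argument. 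For claim 1, I verified your reduction: plugging $S_{\tau-1}\le 2/\sqrt{\tau-1}$ and $\sqrt{\tau/(\tau-1)}\le 1+\frac{1}{2(\tau-1)}$ into $S_\tau=\frac{\alpha_\tau}{\sqrt{\tau}}+(1-\alpha_\tau)S_{\tau-1}$ leaves precisely the condition $\tau\alpha_\tau\ge 1$, which holds since $\tau(H+1)\ge H+\tau$ for $\tau\ge1$; the lower bound via $1/\sqrt{i}\ge 1/\sqrt{\tau}$ and claim 3 is fine. Your telescoping route to $\sum_{\tau=i}^\infty\alpha_\tau^i=1+\frac1H$ is a clean alternative to citing the original argument; the identity you use is correct and the tail term $\tau!/(H+\tau)!\sim\tau^{-H}\to0$ justifies the telescoping. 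Two small polish points: in this paper $H$ is set to a minimum of real-valued expressions (Theorem~\ref{thm: q-learning}) and need not be an integer, so write the ``factorials'' as explicit products $\prod_{j=i+1}^\tau\frac{j-1}{H+j}$ or Gamma functions --- the telescoping identity holds verbatim either way; and state the base cases ($\tau=1$ gives $S_1=1\le2$ and $\sum_i\alpha_1^i=\alpha_1=1$) explicitly.
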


Also recall the well-known Azuma's inequality:
\begin{lemma}[Azuma's inequality]
\label{lem: azuma}
Let $X_1, X_2, \cdots$ be a martingale difference sequence with $|X_i| \leq c_i$ for all $i$. Then, for any $0 < \delta < 1$, 
\begin{align*}
\mathbb{P}\left(\sum_{i=1}^T X_i \geq \sqrt{2 \bar{c}_T^2 \ln \frac{1}{\delta}}\right) \leq \delta,
\end{align*}
where $\bar{c}_T^2 := \sum_{i=1}^Tc_i^2$.
\end{lemma}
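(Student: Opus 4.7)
The plan is to follow the classical Chernoff-type argument applied to the moment generating function of the martingale, combined with Hoeffding's lemma to bound the conditional MGF of each bounded increment. Let $S_T = \sum_{i=1}^T X_i$ and let $\mathcal{F}_i$ denote the natural filtration, so that by assumption $\mathbb{E}[X_i \mid \mathcal{F}_{i-1}] = 0$ and $|X_i| \le c_i$ almost surely.

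First I would apply Markov's inequality to the exponentiated sum: for any $\lambda > 0$ and threshold $t > 0$,
\[
\mathbb{P}(S_T \ge t) = \mathbb{P}(e^{\lambda S_T} \ge e^{\lambda t}) \le e^{-\lambda t}\,\mathbb{E}[e^{\lambda S_T}].
\]
Next I would control $\mathbb{E}[e^{\lambda S_T}]$ by conditioning step by step. Using the tower property and the fact that $e^{\lambda S_{T-1}}$ is $\mathcal{F}_{T-1}$-measurable,
\[
\mathbb{E}[e^{\lambda S_T}] = \mathbb{E}\!\left[e^{\lambda S_{T-1}}\, \mathbb{E}[e^{\lambda X_T}\mid \mathcal{F}_{T-1}]\right].
\]
The key ingredient is Hoeffding's lemma: if $Y$ is a random variable with $\mathbb{E}[Y]=0$ and $Y \in [a,b]$ almost surely, then $\mathbb{E}[e^{\lambda Y}] \le \exp(\lambda^2 (b-a)^2 / 8)$. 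Applied conditionally on $\mathcal{F}_{i-1}$ to $X_i$, which satisfies $\mathbb{E}[X_i\mid\mathcal{F}_{i-1}]=0$ and $X_i \in [-c_i, c_i]$, this yields
\[
\mathbb{E}[e^{\lambda X_i}\mid \mathcal{F}_{i-1}] \le \exp\!\left(\frac{\lambda^2 (2c_i)^2}{8}\right) = \exp\!\left(\frac{\lambda^2 c_i^2}{2}\right).
\]
Iterating this bound $T$ times gives $\mathbb{E}[e^{\lambda S_T}] \le \exp(\lambda^2 \bar c_T^2 / 2)$.

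Plugging this back into Markov's inequality produces $\mathbb{P}(S_T \ge t) \le \exp(-\lambda t + \lambda^2 \bar c_T^2 / 2)$. Optimizing over $\lambda > 0$ by choosing $\lambda = t / \bar c_T^2$ gives the sub-Gaussian tail $\mathbb{P}(S_T \ge t) \le \exp(-t^2 / (2\bar c_T^2))$. Setting $t = \sqrt{2 \bar c_T^2 \ln(1/\delta)}$ makes the right-hand side equal to $\delta$, which yields the claimed bound. The only mildly non-trivial step is Hoeffding's lemma itself (proved via convexity of $e^{\lambda y}$ and a short optimization over the cumulant function of a two-point distribution on $\{a,b\}$), but this is a standard textbook fact that can be invoked directly; everything else is bookkeeping with the tower property and a one-variable optimization.
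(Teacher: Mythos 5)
Your proof is correct: the Chernoff--Hoeffding argument (Markov's inequality on the exponentiated sum, conditional Hoeffding's lemma giving $\mathbb{E}[e^{\lambda X_i}\mid \mathcal{F}_{i-1}] \leq e^{\lambda^2 c_i^2/2}$, iteration via the tower property, and the optimal choice $\lambda = t/\bar{c}_T^2$ with $t = \sqrt{2\bar{c}_T^2 \ln(1/\delta)}$) is exactly the standard derivation of this bound. The paper itself states Azuma's inequality as a well-known fact without proof, so there is nothing to compare against; your argument is the canonical textbook proof and is complete, with the only externally invoked ingredient (Hoeffding's lemma) correctly identified and legitimately cited as standard.
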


\subsection{Proof of Lemma \ref{lem: discounted average connection}}
\textbf{Lemma \ref{lem: discounted average connection} (Restated).} Let $V^*$ be the optimal value function in the discounted MDP with discount factor $\gamma$ and $v^*$ be the optimal value function in the undiscounted MDP. Then,
\begin{enumerate}
    \item $|J^* - (1-\gamma) V^*(s)| \leq (1-\gamma)\spn(v^*)$, $\forall s\in\calS$,
    \item $\spn(V^*) \leq 2\spn(v^*)$.
\end{enumerate}
\begin{proof}
\begin{enumerate}
    \item Let $\pi^*$ and $\pi_\gamma$ be the optimal policy under undiscounted and discounted settings, respectively. By Bellman's equation, we have
    \begin{align*}
        v^*(s) = r(s, \pi^*(s))-J^* + \E_{s'\sim p(\cdot|s, \pi^*(s))}v^*(s'). 
    \end{align*}
    Consider a state sequence $s_1, s_2, \cdots$ generated by $\pi^*$. Then, by sub-optimality of $\pi^*$ for the discounted setting, we have
    \begin{align*}
        V^*(s_1) &\geq \E\left[\sum_{t=1}^\infty \gamma^{t-1} r(s_t, \pi^*(s_t))  ~\bigg\vert~ s_1 \right]\\
        & = \E\left[\sum_{t=1}^\infty \gamma^{t-1} \left(J^*+v^*(s_t) - v^*(s_{t+1})\right)  ~\bigg\vert~ s_1 \right]\\
        &= \frac{J^*}{1-\gamma} + v^*(s_1) - \E\left[\sum_{t=2}^\infty (\gamma^{t-2}-\gamma^{t-1})v^*(s_t) ~\bigg\vert~ s_1 \right] \\
        & \geq \frac{J^*}{1-\gamma} + \min_sv^*(s) - \max_sv^*(s)\sum_{t=2}^\infty (\gamma^{t-2}-\gamma^{t-1}) \\
        &= \frac{J^*}{1-\gamma} - \spn(v^*),
    \end{align*}
    where the first equality is by the Bellman equation for the undiscounted setting.
    
    Similarly, for the other direction, let $s_1, s_2, \cdots$ be generated by $\pi_\gamma$. We have
    \begin{align*}
        V^*(s_1) &= \E\left[\sum_{t=1}^\infty \gamma^{t-1} r(s_t, \pi_\gamma(s_t))  ~\bigg\vert~ s_1 \right]\\
        &\leq \E\left[\sum_{t=1}^\infty \gamma^{t-1} \left(J^*+v^*(s_t) - v^*(s_{t+1})\right)  ~\bigg\vert~ s_1 \right]\\
        &= \frac{J^*}{1-\gamma} + v^*(s_1) - \E\left[\sum_{t=2}^\infty (\gamma^{t-2}-\gamma^{t-1})v^*(s_t) ~\bigg\vert~ s_1 \right] \\
        & \leq \frac{J^*}{1-\gamma} + \max_sv^*(s) - \min_sv^*(s)\sum_{t=2}^\infty (\gamma^{t-2}-\gamma^{t-1}) \\
        &= \frac{J^*}{1-\gamma} + \spn(v^*), 
    \end{align*}
    where the first inequality is by sub-optimality of $\pi_\gamma$ for the undiscounted setting.
    \item Using previous part, for any $s_1, s_2 \in \mathcal{S}$, we have
    \begin{align*}
    |V^*(s_1) - V^*(s_2)| \leq \Big\vert V^*(s_1) - \frac{J^*}{1 - \gamma}\Big\vert + \Big\vert V^*(s_2) - \frac{J^*}{1 - \gamma}\Big\vert  \leq 2\spn(v^*). 
    \end{align*}
    Thus, $\spn (V^*) \leq 2\spn(v^*)$.
\end{enumerate}
\end{proof}

\subsection{Proof of Lemma \ref{lem: v-star minus q-star}}
\textbf{Lemma \ref{lem: v-star minus q-star}.} With probability at least $1 - \delta$,
\begin{align*}
    &\sum_{t=1}^T \left(V^*(s_t)- Q^*(s_t,a_t)\right) \leq 4HSA + 24\spn(v^*)\sqrt{HSAT \ln \frac{2T}{\delta}}.
\end{align*}  

\begin{proof}
We condition on the statement of Lemma \ref{lem: optimistic Q}, which happens with probability at least $1-\delta$.
    Let $n_t \geq 1$ denote $n_{t+1}(s_t,a_t)$, that is, the total number of visits to the state-action pair $(s_t,a_t)$ for the first $t$ rounds (including round $t$).  
    Also let $t_i(s,a)$ denote the timestep at which $(s,a)$ is visited the $i$-th time.
    Recalling the definition of $\alpha_{n_t}^i$ in Eq.~\eqref{eq: alpha-tau-i}, we have
    \begin{align}
    & \sum_{t=1}^T \left(\hat{V}_t(s_t)- V^*(s_t)\right) + \sum_{t=1}^T \left( V^*(s_t) - Q^*(s_t,a_t) \right) \\
    &=\sum_{t=1}^T \left(\hat{Q}_t(s_t,a_t)- Q^*(s_t,a_t)\right) \tag{because $a_t = \argmax_a \hat{Q}_t(s_t,a)$}\\
    &=\sum_{t=1}^T \left(\hat{Q}_{t+1}(s_t,a_t)- Q^*(s_t,a_t)\right) +   \sum_{t=1}^T \left(\hat{Q}_{t}(s_t,a_t)- \hat{Q}_{t+1}(s_t,a_t)\right)   \label{eqn:tmp two terms} \\ 
    &\leq 12\spn(v^*)\sum_{t=1}^T \sqrt{\frac{H}{n_t} \ln \frac{2T}{\delta}}
    + \gamma \sum_{t=1}^T \sum_{i=1}^{n_t}\alpha^i_{n_t}   
    \Big[\hat{V}_{t_i(s_t,a_t)}(s_{t_i(s_t,a_t)+1}) - V^*(s_{t_i(s_t,a_t)+1})\Big] + SAH \label{eq: tmp telescope}.
    \end{align}
    Here, we apply Lemma \ref{lem: optimistic Q} to bound the first term of Eq~.\eqref{eqn:tmp two terms} (note $\alpha_{n_t}^0 = 0$ by definition since $n_t \geq 1$), and also bound the second term of Eq~.\eqref{eqn:tmp two terms} by $SAH$ since for each fixed $(s,a)$, $\hat{Q}_t(s,a)$ is non-increasing in $t$ and overall cannot decrease by more than $H$ (the initial value). 
    
    To bound the third term of Eq.~\eqref{eq: tmp telescope} we write:
	\begin{align*}
	&\gamma \sum_{t=1}^T \sum_{i=1}^{n_t}\alpha^i_{n_t}   
    \Big[\hat{V}_{t_i(s_t,a_t)}(s_{t_i(s_t,a_t)+1}) - V^*(s_{t_i(s_t,a_t)+1})\Big] \\
    &= \gamma \sum_{t=1}^T \sum_{s, a} \mathbbm{1}_{[s_t=s, a_t=a]}\sum_{i=1}^{n_{t+1}(s, a)}\alpha^i_{n_{t+1}(s, a)}   
    \Big[\hat{V}_{t_i(s,a)}(s_{t_i(s,a)+1}) - V^*(s_{t_i(s,a)+1})\Big] \\
    &= \gamma \sum_{s, a}\sum_{j=1}^{n_{T+1}(s, a)} \sum_{i=1}^{j}\alpha^i_{j}   
    \Big[\hat{V}_{t_i(s,a)}(s_{t_i(s,a)+1}) - V^*(s_{t_i(s,a)+1})\Big].
    \end{align*}
    By changing the order of summation on $i$ and $j$, the latter is equal to
    \begin{align*}
    &\gamma \sum_{s, a}\sum_{i=1}^{n_{T+1}(s, a)} \sum_{j=i}^{n_{T+1}(s, a)}\alpha^i_{j}   
    \Big[\hat{V}_{t_i(s,a)}(s_{t_i(s,a)+1}) - V^*(s_{t_i(s,a)+1})\Big] \\
    &= \gamma \sum_{s, a}\sum_{i=1}^{n_{T+1}(s, a)}   
    \Big[\hat{V}_{t_i(s,a)}(s_{t_i(s,a)+1}) - V^*(s_{t_i(s,a)+1})\Big] \sum_{j=i}^{n_{T+1}(s, a)}\alpha^i_{j}
	\end{align*}
	Now, we can upper bound $\sum_{j=i}^{n_{T+1}(s, a)}\alpha^i_{j}$ by $\sum_{j=i}^\infty\alpha^i_{j}$ where the latter is equal to $1 + \frac{1}{H}$ by Lemma \ref{lem: learning rate}. Since $\hat{V}_{t_i(s,a)}(s_{t_i(s,a)+1}) - V^*(s_{t_i(s,a)+1}) \geq 0$ (by Lemma \ref{lem: optimistic Q}), we can write:
	\begin{align*}
	&\gamma \sum_{s, a}\sum_{i=1}^{n_{T+1}(s, a)}  
    \Big[\hat{V}_{t_i(s,a)}(s_{t_i(s,a)+1}) - V^*(s_{t_i(s,a)+1})\Big] \sum_{j=i}^{n_{T+1}(s, a)}\alpha^i_{j} \\
    &\leq \gamma \sum_{s, a}\sum_{i=1}^{n_{T+1}(s, a)}    
    \Big[\hat{V}_{t_i(s,a)}(s_{t_i(s,a)+1}) - V^*(s_{t_i(s,a)+1})\Big] \sum_{j=i}^{\infty}\alpha^i_{j} \\
    &= \gamma \sum_{s, a}\sum_{i=1}^{n_{T+1}(s, a)}   
    \Big[\hat{V}_{t_i(s,a)}(s_{t_i(s,a)+1}) - V^*(s_{t_i(s,a)+1})\Big] \left(1 + \frac{1}{H}\right) \\
        &=\left(1+\frac{1}{H}\right)\gamma \sum_{t=1}^T \left[ \hat{V}_{t}(s_{t+1}) - V^*(s_{t+1}) \right]\\
        &= \left(1+\frac{1}{H}\right)\gamma\sum_{t=1}^T \left[ \hat{V}_{t+1}(s_{t+1}) - V^*(s_{t+1}) \right] + \left(1+\frac{1}{H}\right) \sum_{t=1}^T \left[ \hat{V}_t(s_{t+1}) - \hat{V}_{t+1}(s_{t+1}) \right] \\
        &\leq \sum_{t=2}^{T+1} \left[ \hat{V}_{t}(s_{t}) - V^*(s_{t}) \right] +  \left(1+\frac{1}{H}\right) SH.  
    \end{align*}
    The last inequality is because $\left(1+\frac{1}{H}\right)\gamma \leq 1$ and that for any state $s$, $\hat{V}_t(s)\geq \hat{V}_{t+1}(s)$ and the value can decrease by at most $H$ (the initial value). Substituting in Eq.~\eqref{eq: tmp telescope} and telescoping with the left hand side, we have
    \begin{align*}
        \sum_{t=1}^T \left(V^*(s_t)- Q^*(s_t,a_t)\right) &\leq  12\spn(v^*)\sum_{t=1}^T \sqrt{\frac{H}{n_t} \ln \frac{2T}{\delta}} + \left(\hat{V}_{T+1}(s_{T+1}) - V^*(s_{T+1})\right) + \left(1+\frac{1}{H}\right) SH + SAH \\
        &\leq 12\spn(v^*)\sum_{t=1}^T \sqrt{\frac{H}{n_t} \ln \frac{2T}{\delta}} + 4SAH.
    \end{align*}
    Moreover, $\sum_{t=1}^T \frac{1}{\sqrt{n_t}} \leq 2\sqrt{SAT}$ because
    \begin{align*}
    \sum_{t=1}^T \frac{1}{\sqrt{n_{t+1}(s_t, a_t)}} &= \sum_{t=1}^T \sum_{s, a} \frac{\mathbbm{1}_{[s_t=s, a_t=a]}}{\sqrt{n_{t+1}(s, a)}} = \sum_{s, a}\sum_{j=1}^{n_{T+1}(s, a)}\frac{1}{\sqrt j} \leq \sum_{s, a} 2 \sqrt{n_{T+1}(s, a)} \leq 2 \sqrt{SA \sum_{s, a}n_{T+1}(s, a)} = 2\sqrt{SAT},
    \end{align*}
    where the last inequality is by Cauchy-Schwarz inequality. This finishes the proof.
\end{proof}

\begin{lemma}
\label{lem: optimistic Q}
With probability at least $1-\delta$, for any $t = 1, \ldots, T$ and state-action pair $(s,a)$, the following holds
\begin{align*}
    0 &\leq 
    \hat{Q}_{t+1}(s,a) - Q^*(s,a) \leq  H\alpha_\tau^0 + \gamma \sum_{i=1}^\tau \alpha_\tau^i \left[\hat{V}_{t_i}(s_{t_i+1}) - V^*(s_{t_i+1}) \right] + 12\spn(v^*)\sqrt{\frac{H}{\tau} \ln \frac{2T}{\delta}},
\end{align*}
where $\tau=n_{t+1}(s,a)$ (i.e., the total number of visits to $(s,a)$ for the first $t$ timesteps), $\alpha_\tau^i$ is defined by \eqref{eq: alpha-tau-i},
and $t_1, \ldots, t_\tau \leq t$ are the timesteps on which $(s,a)$ is taken.
\end{lemma}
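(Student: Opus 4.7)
The plan is to unroll the stochastic update rule in Eq.~\eqref{eqn: q update} over all $\tau$ visits to $(s,a)$ and compare the result to an analogous unrolling of the Bellman equation for $Q^*$. Concretely, using $\sum_{i=1}^{\tau}\alpha_\tau^i = 1$ for $\tau\ge 1$ together with the telescoping identity that defines $\alpha_\tau^i$, I would show
\begin{align*}
Q_{t+1}(s,a) = \alpha_\tau^0 H + \sum_{i=1}^\tau \alpha_\tau^i \Bigl[ r(s,a) + \gamma \hat{V}_{t_i}(s_{t_i+1}) + b_i \Bigr],
\end{align*}
and then write $Q^*(s,a) = \alpha_\tau^0 Q^*(s,a) + \sum_{i=1}^\tau \alpha_\tau^i[r(s,a) + \gamma \E_{s'\sim p(\cdot|s,a)} V^*(s')]$. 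Subtracting and splitting $\hat V_{t_i}(s_{t_i+1}) - \E V^* = (\hat V_{t_i}(s_{t_i+1}) - V^*(s_{t_i+1})) + (V^*(s_{t_i+1}) - \E V^*)$ gives a four-piece decomposition: the initial bias $\alpha_\tau^0(H-Q^*)\in[0,\alpha_\tau^0 H]$, the recursive signal $\gamma\sum_i\alpha_\tau^i(\hat V_{t_i}-V^*)$, a martingale difference sum $M_\tau := \gamma\sum_i\alpha_\tau^i(V^*(s_{t_i+1})-\E V^*)$, and the accumulated bonus $\sum_i \alpha_\tau^i b_i$.

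Next I would control $M_\tau$ by Azuma's inequality (Lemma~\ref{lem: azuma}) applied to the filtration generated by the visits. Each increment has magnitude at most $\alpha_\tau^i\spn(V^*)\le 2\alpha_\tau^i\spn(v^*)$ by part~2 of Lemma~\ref{lem: discounted average connection}, and $\sum_{i=1}^\tau(\alpha_\tau^i)^2\le 2H/\tau$ by part~2 of Lemma~\ref{lem: learning rate}. Azuma then yields $|M_\tau|\le 4\spn(v^*)\sqrt{H/\tau\,\ln(2T/\delta)}$ with high probability for the fixed pair $(s,a)$ and fixed $\tau$; a union bound over the at most $SAT$ choices of $(s,a,\tau)$ (absorbing the mild logarithmic inflation into the constant, or noting the algorithm's bonus already uses $\ln(2T/\delta)$) handles all relevant events simultaneously. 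Meanwhile, part~1 of Lemma~\ref{lem: learning rate} gives $\sum_i\alpha_\tau^i b_i \in [1,2]\cdot 4\spn(v^*)\sqrt{H/\tau\,\ln(2T/\delta)}$, so the bonus is always at least as large as $|M_\tau|$ and at most $8\spn(v^*)\sqrt{H/\tau\,\ln(2T/\delta)}$.

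For the upper bound on $\hat Q_{t+1}(s,a) - Q^*(s,a)$, I would combine the three error terms directly: $\alpha_\tau^0 H$ from the initial bias, $\gamma\sum_i\alpha_\tau^i(\hat V_{t_i}-V^*)$ from the signal, and $|M_\tau| + \sum_i\alpha_\tau^i b_i \le 12\spn(v^*)\sqrt{H/\tau\,\ln(2T/\delta)}$ from noise plus bonus, and then use $\hat Q_{t+1}\le Q_{t+1}$ by the clipping step. For the lower bound (optimism), I would proceed by induction on $t$: assuming $\hat V_s\ge V^*$ pointwise for all $s\le t$, the recursive signal $\gamma\sum_i\alpha_\tau^i(\hat V_{t_i}-V^*)$ is nonnegative, $\alpha_\tau^0(H-Q^*)\ge 0$, and the bonus dominates $|\gamma M_\tau|$, so $Q_{t+1}(s,a)\ge Q^*(s,a)$; combining with the inductive hypothesis $\hat Q_t(s,a)\ge Q^*(s,a)$ through the $\min$ in the update preserves $\hat Q_{t+1}(s,a)\ge Q^*(s,a)$ and hence $\hat V_{t+1}\ge V^*$.

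The main obstacle is keeping the induction and the high-probability event coherent: optimism needs $|M_\tau|\le \sum_i\alpha_\tau^i b_i$ to hold uniformly over all $t$ and $(s,a)$ simultaneously before we can use $\hat V_{t_i}-V^*\ge 0$ inside the signal term. Once the union bound is fixed at the start and the constants in $b_\tau$ are chosen so that the bonus uniformly majorizes the Azuma tail (which is why the factor $4$ appears in $b_\tau$ while the Azuma bound contributes $4$ and is dominated by $8$ from $\sum\alpha_\tau^i b_i$), the induction goes through cleanly and both directions of the stated inequality follow.
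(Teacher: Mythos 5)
Your proposal reproduces the paper's argument almost step for step: the same unrolling of Eq.~\eqref{eqn: q update} and of the discounted Bellman equation, the same four-term decomposition (initial bias, recursive signal $\gamma\sum_i\alpha_\tau^i[\hat{V}_{t_i}(s_{t_i+1})-V^*(s_{t_i+1})]$, martingale term, accumulated bonus), the same use of Lemma~\ref{lem: learning rate} and Azuma's inequality, and the same induction through the clipping $\hat{Q}_{t+1}=\min\{\hat{Q}_t,Q_{t+1}\}$ for the optimism direction.

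The one step that does not go through as you wrote it is the union bound. If you union over all $SAT$ triples $(s,a,\tau)$, you face a dilemma: either you keep the per-event failure probability at $\delta/T$ (so that the per-event deviation is $4\gamma\spn(v^*)\sqrt{\tfrac{H}{\tau}\ln\tfrac{2T}{\delta}}$), in which case the total failure probability is $SA\delta$ rather than $\delta$; or you shrink it to $\delta/(SAT)$, in which case Azuma gives a deviation of order $4\gamma\spn(v^*)\sqrt{\tfrac{H}{\tau}\ln\tfrac{2SAT}{\delta}}$. For the upper bound the extra $\ln(SA)$ could indeed be absorbed into constants, but for the lower bound (optimism) it cannot: there you need the accumulated bonus, whose lower bound is exactly $4\spn(v^*)\sqrt{\tfrac{H}{\tau}\ln\tfrac{2T}{\delta}}$ because $b_\tau$ is fixed by the algorithm, to dominate the martingale deviation, and $\ln\tfrac{2SAT}{\delta}>\ln\tfrac{2T}{\delta}$ breaks that domination. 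Your parenthetical remark that ``the algorithm's bonus already uses $\ln(2T/\delta)$'' is precisely the source of the problem, not a resolution of it. The paper closes this gap by observing that as $t$ ranges over $[T]$ and $(s,a)$ ranges over all pairs, the martingale sum only takes $T$ distinct nontrivial forms (it changes only at a timestep where $(s,a)$ is actually visited), so a union over just $T$ events with per-event probability $\delta/T$ suffices, and the resulting deviation $4\gamma\spn(v^*)\sqrt{\tfrac{H}{\tau}\ln\tfrac{2T}{\delta}}$ is then dominated by the bonus. With that counting substituted for your $SAT$ union bound, the rest of your argument matches the paper's proof.
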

\begin{proof}
    Recursively substituting $Q_t(s, a)$ in Eq.~\eqref{eqn: q update} of the algorithm, we have
    \begin{align*}
        Q_{t+1}(s,a) = H\alpha_\tau^0 + \sum_{i=1}^\tau \alpha^i_\tau \left[r(s,a) + \gamma\hat{V}_{t_i}(s_{t_i+1})\right] + \sum_{i=1}^\tau \alpha_\tau^i b_i.  
    \end{align*}
    Moreover, since $\sum_{i=1}^\tau \alpha_\tau^i = 1$ (Lemma \ref{lem: learning rate}), By Bellman equation we have
    \begin{align*}
        Q^*(s,a) = \alpha_\tau^0 Q^*(s,a) + \sum_{i=1}^\tau \alpha_\tau^i \left[ r(s,a) + \gamma \E_{s'\sim p(\cdot|s,a)}V^*(s') \right].
    \end{align*}
    Taking their difference and adding and subtracting a term $\gamma\sum_{i=1}^\tau \alpha_\tau^i V^*(s_{t_i+1})$ lead to: 
    \begin{align*}
        Q_{t+1}(s,a) - Q^*(s,a) = \alpha^0_\tau\left(H- Q^*(s,a)\right) &+ \gamma \sum_{i=1}^\tau \alpha_\tau^i \left[ \hat{V}_{t_i}\left(s_{t_i+1}\right) - V^*\left(s_{t_i+1}\right) \right] \\
        &+ \gamma \sum_{i=1}^\tau \alpha_\tau^i \left[V^*(s_{t_i+1}) - \E_{s'\sim p(\cdot | s,a)} V^*(s') \right] + \sum_{i=1}^\tau \alpha_\tau^i b_i. 
    \end{align*}
    The first term is upper bounded by $\alpha_\tau^0 H$ clearly and lower bounded by $0$ since $Q^*(s,a) \leq \sum_{i=0}^\infty \gamma^i = \frac{1}{1-\gamma} = H$.

    The third term is a martingale difference sequence with each term bounded in $[-\gamma \alpha_\tau^i\spn(V^*), \gamma\alpha_\tau^i \spn(V^*)]$. Therefore, by Azuma's inequality (Lemma \ref{lem: azuma}), its absolute value is bounded by $\gamma \spn(V^*)\sqrt{2\sum_{i=1}^\tau (\alpha_\tau^i)^2\ln \frac{2T}{\delta}} \leq 2\gamma \spn(V^*)\sqrt{\frac{H}{\tau}\ln \frac{2T}{\delta}} \leq 4\gamma \spn(v^*)\sqrt{\frac{H}{\tau}\ln \frac{2T}{\delta}}$ with probability at least $1 - \frac{\delta}{T}$, where the first inequality is by Lemma \ref{lem: learning rate} and the last inequality is by Lemma \ref{lem: discounted average connection}. 
Note that when $t$ varies from $1$ to $T$ and $(s,a)$ varies over all possible state-action pairs, the third term only takes $T$ different forms.
Therefore, by taking a union bound over these $T$ events, we have: with probability $1-\delta$, the third term is bounded by $4\gamma \spn(v^*)\sqrt{\frac{H}{\tau}\ln \frac{2T}{\delta}}$ in absolute value for all $t$ and $(s,a)$.
    
The forth term is lower bounded by $4 \spn(v^*)\sqrt{\frac{H}{\tau}\ln \frac{2T}{\delta}}$ and upper bounded by $8 \spn(V^*)\sqrt{\frac{H}{\tau}\ln \frac{2T}{\delta}}$, by Lemma \ref{lem: learning rate}.

Combining all aforementioned upper bounds and the fact $\hat{Q}_{t+1}(s,a) = \min\left\{ \hat{Q}_{t}(s,a), Q_{t+1}(s,a) \right\} \leq Q_{t+1}(s,a)$ we prove the upper bound in the lemma statement.
To prove the lower bound, further note that the second term can be written as $\gamma \sum_{i=1}^\tau \alpha_\tau^i \left[ \max_a\hat{Q}_{t_i}(s_{t_i+1},a) - \max_a Q^*(s_{t_i+1},a) \right]$.
Using a direct induction with all aforementioned lower bounds and the fact $\hat{Q}_{t+1}(s,a) = \min\left\{ \hat{Q}_{t}(s,a), Q_{t+1}(s,a) \right\}$ we prove the lower bound in the lemma statement as well.
%
%
\end{proof}

\subsection{Proof of Lemma \ref{lem: discounted bellman}}
\textbf{Lemma \ref{lem: discounted bellman}.} With probability at least $1 - \delta$,
\begin{align*}
&\sum_{t=1}^T \left(Q^*(s_t, a_t) - \gamma V^*(s_t) - r(s_t, a_t)\right) \leq 2 \spn(v^*) \sqrt{2T \ln \frac{1}{\delta}} + 2 \spn(v^*).
\end{align*}

\begin{proof}
By Bellman equation for the discounted problem, we have $Q^*(s_t, a_t) - \gamma V^*(s_t) - r(s_t, a_t) = \gamma \left(\E_{s' \sim p(\cdot | s_t, a_t)} [V^*(s')] - V^*(s_t)\right)$. Adding and subtracting $V^*(s_{t+1})$ and summing over $t$ we will get
\begin{align*}
\sum_{t=1}^T &\left(Q^*(s_t, a_t) - \gamma V^*(s_t) - r(s_t, a_t)\right) = \gamma \sum_{t=1}^T \left(\E_{s' \sim p(\cdot | s_t, a_t)} [V^*(s')] - V^*(s_{t+1})\right) + \gamma \sum_{t=1}^T \left(V^*(s_{t+1}) - V^*(s_t)\right)
\end{align*}
The summands of the first term on the right hand side constitute a martingale difference sequence. Thus, by Azuma's inequality (Lemma \ref{lem: azuma}) and the fact that $\spn(V^*) \leq 2 \spn(v^*)$ (Lemma \ref{lem: discounted average connection}), this term is upper bounded by $2\gamma \spn(v^*) \sqrt{2T \ln \frac{1}{\delta}}$, with probability at least $1 - \delta$. The second term is equal to $\gamma (V^*(s_{T+1}) - V^*(s_1))$ which is upper bounded by $2\gamma \spn(v^*)$. Recalling $\gamma < 1$ completes the proof.
\end{proof}


\section{Omitted Proofs in Section~\ref{section: policy optimization} --- Proofs for Lemma~\ref{lemma: PG unbiasedness} and Lemma~\ref{lemma: stability lemma}}
\subsection{Auxiliary Lemmas}
In this subsection, we state several lemmas that will be helpful in the analysis. 
\begin{lemma}[{\citep[Section 4.5]{levin2017markov}}]
    \label{lemma: t mix epsilon lemma}
    Define 
    \begin{align*}
     \mix(\epsilon):=\max_{\pi}\min\left\{ t\geq 1~\Big\vert ~ \| (P^{\pi})^t(s,\cdot) - \mu^\pi \|_1\leq \epsilon, \forall s \right\}, 
    \end{align*}
    so that $\mix = \mix(\frac{1}{4})$. We have
    \begin{align*}
         \mix(\epsilon)\leq \left\lceil \log_2 \frac{1}{\epsilon} \right\rceil\mix
    \end{align*}
for any $\epsilon\in (0, \frac{1}{2}]$. 
\end{lemma}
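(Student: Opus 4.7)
The plan is to reduce to the standard submultiplicativity of the TV-distance to stationarity, then convert back to the $\ell_1$ norm used in the paper's definition of $\mix$. Since the definition of $\mix(\epsilon)$ takes the max over policies, it suffices to fix an arbitrary policy $\pi$ and work with the induced Markov chain $(P^\pi, \mu^\pi)$; the bound is then uniform in $\pi$.

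For the fixed chain, define $d^\pi(t) := \max_s d_{TV}\bigl(P^{\pi,t}(s,\cdot),\mu^\pi\bigr)$. The definition of $\mix$ is in terms of $\|\cdot\|_1$ and we have $\|\cdot\|_1 = 2 d_{TV}(\cdot,\cdot)$ on probability vectors, so the definition of $\mix$ yields $d^\pi(\mix) \leq \tfrac{1}{8}$. I would then invoke the textbook submultiplicative bound (Levin--Peres--Wilmer, Lemma~4.12, cited in the statement): for every positive integer $k$,
\[
d^\pi(k\mix) \;\leq\; \bigl(2\,d^\pi(\mix)\bigr)^k \;\leq\; \bigl(\tfrac{1}{4}\bigr)^k.
\]
Converting back to $\ell_1$ gives $\|P^{\pi, k\mix}(s,\cdot) - \mu^\pi\|_1 \leq 2\cdot 4^{-k}$ uniformly in $s$.

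Finally, I would choose $k = \lceil \log_2(1/\epsilon) \rceil$ and verify the resulting value is $\leq \epsilon$ on the prescribed range. Since $4^{-k} \leq 4^{-\log_2(1/\epsilon)} = \epsilon^2$, we get $2 \cdot 4^{-k} \leq 2\epsilon^2$, and $2\epsilon^2 \leq \epsilon$ exactly when $\epsilon \leq \tfrac{1}{2}$, which is the hypothesis. Hence for every $s$, $\|P^{\pi, k\mix}(s,\cdot) - \mu^\pi\|_1 \leq \epsilon$, so $\mix(\epsilon) \leq k\mix = \lceil \log_2(1/\epsilon)\rceil \mix$ for this $\pi$. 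Taking the max over $\pi$ gives the stated bound.

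There is essentially no combinatorial obstacle here since submultiplicativity is a standard fact; the only real care is in tracking the factor-of-two conversion between $\ell_1$ and $d_{TV}$ and checking that $\lceil \log_2(1/\epsilon)\rceil$ (rather than something slightly larger) suffices on the entire interval $\epsilon\in(0,\tfrac12]$, which is where the restriction $\epsilon\leq 1/2$ is used.
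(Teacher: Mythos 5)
Your proof is correct and is essentially the argument the paper relies on: the lemma is stated with a citation to Levin--Peres--Wilmer Section 4.5 rather than proved in-text, and the submultiplicativity bound $d(k t)\leq\bigl(2d(t)\bigr)^{k}$ you invoke is exactly the cited fact. You also handle the only delicate point correctly, namely the factor-of-two conversion between the paper's $\ell_1$ definition ($\|\cdot\|_1\leq\tfrac14$ at $\mix$, i.e.\ total-variation distance $\leq\tfrac18$) and the TV-based textbook statement, with the final check $2\epsilon^{2}\leq\epsilon$ showing precisely where the hypothesis $\epsilon\leq\tfrac12$ is used.
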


\begin{corollary}
    \label{lemma: mixing time lemma}
     For an ergodic MDP with mixing time $\mix$, we have
    \begin{align*}
        \|(P^{\pi})^t(s, \cdot) - \mu^{\pi}\|_1 \leq 2\cdot 2^{- \frac{t}{\mix} }, \qquad \qquad \forall \pi, s
    \end{align*}
    for all $\pi$ and all $t\geq 2\mix$. 
\end{corollary}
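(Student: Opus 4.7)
My plan is to derive this corollary directly from Lemma~\ref{lemma: t mix epsilon lemma} by choosing $\epsilon$ appropriately and invoking monotonicity of the total-variation distance to stationarity. Specifically, I would set $\epsilon := 2 \cdot 2^{-t/\mix} = 2^{1 - t/\mix}$ for the given $t \geq 2\mix$, and aim to show two things: (i) $\epsilon \in (0, 1/2]$ so that Lemma~\ref{lemma: t mix epsilon lemma} applies, and (ii) $\mix(\epsilon) \leq t$, so that by the definition of $\mix(\epsilon)$ together with monotonicity we obtain the claimed bound.

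For step (i), the condition $\epsilon \leq 1/2$ is equivalent to $2^{-t/\mix} \leq 1/4$, i.e., $t \geq 2\mix$, which is exactly the hypothesis. For step (ii), Lemma~\ref{lemma: t mix epsilon lemma} gives
\[
\mix(\epsilon) \leq \left\lceil \log_2 \tfrac{1}{\epsilon} \right\rceil \mix = \left\lceil \tfrac{t}{\mix} - 1 \right\rceil \mix \leq \tfrac{t}{\mix} \cdot \mix = t,
\]
using $\lceil y \rceil \leq y + 1$ with $y = t/\mix - 1$. By definition of $\mix(\epsilon)$, this yields $\|(P^\pi)^{\mix(\epsilon)}(s,\cdot) - \mu^\pi\|_1 \leq \epsilon$ for every $\pi$ and $s$.

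To finish, I would invoke the standard fact that $t \mapsto \|(P^\pi)^t(s,\cdot) - \mu^\pi\|_1$ is non-increasing in $t$: since $\mu^\pi P^\pi = \mu^\pi$, one has $(P^\pi)^{t+1}(s,\cdot) - \mu^\pi = \bigl((P^\pi)^t(s,\cdot) - \mu^\pi\bigr) P^\pi$, and the row-stochastic matrix $P^\pi$ is a contraction in $\ell_1$ when acting on mean-zero vectors. Combined with $\mix(\epsilon) \leq t$, this gives $\|(P^\pi)^t(s,\cdot) - \mu^\pi\|_1 \leq \epsilon = 2 \cdot 2^{-t/\mix}$, completing the proof.

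There is no real obstacle here; the only point that needs care is the ceiling estimate $\lceil t/\mix - 1\rceil \leq t/\mix$ and the reminder that TV distance to the stationary distribution is monotone in $t$, both of which are elementary. The essential content is already packaged inside Lemma~\ref{lemma: t mix epsilon lemma}; the corollary is just a convenient reformulation tailored to later use, where bounds of the form $2^{-t/\mix}$ are needed (e.g., when choosing the window length $N = 4\mix \log_2 T$ in Algorithm~\ref{alg: estimate} to make bias terms of order $1/T$).
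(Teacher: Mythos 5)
Your proposal is correct and follows essentially the same route as the paper: choose $\epsilon = 2\cdot 2^{-t/\mix}$, note $t\geq 2\mix$ gives $\epsilon\in(0,\tfrac12]$, and apply Lemma~\ref{lemma: t mix epsilon lemma} with $\lceil t/\mix - 1\rceil \mix \leq t$. Your explicit justification of the monotonicity of $\|(P^\pi)^t(s,\cdot)-\mu^\pi\|_1$ in $t$ (via $\ell_1$-contraction under the row-stochastic $P^\pi$) is a point the paper leaves implicit, and is a welcome addition.
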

\begin{proof}
    Lemma~\ref{lemma: t mix epsilon lemma} implies for any $\epsilon\in (0,\frac{1}{2}]$, as long as $t\geq \lceil \log_2(1/\epsilon) \rceil \mix$, we have
    \begin{align*}
        \| (P^{\pi})^t(s,\cdot) - \mu^\pi \|_1\leq \epsilon. 
    \end{align*}
    This condition can be satisfied by picking $\log_2(1/\epsilon)=\frac{t}{\mix}-1$, which leads to $\epsilon=2\cdot 2^{-\frac{t}{\mix}}$. 
\end{proof}

\begin{corollary}
\label{corollary: tail sum mixing time}
     Let $N=4\mix \log_2 T$.  For an ergodic MDP with mixing time $\mix < T/4$, we have for all $\pi$:
     \begin{align*}
          \sum_{t=N}^\infty \|(P^{\pi})^t(s, \cdot) - \mu^{\pi}\|_1 \leq \frac{1}{T^3}. 
     \end{align*}
\end{corollary}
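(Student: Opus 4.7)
The plan is to bound each summand pointwise using Corollary~\ref{lemma: mixing time lemma} and then evaluate the resulting geometric series. First I would verify that every $t$ in the range of summation lies in the regime where the exponential decay bound of that corollary is valid: since $N = 4\mix \log_2 T \geq 2\mix$ whenever $\log_2 T \geq 1/2$, which is implied by the standing assumption $\mix < T/4$ (forcing $T > 4$), the bound $\|(P^\pi)^t(s,\cdot) - \mu^\pi\|_1 \leq 2 \cdot 2^{-t/\mix}$ applies for every $t \geq N$.

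Next I would plug this in and evaluate the tail as a geometric series,
\begin{align*}
     \sum_{t=N}^\infty \|(P^\pi)^t(s,\cdot) - \mu^\pi\|_1 \;\leq\; 2\sum_{t=N}^\infty 2^{-t/\mix} \;=\; \frac{2 \cdot 2^{-N/\mix}}{1 - 2^{-1/\mix}}.
\end{align*}
The definition $N = 4\mix \log_2 T$ is tailored so that $2^{-N/\mix} = T^{-4}$, which is precisely where the target rate $T^{-3}$ comes from: one factor of $T$ is absorbed by the geometric normalization $1/(1 - 2^{-1/\mix})$, which scales like $\mix$.

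The only remaining step, and the one requiring the most care, is to lower bound $1 - 2^{-1/\mix}$ sharply enough that the constant $4$ in the definition of $N$ is sufficient. I would apply the elementary inequality $1 - e^{-x} \geq x/(1+x)$ (valid for all $x \geq 0$) with $x = (\ln 2)/\mix$, which gives $1 - 2^{-1/\mix} \geq (\ln 2)/(\mix + \ln 2)$. Combining this with the hypothesis $\mix < T/4$ then reduces the claim to an arithmetic inequality that holds once $T$ exceeds a small absolute constant. The crude estimate $1 - 2^{-1/\mix} \geq 1/(2\mix)$ would fall short by a constant factor strictly greater than $1$, so using the slightly tighter inequality above is the main — and really only — obstacle to a clean proof; everything else is a short and routine geometric-series computation.
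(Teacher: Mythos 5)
Your proposal is correct and follows essentially the same route as the paper: bound each summand by $2\cdot 2^{-t/\mix}$ via Corollary~\ref{lemma: mixing time lemma}, sum the geometric tail to get $\frac{2\cdot 2^{-N/\mix}}{1-2^{-1/\mix}} = \frac{2T^{-4}}{1-2^{-1/\mix}}$, and control the denominator by an $O(\mix)$ factor (the paper uses $\frac{1}{1-2^{-1/\mix}}\leq \frac{2\mix}{\ln 2}$ where you use $\frac{\mix+\ln 2}{\ln 2}$), finishing with $\mix < T/4$. One small aside in your write-up is off: the ``crude'' estimate $1-2^{-1/\mix}\geq \frac{1}{2\mix}$ (valid for $\mix\geq 1$) would in fact suffice, since it yields exactly $4\mix\, T^{-4}\leq T^{-3}$ under $\mix< T/4$, so no sharper inequality is actually needed.
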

\begin{proof}
    By Corollary~\ref{lemma: mixing time lemma}, 
    \begin{align*}
         \sum_{t=N}^\infty \|(P^{\pi})^t(s, \cdot) - \mu^{\pi}\|_1 \leq  \sum_{t=N}^\infty 2\cdot 2^{-\frac{t}{\mix}} = \frac{2\cdot 2^{-\frac{N}{\mix}}}{1-2^{-\frac{1}{\mix}}}\leq \frac{2\mix}{\ln 2}\cdot 2\cdot 2^{-\frac{N}{\mix}} = \frac{2\mix}{\ln 2}\cdot 2\cdot \frac{1}{T^4} \leq \frac{1}{T^3}. 
    \end{align*}
\end{proof}

\begin{lemma}[Stated in \citep{wang2017primal} without proof]
\label{lemma: bounded span of pi}
For an ergodic MDP with mixing time $\mix$, and any $\pi, s, a$, 
\begin{align*}
      |v^\pi(s)| &\leq 5\mix, \\
      |q^\pi(s,a)| &\leq 6\mix. 
\end{align*}
\end{lemma}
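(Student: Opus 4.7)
The plan is to first bound $|v^\pi(s)|$ using the explicit series representation in Eq.~\eqref{eqn:v_pi_identity}, and then derive the bound on $|q^\pi(s,a)|$ from the Bellman equation together with the bound on $v^\pi$.

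For the first bound, I would start from the identity
\[
v^\pi(s) = \sum_{t=0}^\infty \bigl(\mathbf{e}_s^\top (P^\pi)^t - (\mu^\pi)^\top\bigr) r^\pi,
\]
apply Hölder's inequality to each term, and use $\|r^\pi\|_\infty \le 1$ (since rewards lie in $[0,1]$) to obtain
\[
|v^\pi(s)| \le \sum_{t=0}^\infty \bigl\|(P^\pi)^t(s,\cdot) - \mu^\pi\bigr\|_1.
\]
I would then split the sum at $t = 2\mix$: for the initial block $0 \le t < 2\mix$ each summand is at most $2$ (the maximum $\ell_1$ distance between two probability distributions), contributing at most $4\mix$. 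For the tail $t \ge 2\mix$, I would invoke Corollary~\ref{lemma: mixing time lemma} to get $\|(P^\pi)^t(s,\cdot) - \mu^\pi\|_1 \le 2 \cdot 2^{-t/\mix}$, and sum a geometric series:
\[
\sum_{t=2\mix}^\infty 2\cdot 2^{-t/\mix} = \frac{2 \cdot 2^{-2}}{1 - 2^{-1/\mix}} \le \frac{\mix}{\ln 2},
\]
where the last inequality uses $1 - 2^{-1/\mix} \ge \tfrac{\ln 2}{2\mix}$ from the convexity bound $1-e^{-x}\ge x/2$ for $x\in(0,1]$. Adding the two contributions gives $|v^\pi(s)| \le 4\mix + \mix/\ln 2 \le 5\mix$.

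For the second bound, I would plug the above estimate into the Bellman equation
\[
q^\pi(s,a) = r(s,a) - J^\pi + \E_{s'\sim p(\cdot|s,a)}[v^\pi(s')],
\]
and use $|r(s,a)|\le 1$, $|J^\pi|\le 1$, and $|v^\pi(s')|\le 5\mix$ to obtain $|q^\pi(s,a)| \le 5\mix + 2 \le 6\mix$ (using $\mix \ge 2$, which is harmless: the case of small $\mix$ can be handled separately or by a slightly looser constant).

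I do not anticipate serious obstacles here; this is essentially a careful computation combining the series representation of $v^\pi$ with the geometric mixing bound already stated in Corollary~\ref{lemma: mixing time lemma}. The only subtlety is choosing a clean split point for the sum so that the final constant comes out close to $5$; shifting the split point or slightly loosening the constants would not affect later uses of the lemma in the regret analysis.
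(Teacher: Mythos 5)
Your decomposition is the same as the paper's: the series identity \eqref{eqn:v_pi_identity} for $v^\pi$, H\"older with $\|r^\pi\|_\infty\le 1$, a split at $t=2\mix$ with the first block contributing $4\mix$, and Corollary~\ref{lemma: mixing time lemma} for the tail, followed by the Bellman equation for $q^\pi$. The one place your write-up does not deliver the stated constant is the tail estimate: with $1-2^{-1/\mix}\ge \frac{\ln 2}{2\mix}$ your tail bound is $\mix/\ln 2\approx 1.44\,\mix$, and then $4\mix+\mix/\ln 2\approx 5.44\,\mix$, which is \emph{not} $\le 5\mix$, so the final inequality in your chain is false as written. The fix is easy and either of two routes works: (i) do what the paper does and group the tail into blocks of length $\mix$, bounding every term in the $i$-th block ($i\ge 2$) by $2\cdot 2^{-i}$, so the tail is at most $\sum_{i\ge 2} 2\cdot 2^{-i}\mix=\mix$; or (ii) keep your direct geometric series but use the sharp estimate $\mix\,(1-2^{-1/\mix})\ge \tfrac12$ for all $\mix\ge 1$ (the map $\mix\mapsto \mix(1-2^{-1/\mix})$ is increasing, equals $\tfrac12$ at $\mix=1$, and tends to $\ln 2$), which gives a tail of $\frac{1/2}{1-2^{-1/\mix}}\le \mix$ and hence $|v^\pi(s)|\le 5\mix$. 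Your factor-of-two loss comes precisely from the crude bound $1-e^{-x}\ge x/2$.

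For $q^\pi$, your bound $5\mix+2\le 6\mix$ needs $\mix\ge 2$; this extra case is avoidable by noting $r(s,a)-J^\pi\in[-1,1]$ (both quantities lie in $[0,1]$), so $|q^\pi(s,a)|\le 1+5\mix\le 6\mix$ for every $\mix\ge 1$, which is essentially the paper's computation. None of the later uses of this lemma depend on the exact constants (they enter only through $\order(\mix)$ bounds), so these are cosmetic repairs, but as written your chain of inequalities does not establish the constants $5$ and $6$ claimed in the lemma statement.
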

\begin{proof}
Using the identity of Eq.~\eqref{eqn:v_pi_identity} we have
\begin{align*}      
\left\lvert  v^{\pi}(s)  \right\rvert 
&= \left\lvert  \sum_{t=0}^{\infty} ((P^\pi)^t(s, \cdot)-\mu^\pi)^\top r^\pi  \right\rvert \\
&\leq  \sum_{t=0}^{\infty} \left\|(P^\pi)^t(s, \cdot)-\mu^\pi\right\|_1  \|r^\pi\|_\infty \\
&\leq  \sum_{t=0}^{2\mix-1}  \left\|(P^\pi)^t(s, \cdot)-\mu^\pi\right\|_1 + \sum_{i=2}^\infty \sum_{t=i \mix}^{(i+1)\mix-1}  \left\|(P^\pi)^t(s, \cdot)-\mu^\pi\right\|_1 \\
&\leq 4\mix + \sum_{i=2}^\infty 2\cdot 2^{-i} \mix  \tag{by $\left\|(P^\pi)^t(s, \cdot)-\mu^\pi\right\|_1 \leq 2$ and Corollary~\ref{lemma: mixing time lemma}}\\
&\leq 5\mix,
\end{align*}
and thus
\begin{align*}
      \left\lvert q^{\pi}(s,a) \right\rvert = \left\lvert r(s,a) + \E_{s'\sim p(\cdot|s,a)}[v^\pi(s')] \right\rvert \leq 1 + 5\mix \leq 6\mix. 
\end{align*}
\end{proof}


\begin{lemma}[{\citep[Lemma 2]{neu2013online}}] 
\label{lemma: reward diff lemma}
For any two policies $\pi, \tilde{\pi}$, 
\begin{align*}
      J^{\tilde{\pi}} - J^{\pi} = \sum_{s} \sum_{a} \mu^{\tilde{\pi}}(s)\left(\tilde{\pi}(a|s)-\pi(a|s)\right) q^{\pi}(s,a). 
\end{align*}
\end{lemma}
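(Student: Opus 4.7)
The plan is to prove the identity by expanding $J^{\tilde{\pi}}$ as an integral against its stationary distribution, substituting the Bellman equation for $\pi$ (not $\tilde{\pi}$) to rewrite $r(s,a)$, and then invoking stationarity $(\mu^{\tilde{\pi}})^\top P^{\tilde{\pi}} = (\mu^{\tilde{\pi}})^\top$ to collapse the transition term. This is the standard ``performance difference lemma'' manipulation and requires no dynamic programming beyond one application of Bellman.

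First I would use the formula $J^{\tilde{\pi}} = (\mu^{\tilde{\pi}})^\top r^{\tilde{\pi}} = \sum_{s,a}\mu^{\tilde{\pi}}(s)\tilde{\pi}(a|s)\,r(s,a)$ stated in the paper. Then I would substitute the Bellman equation for policy $\pi$, namely $r(s,a) = J^{\pi} + q^{\pi}(s,a) - \sum_{s'} p(s'|s,a)\,v^{\pi}(s')$, into this expression. Since $\sum_{s,a}\mu^{\tilde{\pi}}(s)\tilde{\pi}(a|s)=1$, the $J^{\pi}$ term factors out as a constant, yielding
\[
J^{\tilde{\pi}} - J^{\pi} = \sum_{s,a}\mu^{\tilde{\pi}}(s)\tilde{\pi}(a|s)\,q^{\pi}(s,a) \;-\; \sum_{s,a,s'}\mu^{\tilde{\pi}}(s)\tilde{\pi}(a|s)\,p(s'|s,a)\,v^{\pi}(s').
\]

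The key step is to simplify the last term. Marginalizing over $(s,a)$ there gives exactly one step of the Markov chain induced by $\tilde{\pi}$, so it equals $\sum_{s'}\bigl((\mu^{\tilde{\pi}})^\top P^{\tilde{\pi}}\bigr)(s')\,v^{\pi}(s')$, which by stationarity of $\mu^{\tilde{\pi}}$ is just $\sum_{s'}\mu^{\tilde{\pi}}(s')\,v^{\pi}(s')$. Expanding $v^{\pi}(s') = \sum_{a}\pi(a|s')q^{\pi}(s',a)$ and relabeling $s'\to s$ gives $\sum_{s,a}\mu^{\tilde{\pi}}(s)\pi(a|s)\,q^{\pi}(s,a)$. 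Combining this with the first term yields the claimed identity
\[
J^{\tilde{\pi}} - J^{\pi} = \sum_{s,a}\mu^{\tilde{\pi}}(s)\bigl(\tilde{\pi}(a|s)-\pi(a|s)\bigr)q^{\pi}(s,a).
\]

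Conceptually there is no obstacle: the proof uses only the two ingredients already introduced, the average-reward Bellman equation and the invariance identity $(\mu^{\tilde{\pi}})^\top P^{\tilde{\pi}} = (\mu^{\tilde{\pi}})^\top$. The only thing to be careful about is index bookkeeping in the relabeling step; asymmetry matters since the Bellman equation is applied for $\pi$ while the expectation is taken under $\mu^{\tilde{\pi}}$, and this asymmetry is precisely what produces the difference $\tilde{\pi}(a|s)-\pi(a|s)$ in the final answer.
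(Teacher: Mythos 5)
Your proposal is correct and follows essentially the same argument as the paper: both apply the average-reward Bellman equation for $\pi$ under the distribution $\mu^{\tilde{\pi}}(s)\tilde{\pi}(a|s)$, use the stationarity identity $(\mu^{\tilde{\pi}})^\top P^{\tilde{\pi}} = (\mu^{\tilde{\pi}})^\top$ to collapse the transition term, and expand $v^\pi(s)=\sum_a \pi(a|s)q^\pi(s,a)$; the only difference is that you start from $J^{\tilde{\pi}}$ and substitute for $r(s,a)$, while the paper starts from $\sum_{s,a}\mu^{\tilde{\pi}}(s)\tilde{\pi}(a|s)q^\pi(s,a)$ and substitutes for $q^\pi(s,a)$, which is the same manipulation read in the opposite direction.
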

\begin{proof}
Using Bellman equation we have
      \begin{align*}
          &\sum_{s} \sum_{a} \mu^{\tilde{\pi}}(s)\tilde{\pi}(a|s) q^{\pi}(s,a) \\
          &= \sum_{s} \sum_{a} \mu^{\tilde{\pi}}(s)\tilde{\pi}(a|s) \left(r(s,a)-J^{\pi}+\sum_{s'}p(s'|s,a)v^\pi(s')\right)\\
          &= J^{\tilde{\pi}}-J^\pi + \sum_{s'}\mu^{\tilde{\pi}}(s')v^{\pi}(s') \\
          &= J^{\tilde{\pi}}-J^\pi + \sum_{s}\mu^{\tilde{\pi}}(s)v^{\pi}(s) \\
          &= J^{\tilde{\pi}}-J^\pi + \sum_{s}\sum_a \mu^{\tilde{\pi}}(s)\pi(a|s)q^\pi(s,a), 
      \end{align*}
      where the second equality uses the facts $ J^{\tilde{\pi}} = \sum_{s} \sum_{a} \mu^{\tilde{\pi}}(s)\tilde{\pi}(a|s) r(s,a)$ and $ \sum_{s,a}\mu^{\tilde{\pi}}(s)\tilde{\pi}(a|s)p(s'|s,a)=\mu^{\tilde{\pi}}(s')$. Rearranging gives the desired equality. 
\end{proof}

\begin{lemma}
\label{lemma: visit at least once}
      Let $\calI=\{t_1+1, t_1+2, \ldots, t_2\}$ be a certain period of an episode $k$ of Algorithm~\ref{alg: PG-online-2} with $|\calI|\geq N=4\mix \log_2 T$. Then for any $s$, the probability that the algorithm never visits $s$ in $\calI$ is upper bounded by 
\begin{align*}
      \left(1-\frac{3\mu^{\pi_k}(s)}{4}\right)^{\left\lfloor \frac{|\calI|}{N} \right\rfloor}.
\end{align*}
\end{lemma}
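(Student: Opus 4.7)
\textbf{Proof plan for Lemma~\ref{lemma: visit at least once}.}
My plan is to split the interval $\calI$ into $m=\lfloor |\calI|/N\rfloor$ consecutive disjoint sub-blocks of length exactly $N$ (discarding any leftover tail), and then argue that conditioned on the past, each sub-block independently has a large probability of visiting $s$, driven by the mixing-time bound from Corollary~\ref{lemma: mixing time lemma}. Since $\pi_k$ is fixed throughout episode $k$, the dynamics inside $\calI$ form a time-homogeneous Markov chain with transition matrix $P^{\pi_k}$, which is exactly the setting that the mixing bound is designed for.

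Concretely, let $B_1,\ldots,B_m$ be the sub-blocks, let $\sigma_i$ denote the state at the \emph{start} of $B_i$, and let $E_i$ be the event that $s$ is not visited during $B_i$. Then
\[
  \Pr[\,s \text{ not visited in } \calI\,] \;\le\; \Pr[E_1\cap\cdots\cap E_m] \;=\; \prod_{i=1}^m \Pr[E_i \mid E_1,\ldots,E_{i-1},\sigma_i],
\]
where the equality is just the chain rule. To bound each factor, I will use the trivial observation that $E_i$ implies $s$ is not the state at the \emph{last} step of $B_i$, so
\[
  \Pr[E_i \mid \sigma_i] \;\le\; 1-\bigl(P^{\pi_k}\bigr)^{N}(\sigma_i,s).
\]
Now Corollary~\ref{lemma: mixing time lemma} (with $t=N=4\mix\log_2 T$) gives $\|(P^{\pi_k})^N(\sigma_i,\cdot)-\mu^{\pi_k}\|_1 \le 2\cdot 2^{-N/\mix} = 2/T^4$, hence $(P^{\pi_k})^N(\sigma_i,s) \ge \mu^{\pi_k}(s) - 2/T^4$. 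Combining,
\[
  \Pr[E_i \mid E_1,\ldots,E_{i-1},\sigma_i] \;\le\; 1-\mu^{\pi_k}(s) + \tfrac{2}{T^4}.
\]

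The final step is to absorb the additive $2/T^4$ slack into the $\mu^{\pi_k}(s)$ term. Using $\mu^{\pi_k}(s)\ge 1/\hit$ together with the standing assumption $\hit \le T/4$, one has $\mu^{\pi_k}(s) \ge 4/T$, which comfortably exceeds $8/T^4$, so $2/T^4 \le \tfrac14 \mu^{\pi_k}(s)$. Therefore each conditional factor is at most $1-\tfrac{3}{4}\mu^{\pi_k}(s)$, and taking the product over $i=1,\ldots,m$ yields the claimed bound $\bigl(1-\tfrac{3\mu^{\pi_k}(s)}{4}\bigr)^{\lfloor |\calI|/N\rfloor}$.

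The only mildly subtle point is the conditioning: I need the mixing bound to hold \emph{conditional} on the past events $E_1,\ldots,E_{i-1}$. This is fine because conditioning on $\sigma_i$ makes the dynamics inside $B_i$ depend on the past only through $\sigma_i$ (Markov property), and the mixing estimate $(P^{\pi_k})^N(\sigma_i,s)\ge \mu^{\pi_k}(s)-2/T^4$ holds for \emph{every} starting state $\sigma_i$, so it survives further conditioning on $\{E_1,\ldots,E_{i-1}\}$. I expect this uniform-in-$\sigma_i$ feature to be the key conceptual step, while everything else reduces to bookkeeping.
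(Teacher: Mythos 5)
Your proposal is correct and follows essentially the same argument as the paper: both reduce ``never visiting $s$ in $\calI$'' to missing $s$ at $\lfloor |\calI|/N\rfloor$ checkpoints spaced $N$ steps apart, apply the mixing bound of Corollary~\ref{lemma: mixing time lemma} uniformly over the conditioning state, and absorb the $2/T^4$ slack using $\mu^{\pi_k}(s)\ge 1/\hit\ge 4/T$. The only nitpick is a harmless off-by-one (the last step of a length-$N$ block is $N-1$, not $N$, transitions after its first state), which only changes the negligible additive error by a constant factor.
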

\begin{proof}
     Consider a subset of $\calI$: $\{t_1+N, t_1+2N, \ldots\}$ which consists of at least $\left\lfloor \frac{t_2-t_1}{N} \right\rfloor$ rounds that are at least $N$-step away from each other. By Corollary~\ref{lemma: mixing time lemma}, we have for any $i$,
\begin{align*}
     \Big\lvert \Pr[s_{t_1+iN}=s ~\big|~ s_{t_1+(i-1)N}] - \mu^{\pi_k}(s)\Big\rvert \leq 2\cdot 2^{-\frac{N}{\mix}} \leq 2\cdot 2^{-4\log_2 T}\leq \frac{2}{T^4}, 
\end{align*}
that is, conditioned on the state at time $t_1+(i-1)N$, the state distribution at time $t_1+iN$ is close to the stationary distribution induced by $\pi_k$. 
Therefore we further have $\Pr[s_{t_1+iN}=s ~\big|~ s_{t_1+(i-1)N}]\geq \mu^{\pi_k}(s)-\frac{2}{T^4}\geq \frac{3}{4} \mu^{\pi_k}(s)$, where the last step uses the fact $\mu^{\pi_k}(s) \geq \frac{1}{\hit} \geq \frac{4}{T}$.
The probability that the algorithm does not visit $s$ in any of the rounds $\{t_1+N, t_1+2N, \ldots\}$ is then at most
\begin{align*}
    \left(1-\frac{3\mu^{\pi_k}(s)}{4}\right)^{\left\lfloor \frac{t_2-t_1}{N} \right\rfloor}=\left(1-\frac{3\mu^{\pi_k}(s)}{4}\right)^{\left\lfloor \frac{|\calI|}{N} \right\rfloor},
\end{align*}
finishing the proof.
\end{proof}

\subsection{Proof for Lemma~\ref{lemma: PG unbiasedness}}

\begin{proof}[Proof for Eq.\eqref{eqn: unbiased mean}]
    In this proof, we consider a specific episode $k$ and a specific state $s$. For notation simplicity, we use $\pi$ for $\pi_k$ throughout this proof, and all the expectations or probabilities are conditioned on the history before episode $k$. Suppose that when Algorithm~\ref{alg: PG-online-2} calls \estimate in episode $k$ for state $s$, it finds $M$ disjoint intervals that starts from $s$. Denote the reward estimators corresponding to the $i$-th interval as $\widehat{\beta}_{k,i}(s, \cdot)$ (i.e., the $y_i(\cdot)$ in Algorithm~\ref{alg: estimate}), and the time when the $i$-th interval starts as $\tau_i$ (thus $s_{\tau_i}=s$). Then by the algorithm, we have 
\begin{align}
    \widehat{\beta}_k(s,a) = \begin{cases}
         \frac{\sum_{i=1}^M \widehat{\beta}_{k,i}(s, a)}{M}    &\text{if\ } M>0,\\
         0 &\text{if\ } M=0.    \label{eqn: definition of q hat}
    \end{cases}  
\end{align}
Since each $\widehat{\beta}_{k,i}(s,a)$ is constructed by a length-$(N+1)$ trajectory starting from $s$ at time $\tau_i\leq kB-N$, we can calculate its \textit{conditional expectation} as follows: 
\begin{align}
     &\E\left[  \widehat{\beta}_{k,i}(s,a)   \Big\vert  s_{\tau_i}=s  \right]\nonumber \\
     &=\Pr[a_{\tau_i}=a~|~s_{\tau_i}=s] \times \frac{r(s,a) + \E\left[ \sum_{t=\tau_i+1}^{\tau_i+N} r(s_t,a_t) ~\Big|~ (s_{\tau_i},a_{\tau_i})=(s,a) \right]}{\pi(a|s)}\nonumber \\
     &=  r(s,a) + \sum_{s'} p(s'|s,a) \E\left[\sum_{t=\tau_i+1}^{\tau_i+N} r(s_t,a_t) \Big\vert s_{\tau_i+1}=s'\right] \nonumber\\
     &= r(s,a) + \sum_{s'} p(s'|s,a) \sum_{j=0}^{N-1} \mathbf{e}_{s'}^\top (P^{\pi})^j r^{\pi}\nonumber \\
     &= r(s,a) + \sum_{s'} p(s'|s,a) \sum_{j=0}^{N-1} (\mathbf{e}_{s'}^\top (P^{\pi})^j - (\mu^{\pi})^\top) r^{\pi}  + NJ^{\pi} \tag{because $ \mu^{\pi \top} r^{\pi} = J^{\pi}$}\nonumber\\
        &=r(s,a) + \sum_{s'} p(s'|s,a)  v^{\pi}(s')  + NJ^{\pi} -  \sum_{s'} p(s'|s,a) \sum_{j=N}^{\infty} (\mathbf{e}_{s'}^\top (P^{\pi})^j - (\mu^{\pi})^\top) r^{\pi}\tag{By Eq.~\eqref{eqn:v_pi_identity}} \nonumber \\
        &= q^{\pi}(s,a) + NJ^\pi - \delta(s,a) \nonumber \\
        &= \beta^{\pi}(s,a) - \delta(s,a),  \label{eqn: unbiased calculation}
    \end{align}
    where $\delta(s,a)\triangleq  \sum_{s'} p(s'|s,a) \sum_{j=N}^{\infty} (\mathbf{e}_{s'}^\top (P^{\pi})^j - (\mu^{\pi})^\top) r^{\pi}$. 
    By Corollary~\ref{corollary: tail sum mixing time}, 
    \begin{align}
        \left\lvert  \delta(s,a) \right\rvert \leq \frac{1}{T^3}.   \label{eqn: small error calculation}
    \end{align}
    Thus, 
    \begin{align*}
        \Bigg\vert\E\left[  \widehat{\beta}_{k,i}(s,a)   \Big\vert  s_{\tau_i}=s  \right] -  \beta^\pi(s,a) \Bigg\vert \leq \frac{1}{T^3}. 
    \end{align*}
    This shows that $\widehat{\beta}_{k,i}(s,a)$ is an \textit{almost} unbiased estimator for $\beta^{\pi}$ conditioned on all history before $\tau_i$. Also, by our selection of the episode length, $M>0$ will happen with very high probability according to Lemma~\ref{lemma: visit at least once}. These facts seem to indicate that $\widehat{\beta}_k(s,a)$ -- an average of several $\widehat{\beta}_{k,i}(s,a)$ -- will also be an almost unbiased estimator for $\beta^{\pi}(s,a)$ with small error.

However, a caveat here is that the quantity $M$ in Eq.\eqref{eqn: definition of q hat} is random, and it is not independent from the reward estimators $\sum_{i=1}^M \widehat{\beta}_{k,i}(s,a)$. Therefore, to argue that the expectation of $\E[\widehat{\beta}_k(s,a)]$ is close to $\beta^{\pi}(s,a)$, more technical work is needed. 
Specifically, we use the following two steps to argue that $\E[\widehat{\beta}_k(s,a)]$ is close to $\beta^{\pi}(s,a)$. \\
\textbf{Step 1.} Construct an \textit{imaginary world} where $\widehat{\beta}_k(s,a)$ is an almost unbiased estimator of $\beta^{\pi}(s,a)$. \\
\textbf{Step 2.} Argue that the expectation of $\widehat{\beta}_k(s,a)$ in the real world and the expectation of $\widehat{\beta}_k(s,a)$ in the imaginary world are close. 

\begin{figure}[h]
\includegraphics[width=\textwidth]{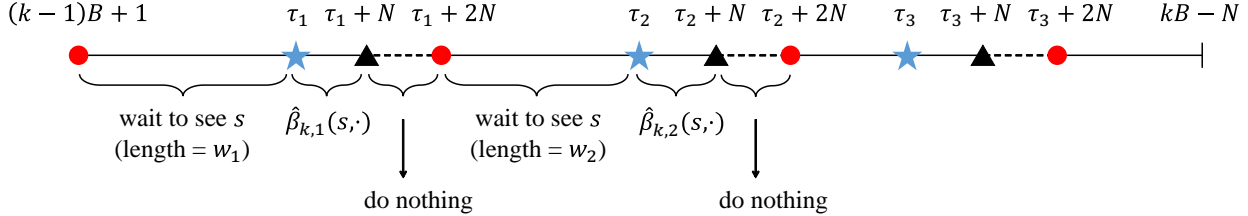}
\caption{An illustration for the sub-algorithm \estimate with target state $s$ (best viewed in color). The red round points indicate that the algorithm ``starts to wait'' for a visit to $s$. When the algorithm reaches $s$ (the blue stars) at time $\tau_i$, it starts to record the sum of rewards in the following $N+1$ steps, i.e. $\sum_{t=\tau_i}^{\tau_i+N} r(s_t,a_t)$. This is used to construct $\widehat{\beta}_{k,i}(s, \cdot)$. The next point the algorithm ``starts to wait for $s$'' would be $\tau_i+2N$ if this is still no later than $kB-N$. }
\label{figure: illustrate estimateQ}
\end{figure}

\paragraph{Step 1. }
We first examine what \estimate sub-algorithm does in an episode $k$ for a state $s$. The goal of this sub-algorithm is to collect disjoint intervals of length $N+1$ that start from $s$, calculate a reward estimator from each of them, and finally average the estimators over all intervals to get a good estimator for $\beta^{\pi}(s,\cdot)$. However, after our algorithm collects an interval $[\tau, \tau+N]$, it \textit{rests} for another $N$ steps before starting to find the next visit to $s$ -- i.e., it restarts from $\tau+2N$ (see Line~\ref{line: skip 2N} in \estimate (Algorithm~\ref{alg: estimate}), and also the illustration in Figure~\ref{figure: illustrate estimateQ}). 

The goal of doing this is to de-correlate the observed reward and the number of collected intervals: as shown in Eq.\eqref{eqn: definition of q hat}, these two quantities affect the numerator and the denominator of $\widehat{\beta}_k(s,\cdot)$ respectively, and if they are highly correlated, then $\widehat{\beta}_k(s,\cdot)$ may be heavily biased from $\beta^{\pi}(s,\cdot)$.  On the other hand, if we introduce the ``rest time'' after we collect each interval (i.e., the dashed segments in Figure~\ref{figure: illustrate estimateQ}), then since the length of the rest time ($N$) is longer than the mixing time, the process will almost totally ``forget'' about the reward estimators collected before. In Figure~\ref{figure: illustrate estimateQ}, this means that the state distributions at the red round points (except for the left most one) will be close to $\mu^{\pi}$ when conditioned on all history that happened $N$ rounds ago. 

We first argue that if the process can indeed ``reset its memory'' at those red round points in Figure~\ref{figure: illustrate estimateQ} (except for the left most one), then we get almost unbiased estimators for $\beta^\pi(s,\cdot)$. That is, consider a process like in Figure~\ref{figure: illustrate estimateQ reset} where everything remains same as in \estimate except that after every rest interval, the state distribution is directly reset to the stationary distribution $\mu^\pi$. 

\begin{figure}[h]
\includegraphics[width=\textwidth]{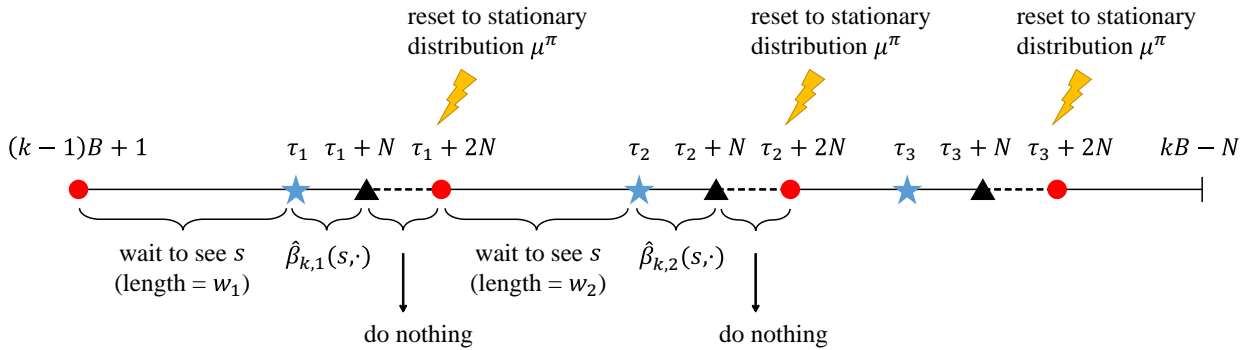}
\caption{The imaginary world (best viewed in color)}
\label{figure: illustrate estimateQ reset}
\end{figure}

Below we calculate the expectation of $\widehat{\beta}_k(s,a)$ in this imaginary world. As specified in Figure~\ref{figure: illustrate estimateQ reset}, we use $\tau_i$ to denote the $i$-th time \estimate starts to record an interval (therefore $s_{\tau_i}=s$), and let $w_i=\tau_i-(\tau_{i-1}+2N)$ for $i>1$ and $w_1=\tau_1-((k-1)B+1)$ be the ``wait time'' before starting the $i$-th interval. Note the following facts in the imaginary world: 
\begin{enumerate}
    \item $M$ is determined by the sequence $w_1, w_2, \ldots$ because all other segments in the figures have fixed length. 
    \item $w_1$ only depends on $s_{(k-1)B+1}$ and $P^\pi$, and $w_i$ only depends on the stationary distribution $\mu^\pi$ and $P^\pi$ because of the reset.  
\end{enumerate}
The above facts imply that in the imaginary world, $w_1, w_2, \ldots$, as well as $M$, are all independent from $\widehat{\beta}_{k,1}(s,a), \widehat{\beta}_{k,2}(s,a), \ldots$. Let $\E'$ denote the expectation in the imaginary world. Then
\begin{align}
    \E'\left[ \widehat{\beta}_k(s,a) \right] 
    &= \Pr[w_1\leq B-N]\times \E'_{\{w_i\}} \left[ \frac{1}{M} \sum_{i=1}^M  \E'\left[\widehat{\beta}_{k,i}(s,a) \Big\vert \{w_i\}  \right]  \Bigg\vert w_1\leq B-N \right] + \Pr[w_1 > B-N] \times 0 \nonumber  \\
    &= \Pr[w_1\leq B-N]\times \E'_{\{w_i\}} \left[ \frac{1}{M}\left( \sum_{i=1}^M \beta^\pi(s,a) - \delta(s,a) \right) \right]  \tag{by the same calculation as in \eqref{eqn: unbiased calculation}}\nonumber  \\
    &= \Pr[w_1\leq B-N]\times \left(\beta^\pi(s,a) - \delta(s,a) \right)   \nonumber \\
    &= \beta^{\pi}(s,a) - \delta'(s,a),  \label{eqn: imaginary world no bias}
\end{align}
where $\E'_{\{w_i\}}$ denotes the expectation over the randomness of $w_1, w_2, \ldots$, and $\delta'(s,a) = (1-\Pr[w_1\leq B-N])\left(\beta^\pi(s,a) - \delta(s,a) \right)+\delta(s,a)$. By Lemma~\ref{lemma: visit at least once}, we have $\Pr[w_1\leq B-N]\geq 1-\left(1-\frac{3}{4\hit}\right)^{\frac{B-N}{N}}  = 1-\left(1-\frac{3}{4\hit}\right)^{4\hit\log_2 T-1}\geq 1-\frac{1}{T^3}$. Together with Eq.~\eqref{eqn: small error calculation} and Lemma~\ref{lemma: bounded span of pi}, we have
\[
|\delta'(s,a)|\leq \frac{1}{T^3}(|\beta^\pi(s,a) |+|\delta(s,a)|) + |\delta(s,a)|
\leq \frac{1}{T^3}(6\mix+ N + \frac{1}{T^3}) + \frac{1}{T^3} =
\order\left(\frac{1}{T^2}\right),
\]
and thus
\begin{equation}\label{eqn:imaginary_world_bias}
\left|\E'\left[ \widehat{\beta}_k(s,a) \right]  - \beta^{\pi}(s,a)\right| = \order\left(\frac{1}{T^2}\right).
\end{equation}

\paragraph{Step 2.} 
Note that $\widehat{\beta}_k(s,a)$ is a deterministic function of $X = \left(M, \tau_1, \calT_1, \tau_2, \calT_2, \ldots, \tau_M, \calT_M \right)$, where $\calT_i=(a_{\tau_i}, s_{\tau_i+1}, a_{\tau_i+1}, \ldots, s_{\tau_i+N}, a_{\tau_i+N})$. We use $\widehat{\beta}_k(s,a) = f(X)$ to denote this mapping. To say $\E[\widehat{\beta}_k(s,a)]$ and $\E'[\widehat{\beta}_k(s,a)]$ are close, we bound their ratio: 
\begin{align}
     \frac{\E[\widehat{\beta}_k(s,a)]}{\E'[\widehat{\beta}_k(s,a)]} = \frac{\sum_X f(X)\mathbb{P}(X) }{\sum_X f(X)\mathbb{P}'(X)}\leq \max_X \frac{\mathbb{P}(X)}{\mathbb{P}'(X)},  \label{eqn: bound E ratio}
\end{align}
where we use $\mathbb{P}$ and $\mathbb{P}'$ to denote the probability mass function in the real world and the imaginary world respectively, and in the last inequality we use the non-negativeness of $f(X)$. 

For a fixed sequence of $X$, the probability of generating $X$ in the real world is 
\begin{align}
      \mathbb{P}(X) 
      &= \mathbb{P}(\tau_1) \times \mathbb{P}(\calT_1|\tau_1) \times \mathbb{P}(\tau_2|\tau_1, \calT_1)\times \mathbb{P}(\calT_2 | \tau_2) \times \cdots \times \mathbb{P}(\tau_M|\tau_{M-1}, \calT_{M-1}) \nonumber \\
      &\qquad \qquad \qquad \qquad \qquad \times \mathbb{P}(\calT_M|\tau_M)\times \Pr\left[s_t\neq s,\ \forall t\in [\tau_M+2N, kB-N]\Big\vert \tau_M,\calT_M\right]. \label{eqn: compare PX 1}
\end{align} 
In the imaginary world, it is 
\begin{align}
      \mathbb{P}'(X) 
      &= \mathbb{P}(\tau_1) \times \mathbb{P}(\calT_1|\tau_1) \times \mathbb{P}'(\tau_2|\tau_1, \calT_1)\times \mathbb{P}(\calT_2 | \tau_2) \times \cdots \times \mathbb{P}'(\tau_M|\tau_{M-1}, \calT_{M-1})  \nonumber \\
      &\qquad \qquad \qquad \qquad \qquad \times \mathbb{P}(\calT_M|\tau_M)\times \Pr\left[s_t\neq s,\ \forall t\in [\tau_M+2N, kB-N]\Big\vert \tau_M,\calT_M\right].  \label{eqn: compare PX 2}
\end{align} 
Their difference only comes from $\mathbb{P}(\tau_{i+1}|\tau_{i}, \calT_{i})\neq \mathbb{P}'(\tau_{i+1}|\tau_{i}, \calT_{i})$ because of the reset. Note that 
\begin{align}
      \mathbb{P}(\tau_{i+1}|\tau_{i}, \calT_{i})& = \sum_{s'\neq s} \mathbb{P}(s_{\tau_{i}+2N}=s'|\tau_{i}, \calT_{i}) \times \Pr\left[s_t\neq s,\ \forall t\in[\tau_{i}+2N, \tau_{i+1}-1], s_{\tau_{i+1}}=s ~\Big\vert s_{\tau_{i}+2N}=s'\right], \label{eqn: comparing P tau P' tau 1} \\
       \mathbb{P}'(\tau_{i+1}|\tau_{i}, \calT_{i})& = \sum_{s'\neq s} \mathbb{P}'(s_{\tau_{i}+2N}=s'|\tau_{i}, \calT_{i}) \times \Pr\left[s_t\neq s,\ \forall t\in[\tau_{i}+2N, \tau_{i+1}-1], s_{\tau_{i+1}}=s ~\Big\vert s_{\tau_{i}+2N}=s'\right]. \label{eqn: comparing P tau P' tau 2}
\end{align}
Because of the reset in the imaginary world, $\mathbb{P}'(s_{\tau_{i}+2N}=s'|\tau_{i}, \calT_{i}) = \mu^{\pi}(s')$ for all $s'$; in the real world, since at time $\tau_{i}+2N$, the process has proceeded $N$ steps from $\tau_{i}+N$ (the last step of $\calT_{i}$), by Corollary~\ref{lemma: mixing time lemma} we have 
\begin{align*}
      \frac{\mathbb{P}(s_{\tau_{i}+2N}=s'|\tau_{i}, \calT_{i})}{\mathbb{P}'(s_{\tau_{i}+2N}=s'|\tau_{i}, \calT_{i})} = 1+ \frac{\mathbb{P}(s_{\tau_{i}+2N}=s'|\tau_{i}, \calT_{i})-\mu^{\pi}(s')}{\mu^{\pi}(s')}  \leq  1 +\frac{2}{T^4 \mu^{\pi}(s') }  \leq 1+ \frac{1}{T^3} \quad \text{\ for all\ } s', 
\end{align*}
which implies $\frac{\mathbb{P}(\tau_{i+1}|\tau_{i}, \calT_{i})}{\mathbb{P}'(\tau_{i+1}|\tau_{i}, \calT_{i})}\leq 1+\frac{1}{T^3}$ by \eqref{eqn: comparing P tau P' tau 1} and \eqref{eqn: comparing P tau P' tau 2} . This further implies $\frac{\mathbb{P}(X)}{\mathbb {P}'(X)}\leq \left(1+\frac{1}{T^3}\right)^M\leq e^{\frac{M}{T^3}}\leq e^{\frac{1}{T^2}} \leq 1+\frac{2}{T^2}$ by \eqref{eqn: compare PX 1} and \eqref{eqn: compare PX 2}. From \eqref{eqn: bound E ratio}, we then have 
\begin{align*}
    \frac{\E[\widehat{\beta}_k(s,a)]}{\E'[\widehat{\beta}_k(s,a)]} \leq 1+\frac{2}{T^2}. 
\end{align*}
Thus, using the bound from Eq.~\eqref{eqn:imaginary_world_bias} we have
\[
\E[\widehat{\beta}_k(s,a)]\leq \left(1+\frac{2}{T^2}\right)\E'[\widehat{\beta}_k(s,a)] \leq \left(1+\frac{2}{T^2}\right)\left(\beta_k(s,a) + \order\left(\frac{1}{T^2}\right)\right) \leq \beta_k(s,a) +\order\left(\frac{1}{T}\right). 
\]
Similarly we can prove the other direction: $\beta_k(s,a) \leq \E[\widehat{\beta}_k(s,a)] + \order\left(\frac{1}{T}\right)$, finishing the proof. 
\end{proof}

\begin{proof}[Proof for Eq.\eqref{eqn: unbiased variance}]
    We use the same notations, and the similar approach as in the previous proof for Eq.~\eqref{eqn: unbiased mean}. That is, we first bound the expectation of the desired quantity in the imaginary world, and then argue that the expectation in the imaginary world and that in the real world are close. 
\paragraph{Step 1. } 
Define $\Delta_i = \widehat{\beta}_{k,i}(s,a) -\beta^\pi(s,a) +\delta(s,a)$. Then $\E'[\Delta_i~|~\{w_i\}]=0$ by Eq.\eqref{eqn: unbiased calculation}. Thus in the imaginary world, 
\begin{align}
    &\E'\left[ \left(\widehat{\beta}_k(s,a) - \beta^{\pi}(s,a))\right)^2 \right]   \nonumber  \\
    &=\E'\left[ \left(\frac{1}{M}\sum_{i=1}^M \left(\widehat{\beta}_{k,i}(s,a) - \beta^{\pi}(s,a)\right)\right)^2 \one[M>0] + \beta^\pi(s,a)^2 \one[M=0] \right]   \nonumber  \\
    &= \E'\left[ \left(\frac{1}{M}\sum_{i=1}^M \Delta_i-\delta(s,a) \right)^2 \one[M>0] + \beta^\pi(s,a)^2 \one[M=0] \right]  \nonumber  \\
    &\leq    \E'\left[ \left( 2\left(\frac{1}{M}\sum_{i=1}^M \Delta_i\right)^2  + 2\delta(s,a)^2 \right) \one[M>0] + \beta^\pi(s,a)^2 \one[M=0] \right]  \tag{using $(a-b)^2 \leq 2a^2+2b^2$}   \nonumber \\
    &\leq \Pr[w_1\leq B-N]\times \E'_{\{w_i\}}\left[ \E'\left[ 2\left(\frac{1}{M}\sum_{i=1}^M 
\Delta_i \right)^2 + 2\delta(s,a)^2~\Bigg\vert~ \{w_i\}\right] \Bigg\vert w_1\leq B-N \right] + \Pr[w_1 > B-N] \times (N+6\mix)^2 \tag{$\beta^\pi(s,a)\leq N+6\mix$ by Lemma~\ref{lemma: bounded span of pi}} \nonumber   \\
    &\leq \E'_{\{w_i\}}\left[ \E'\left[ 2\left(\frac{1}{M}\sum_{i=1}^M 
\Delta_i \right)^2 ~\Bigg\vert~ \{w_i\}\right] \Bigg\vert w_1\leq B-N \right] + \order\left(\frac{1}{T}\right)    \tag{using  Lemma~\ref{lemma: visit at least once}: $\Pr[w_1 > B-N]\leq \left(1-\frac{3}{4\hit}\right)^{\frac{B-N}{N}} \leq \frac{1}{T^3}$.}  \nonumber  \\
    &\leq \E'_{\{w_i\}}\left[  \frac{2}{M^2}\sum_{i=1}^M \E'\left[
\Delta_i^2 ~\big\vert~ \{w_i\}\right] \Bigg\vert w_1\leq B-N \right] + \order\left(\frac{1}{T}\right) \tag{$\Delta_i$ is zero-mean and independent of each other conditioned on $\{w_i\}$}  \nonumber  \\
    &\leq \E'_{\{w_i\}}\left[  \frac{2}{M^2} \cdot M\times \frac{\order(N^2)}{\pi(a|s)}\ \Bigg\vert w_1\leq B-N \right] + \order\left(\frac{1}{T}\right)  \tag{$\E'[\Delta_i^2]\leq \pi(a|s) \frac{\order(N^2)}{\pi(a|s)^2}=\frac{\order(N^2)}{\pi(a|s)}$ by definition of $\widehat{\beta}_k(s,a)$, Lemma~\ref{lemma: bounded span of pi}, and Eq.~\eqref{eqn: small error calculation}} \nonumber   \\
    &\leq \frac{\order(N^2)}{\pi(a|s)}\E'\left[\frac{1}{M}~\Big\vert~ w_1\leq B-N \right] + \order\left(\frac{1}{T}\right). \label{eqn: tmppp}
\end{align}
Since $\Pr'[M=0]\leq \frac{1}{T^3}$ by Lemma~\ref{lemma: visit at least once}, we have $\Pr'[w_1\leq B-N] = \Pr'[M>0]\geq 1-\frac{1}{T^3}$. Also note that if 
\begin{align*}
     M<M_0:=  \frac{B-N}{ 2N+\frac{4N\log T}{\mu^{\pi}(s)}}, 
\end{align*}
then there exists at least one waiting interval (i.e., $w_i$) longer than $\frac{4N\log T}{\mu^{\pi}(s)}$ (see Figure~\ref{figure: illustrate estimateQ} or \ref{figure: illustrate estimateQ reset}) . By Lemma~\ref{lemma: visit at least once}, this happens with probability smaller than $\left( 1-\frac{3\mu^{\pi}(s)}{4} \right)^{\frac{4\log T}{\mu^\pi(s)}}\leq \frac{1}{T^3}$.

Therefore, 
\begin{align*}
     \E'\left[\frac{1}{M}~\Big\vert~ M>0\right] 
     &= \frac{\sum_{m=1}^\infty \frac{1}{m}\Pr'[M=m] }{\Pr'[M>0]} 
     \leq \frac{1\times \Pr'[M<M_0] + \frac{1}{M_0}\times \Pr'[M\geq M_0]}{\Pr'[M>0]} \\
     &\leq \frac{1\times \frac{1}{T^3} + \frac{2N+\frac{4N\log T}{\mu^\pi(s)}}{B-N}}{1-\frac{1}{T^3}}\leq \order\left(\frac{N\log T}{B\mu^\pi(s)}\right). 
\end{align*}
Combining with \eqref{eqn: tmppp}, we get 
\begin{align*}
    \E'\left[ \left(\widehat{\beta}_k(s,a) - \beta^{\pi}(s,a))\right)^2 \right] \leq \order\left(\frac{N^3\log T}{B\pi(a|s)\mu^{\pi}(s)}\right). 
\end{align*}

\paragraph{Step 2.} By the same argument as in the ``Step 2'' of the previous proof for Eq.~\eqref{eqn: unbiased mean}, we have 
\begin{align*}
    \E\left[ \left(\widehat{\beta}_k(s,a) - \beta^{\pi}(s,a))\right)^2 \right] \leq \left(1+\frac{2}{T^2}\right) \E'\left[ \left(\widehat{\beta}_k(s,a) - \beta^{\pi}(s,a))\right)^2 \right] \leq  \order\left(\frac{N^3\log T}{B\pi(a|s)\mu^{\pi}(s)}\right),
\end{align*} 
which finishes the proof.
\end{proof}

\subsection{Proof for Lemma~\ref{lemma: stability lemma}}
\begin{proof}
     We defer the proof of Eq.~\eqref{eqn: OMD eta stable} to Lemma~\ref{lem:stability} and prove the rest of the statements assuming Eq.~\eqref{eqn: OMD eta stable}. 
     First, we have
\begin{align}
     \left\lvert J^{\pi_k}-J^{\pi_{k-1}}\right\rvert \nonumber 
     &= \left\lvert \sum_{s} \sum_{a} \mu^{\pi_k}(s)\left(\pi_k(a|s)-\pi_{k-1}(a|s)\right) q^{\pi_{k-1}}(s,a) \right\rvert  \tag{By Lemma~\ref{lemma: reward diff lemma}} \nonumber \\
     &\leq  \sum_{s} \sum_{a} \mu^{\pi_k}(s)\left\lvert\left(\pi_k(a|s)-\pi_{k-1}(a|s)\right)\right\rvert \left\lvert q^{\pi_{k-1}}(s,a)\right\rvert   \nonumber \\
     &= \order\left(\sum_{s} \sum_{a} \mu^{\pi_k}(s) N\eta \pi_{k-1}(a|s) \mix \right) \tag{By Eq.~\eqref{eqn: OMD eta stable} and Lemma~\ref{lemma: bounded span of pi}} \nonumber \\ 
     &= \order\left(\eta \mix N\right) = \order(\eta N^2).   \label{eqn: tmp stability 1}
\end{align}
Next, to prove a bound on $\left\lvert v^{\pi_{k}}(s)-v^{\pi_{k-1}}(s) \right\rvert$, first
note that for any policy $\pi$, 
\begin{align*}
    v^{\pi}(s)
    &= \sum_{n=0}^\infty \left(\mathbf{e}_s^\top (P^{\pi})^n - (\mu^{\pi})^\top \right)  r^\pi  \tag{By Eq.~\eqref{eqn:v_pi_identity}} \\
    &= \sum_{n=0}^{N-1} \left(\mathbf{e}_s^\top (P^{\pi})^n - (\mu^{\pi})^\top \right)   r^\pi   +  \sum_{n=N}^{\infty}\left(\mathbf{e}_s^\top (P^{\pi})^n - (\mu^{\pi})^\top \right)   r^\pi \\
     &= \sum_{n=0}^{N-1} \mathbf{e}_s^\top (P^{\pi})^n r^{\pi} -N J^\pi  + \text{error}^\pi(s), \tag{$J^\pi = (\mu^\pi)^\top r^\pi$}
\end{align*}
where $\text{error}^\pi(s):=\sum_{n=N}^{\infty}\left(\mathbf{e}_s^\top (P^{\pi})^n - \mu^{\pi} \right)^\top  r^\pi$. By Corollary~\ref{corollary: tail sum mixing time}, $|\text{error}^\pi(s)|\leq \frac{1}{T^2}$. 
Thus 
\begin{align}
    \left\lvert v^{\pi_{k}}(s)-v^{\pi_{k-1}}(s) \right\rvert 
&= \left\lvert \sum_{n=0}^{N-1} \mathbf{e}_s^\top \left((P^{\pi_k})^n-(P^{\pi_{k-1}})^n\right) r^{\pi_k} + \sum_{n=0}^{N-1} \mathbf{e}_s^\top (P^{\pi_{k-1}})^n (r^{\pi_k}-r^{\pi_{k-1}}) -NJ^{\pi_{k}}+ NJ^{\pi_{k-1}}\right\rvert + \frac{2}{T^2}  \nonumber \\
&\leq \sum_{n=0}^{N-1} \left\| \left((P^{\pi_k})^n-(P^{\pi_{k-1}})^n\right)r^{\pi_k}  \right\|_\infty  + \sum_{n=0}^{N-1} \|r^{\pi_k}-r^{\pi_{k-1}}\|_\infty + N\left\vert J^{\pi_{k}}-J^{\pi_{k-1}}\right\vert + \frac{2}{T^2}. \label{eqn: v decomposition}
\end{align}
Below we bound each individual term above (using notation $\pi':=\pi_k$, $\pi:=\pi_{k-1}$, $P':=P^{\pi_k}, P:=P^{\pi_{k-1}}$, $r':=r^{\pi_k}, r:=r^{\pi_{k-1}}$, $\mu:=\mu^{\pi_{k-1}}$ for simplicity). 
The first term can be bounded as
    \begin{align*}
        &\| (P'^n - P^n)r'\|_\infty \\
        &= \| \left( P'(P'^{n-1}-P^{n-1}) + (P'-P)P^{n-1} \right) r'\|_\infty \\
        &\leq \|  P'(P'^{n-1}-P^{n-1})r'\|_\infty + \| (P'-P)P^{n-1}  r'\|_\infty \\
        &\leq \|(P'^{n-1}-P^{n-1})r'\|_\infty + \| (P'-P)P^{n-1}  r'\|_\infty \tag{because every row of $P'$ sums to $1$} \\
        &= \|(P'^{n-1}-P^{n-1})r'\|_\infty + \max_{s} \left\lvert \mathbf{e}_s^\top (P'-P)P^{n-1}r' \right\lvert \\
        &\leq \|(P'^{n-1}-P^{n-1})r'\|_\infty + \max_{s} \| \mathbf{e}_s^\top (P'-P)P^{n-1}\|_1,
    \end{align*}
where the last term can be further bounded by
    \begin{align*}
        \max_{s} \| \mathbf{e}_s^\top (P'-P)P^{n-1}\|_1  
        &\leq \max_{s} \| \mathbf{e}_s^\top (P'-P)\|_1 \\ 
        &= \max_s \left(  \sum_{s'} \left\lvert\sum_{a}(\pi'(a|s)-\pi(a|s))p(s'|s,a) \right\rvert \right)  \\
        &\leq  \order\left(\max_s \left(  \sum_{s'}\sum_{a} \eta N \pi(a|s)p(s'|s,a) \right)\right) \tag{By Eq.~\eqref{eqn: OMD eta stable}}\\
        &=  \order\left(\eta N\right).  
    \end{align*}
    Repeatedly applying this bound we arrive at $\| (P'^n - P^n)r'\|_\infty \leq  \order\left(\eta N^2\right)$, 
and therefore, 
    \begin{align*}
       \sum_{n=0}^{N-1} \left\| \left((P^{\pi_k})^n-(P^{\pi_{k-1}})^n\right)r^{\pi_k}  \right\|_\infty  \leq \order\left(\eta N^3\right). 
    \end{align*}
    The second term in Eq.~\eqref{eqn: v decomposition} can be bounded as (by Eq.~\eqref{eqn: OMD eta stable} again)
    \begin{align*}
           \sum_{n=0}^{N-1} \|r'-r\|_\infty = \sum_{n=0}^{N-1} \max_s\left\lvert \sum_a (\pi'(a|s)-\pi(a|s)) r(s,a) \right\rvert 
          \leq \order\left(\sum_{n=0}^{N-1} \max_s  \sum_a \eta N \pi(a|s) \right)
          =  \order\left(\eta N^2\right),
    \end{align*}
    and the third term in Eq.~\eqref{eqn: v decomposition} is bounded via the earlier proof (for bounding $\left\vert J^{\pi_{k}}-J^{\pi_{k-1}}\right\vert$):
    \begin{align*}
          N\left\vert J^{\pi_{k}}-J^{\pi_{k-1}}\right\vert = \order\left(\eta N^3\right).  \tag{Eq.\eqref{eqn: tmp stability 1}}
    \end{align*}    
    Plugging everything into Eq.\eqref{eqn: v decomposition}, we prove $\left\lvert v^{\pi_{k}}(s)-v^{\pi_{k-1}}(s) \right\rvert = \order\left(\eta N^3\right)$. 

Finally, it is straightforward to prove the rest of the two statements:
   \begin{align*}
          \left \lvert q^{\pi_k}(s,a) - q^{\pi_{k-1}}(s,a) \right\rvert 
          &= \left\lvert r(s,a) + \E_{s'\sim p(\cdot|s,a)}[v^{\pi_k}(s')] - r(s,a) - \E_{s'\sim p(\cdot|s,a)}[v^{\pi_{k-1}}(s')]   \right\rvert \\
          &= \left\lvert  \E_{s'\sim p(\cdot|s,a)}[v^{\pi_k}(s') -v^{\pi_{k-1}}(s') ]  \right\rvert = \order\left(\eta N^3\right). 
   \end{align*}
      \begin{align*}
          \left \lvert \beta^{\pi_k}(s,a) - \beta^{\pi_{k-1}}(s,a) \right\rvert 
          &\leq \left\lvert q^{\pi_k}(s,a) - q^{\pi_{k-1}}(s,a) \right\rvert  + N \left \lvert J^{\pi_k} - J^{\pi_{k-1}} \right\rvert = \order\left(\eta N^3\right). 
   \end{align*}
This completes the proof.   
\end{proof}


\section{Analyzing Optimistic Online Mirror Descent with Log-barrier Regularizer --- Proofs for Eq.\eqref{eqn: OMD eta stable}, Lemma~\ref{lemma: PG reduction to bandit}, and Lemma~\ref{lemma: PG regret main term}}
In this section, we derive the \textit{stability property} (Eq.\eqref{eqn: OMD eta stable}) and the regret bound (Lemma~\ref{lemma: PG reduction to bandit} and Lemma~\ref{lemma: PG regret main term}) for optimistic online mirror descent with the log-barrier regularizer. Most of the analysis is similar to that in \citep{wei2018more, bubeck2019improved}.  Since in our \pg algorithm, we run optimistic online mirror descent independently on each state, the analysis in this section only focuses on a specific state $s$. We simplify our notations using $\pi_k(\cdot):= \pi_k(\cdot|s), \pi_k'(\cdot):= \pi_k'(\cdot|s), \widehat{\beta}_k(\cdot):=\widehat{\beta}_k(s, \cdot)$ throughout the whole section. 

Our \pg algorithm is effectively running Algorithm~\ref{alg: OOMD extract} on each state.
We first verify that the condition in Line~\ref{line: R_k condition} of Algorithm~\ref{alg: OOMD extract} indeed holds in our \pg algorithm. Recall that in \estimate (Algorithm~\ref{alg: estimate}) we collect trajectories in every episode for every state. Suppose for episode $k$ and state $s$ it collects $M$ trajectories that start from time $\tau_1, \ldots, \tau_M$ and has total reward $R_1, \ldots, R_M$ respectively. Let $m_a=\sum_{i=1}^M \one[a_{\tau_i}=a]$, then we have $\sum_a m_a = M$. By our way of constructing $\widehat{\beta}_k(s,\cdot)$, we have 
\begin{align*}
     \widehat{\beta}_k(s,a) = \sum_{i=1}^M \frac{R_i \one[a_{\tau_i}=a]}{M\pi_k(a|s)}
\end{align*}
when $M>0$.  Thus we have $\sum_a \pi_k(a|s)\widehat{\beta}_k(s,a) = \sum_a \sum_{i=1}^M \frac{R_i \one[a_{\tau_i}=a]}{M}=\sum_{i=1}^M \frac{R_i}{M}\leq (N+1)$ because every $R_i$ is the total reward for an interval of length $N+1$. This verifies the condition in Line~\ref{line: R_k condition} for the case $M>0$. When $M=0$, \estimate sets $\widehat{\beta}(s, \cdot)$ to zero so the condition clearly still holds. 

\setcounter{AlgoLine}{0}
\begin{algorithm}[t]
\caption{Optimistic Online Mirror Descent (OOMD) with log-barrier regularizer}
\label{alg: OOMD extract}

\textbf{Define:} \\
\quad $C:=N+1$\\
\quad Regularizer $\psi(x) = \frac{1}{\eta}\sum_{a=1}^A \log \frac{1}{x(a)}, \text{\ for\ }x\in \mathbb{R}_{+}^A$\\
\quad Bregman divergence associated with $\psi$: \[D_\psi(x, x') = \psi(x)-\psi(x') - \langle \nabla \psi(x'), x-x' \rangle \]\\
\textbf{Initialization}: $\pi_1'=\pi_1=\frac{1}{A}\one$\\
\For{$k=1, \ldots, K$}{
\nl       Receive $\widehat{\beta}_k\in\mathbb{R}_+^A$ for which $ \sum_a \pi_k(a) \widehat{\beta}_k(a) \leq C $. \label{line: R_k condition}  \\
\nl       Update
       \begin{align*}
       \pi_{k+1}' &= \argmax_{\pi\in \Delta_A}\left\{ \langle \pi, \widehat{\beta}_k \rangle  - D_{\psi}(\pi, \pi_{k}')   \right\} \\
       \pi_{k+1} &= \argmax_{\pi\in \Delta_A}\left\{  \langle \pi, \widehat{\beta}_k \rangle  - D_{\psi}(\pi, \pi_{k+1}')  \right\}
       \end{align*}
}
\end{algorithm}

\subsection{The stability property of Algorithm~\ref{alg: OOMD extract} --- Proof of Eq.\eqref{eqn: OMD eta stable}}
The statement and the proofs of Lemmas~\ref{lem:stability} and \ref{lemma: auxiliary} are almost identical to those of Lemma 9 and 10 in \citep{bubeck2019improved}. 
\begin{lemma}
\label{lem:stability}
In Algorighm~\ref{alg: OOMD extract}, if $\eta \leq \frac{1}{270C}=\frac{1}{270(N+1)}$, 
then 
\begin{align*}
     \left\lvert \pi_{k+1}(a) - \pi_{k}(a)  \right\rvert \leq 120\eta C \pi_{k}(a).  
\end{align*}
\end{lemma}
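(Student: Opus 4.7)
The strategy is to reduce the claim to a one-step stability property for the log-barrier OMD update and then chain it three times along the intermediate iterates that connect $\pi_k$ to $\pi_{k+1}$. Concretely, by the recursion of Algorithm~\ref{alg: OOMD extract}, $\pi_k$ was produced from $\pi_k'$ using $\widehat{\beta}_{k-1}$ in round $k-1$, while $\pi_{k+1}'$ is produced from $\pi_k'$ using $\widehat{\beta}_k$ and $\pi_{k+1}$ is produced from $\pi_{k+1}'$ using $\widehat{\beta}_k$. So $\pi_k \to \pi_{k+1}$ decomposes into three single log-barrier OMD updates sharing the reference point $\pi_k'$ for two of them. Each transition has its loss vector satisfying the pointwise bound $x(a)\ell(a) \leq C = N+1$, exactly the condition verified in the paragraph preceding Algorithm~\ref{alg: OOMD extract} (each coordinate of the IPW estimator times the sampling probability is the bounded reward $R_i \mathbf{1}[a_{\tau_i}=a]$).

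The heart of the proof is a single-step lemma: for $y = \argmax_{\pi \in \Delta_A}\{\langle \pi, \ell\rangle - D_\psi(\pi, x)\}$ with $\ell \geq 0$ and $\max_a x(a)\ell(a) \leq C$, one has $|y(a) - x(a)| \leq c_0\,\eta C\,x(a)$ for every $a$ and some absolute constant $c_0$. To prove this, write the KKT conditions on the simplex to obtain
\[
y(a) = \frac{x(a)}{1 + \eta x(a)(\lambda - \ell(a))},
\]
where $\lambda$ is the unique Lagrange multiplier enforcing $\sum_a y(a) = 1$. I expect controlling $\lambda$ to be the main obstacle. The lower bound $\lambda \geq 0$ follows because $\ell \geq 0$ gives $\sum_a x(a)/(1 - \eta x(a)\ell(a)) \geq 1$, combined with the monotonicity of $F(\lambda) := \sum_a x(a)/(1 + \eta x(a)(\lambda - \ell(a)))$ in $\lambda$. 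For the upper bound, I would locate a candidate $\bar\lambda = O(C)$ at which $F(\bar\lambda) \leq 1$, using $\sum_a x(a)^2 \ell(a) \leq C$ (which follows from the pointwise hypothesis and $\sum_a x(a) = 1$) as the key input. This yields $\eta x(a)|\lambda - \ell(a)| \leq O(\eta C)$ uniformly in $a$, so the denominator lies in $[1 - O(\eta C),\, 1 + O(\eta C)]$, and multiplicative stability follows.

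With the single-step bound in hand I chain the three transitions. The first gives $|\pi_k(a) - \pi_k'(a)| \leq c_0\eta C\,\pi_k'(a)$, which inverts to $\pi_k'(a) \leq (1 + 2c_0\eta C)\pi_k(a)$ for $\eta C$ small; the second and third give $|\pi_{k+1}'(a) - \pi_k'(a)| \leq c_0\eta C\,\pi_k'(a)$ and $|\pi_{k+1}(a) - \pi_{k+1}'(a)| \leq c_0\eta C\,\pi_{k+1}'(a) \leq c_0(1 + c_0\eta C)\eta C\,\pi_k'(a)$. Summing via the triangle inequality and converting the reference from $\pi_k'$ to $\pi_k$ yields $|\pi_{k+1}(a) - \pi_k(a)| \leq 3c_0(1 + O(\eta C))\,\eta C\,\pi_k(a)$. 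The constants $270$ and $120$ in the statement emerge from pinning down $c_0$ and verifying $3c_0(1 + O(\eta C)) \leq 120$ whenever $\eta C \leq 1/270$, which is routine arithmetic once the Lagrange-multiplier bound is secured.
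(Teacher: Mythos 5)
The core single-step lemma in your plan is false as stated, and this is a genuine gap rather than a constant-chasing issue. Under only the pointwise hypothesis $\max_a x(a)\ell(a)\leq C$ (with $\ell\geq 0$, $\eta\leq \frac{1}{270C}$), the conclusion $|y(a)-x(a)|\leq c_0\eta C\,x(a)$ cannot hold with an absolute constant $c_0$. Take $x(1)=1/\sqrt{A}$, $x(a)\approx 1/A$ for $a\geq 2$, $\ell(1)=0$ and $\ell(a)=C/x(a)$ for $a\geq 2$: the pointwise condition holds with equality, yet the Lagrange multiplier in your KKT formula satisfies $\lambda=\Theta(CA)$ (this is exactly what your candidate $\bar\lambda\approx \frac{\sum_a x(a)^2\ell(a)}{\sum_a x(a)^2}$ gives, since $\sum_a x(a)^2\ell(a)\approx C$ but $\sum_a x(a)^2\approx 2/A$), so $\eta x(1)|\lambda-\ell(1)|=\Theta(\eta C\sqrt{A})$ and the relative change of coordinate $1$ is $\Theta(\min\{1,\eta C\sqrt{A}\})$, not $O(\eta C)$; for any fixed $c_0$ choosing $A>(2c_0)^2$ breaks the claim. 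So your key step ``this yields $\eta x(a)|\lambda-\ell(a)|\leq O(\eta C)$ uniformly in $a$'' does not follow from $\sum_a x(a)^2\ell(a)\leq C$. What saves the day is the \emph{aggregate} condition $\sum_a x(a)\ell(a)\leq C$ (Line~\ref{line: R_k condition} of Algorithm~\ref{alg: OOMD extract}), which is strictly stronger than pointwise and is precisely what the paper exploits via the local-norm bound $\|\ell\|_{\nabla^{-2}\psi(x)}\leq\sqrt{\eta}\,C$, i.e.\ $\sum_a x(a)^2\ell(a)^2\leq C^2$; under it your route does give a dimension-free multiplier bound, e.g.\ $x(b)\bar\lambda\leq \frac{x(b)\max_a x(a)}{\sum_a x(a)^2}C\leq C$.

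There is a second structural gap: even with the corrected hypothesis, the aggregate bound for $\widehat{\beta}_k$ is only known under the \emph{played} distribution $\pi_k$ (and that for $\widehat{\beta}_{k-1}$ under $\pi_{k-1}$), while your three one-step applications use the reference points $\pi_k'$, $\pi_k'$ and $\pi_{k+1}'$. Transferring the boundedness condition to those reference points requires already knowing that $\pi_{k-1},\pi_k,\pi_k',\pi_{k+1}'$ are multiplicatively close, i.e.\ the stability of the \emph{previous} round; this is exactly why the paper's proof is an induction on $k$ (its Eq.~\eqref{eqn: imply3} at round $k-1$ is used to certify $\|\widehat{\beta}_k-\widehat{\beta}_{k-1}\|_{\nabla^{-2}\psi(\pi_k)}\leq \sqrt{10\eta}\,C$), and why it compares sibling argmins sharing a reference point via Lemma~\ref{lemma: auxiliary} rather than single reference-to-output steps. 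Your non-inductive ``chain three one-step bounds'' plan leaves this circularity unresolved; with the aggregate hypothesis, an induction over $k$, and a bootstrapping or continuity argument to de-circularize the $y\approx x$ step in the multiplier bound (the paper does this by contradiction along a line segment), your KKT approach can be repaired, but as written it does not prove the lemma.
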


To prove this lemma we make use of the following auxiliary result,
where we use the notation $\norm{a}_M = \sqrt{a^\top M a}$ for a vector $a \in \fR^A$ and a positive semi-definite matrix $M \in \fR^{A\times A}$.
\begin{lemma}
\label{lemma: auxiliary}
For some arbitrary $b_1, b_2 \in \fR^A$, $a_0 \in \Delta_A$ with $\eta \leq \frac{1}{270C}$,
define
\begin{align*}
\begin{cases}
a_1 = \argmin_{a\in \Delta_A} F_1(a), \quad \text{where\ } F_1(a)\triangleq \inn{a, b_1} + D_\psi(a, a_0), \\
a_2 = \argmin_{a\in \Delta_A} F_2(a), \quad \text{where\ } F_2(a)\triangleq \inn{a, b_2} + D_\psi(a, a_0). 
\end{cases}
\end{align*}
($\psi$ and $D_\psi$ are defined in Algorithm~\ref{alg: OOMD extract}). Then as long as $\|b_1-b_2\|_{\nabla^{-2} \psi(a_1)} \leq 12\sqrt{\eta }C$, we have for all $i\in[A]$, $|a_{2,i}-a_{1,i}|\leq 60\eta Ca_{1,i} $. 
\end{lemma}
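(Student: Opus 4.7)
My plan is to combine a continuity argument along the segment from $b_1$ to $b_2$ with a local strong-convexity estimate in the Hessian norm induced by the log-barrier, in the spirit of the analyses of \citep{wei2018more, bubeck2019improved}.

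First, I parameterize the segment: set $b(\alpha)=(1-\alpha)b_1+\alpha b_2$, $F_\alpha(a)=\langle a,b(\alpha)\rangle+D_\psi(a,a_0)$, and $a(\alpha)=\argmin_{a\in\Delta_A}F_\alpha(a)$, so $a(0)=a_1$, $a(1)=a_2$, and $\alpha\mapsto a(\alpha)$ is continuous. Let $\alpha^\star$ be the largest $\alpha\in[0,1]$ such that $|a(\beta)_i-a_{1,i}|\leq 60\eta C\,a_{1,i}$ for every $i$ and every $\beta\in[0,\alpha]$; by continuity this supremum is attained. I will show that at $\alpha=\alpha^\star$ the coordinate bound is actually strictly better than $60\eta C\,a_{1,i}$, which forces $\alpha^\star=1$.

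On this ``good'' segment, each ratio $a(\beta)_i/a_{1,i}$ lies in $[1-60\eta C,\,1+60\eta C]\subset[7/9,11/9]$ (using $\eta C\leq 1/270$), so the diagonal Hessian $\nabla^2\psi(x)=\tfrac{1}{\eta}\mathrm{diag}(1/x_i^2)$ satisfies $\nabla^2\psi(a(\beta))\succeq \rho\,\nabla^2\psi(a_1)$ with $\rho=(9/11)^2$. Starting from the two optimality inequalities $F_1(a(\alpha^\star))\geq F_1(a_1)$ and $F_{\alpha^\star}(a_1)\geq F_{\alpha^\star}(a(\alpha^\star))$ and applying a second-order Taylor expansion to each (whose first-order term is nonnegative by constrained optimality on the simplex), I obtain $F_1(a(\alpha^\star))-F_1(a_1)\geq\tfrac{\rho}{2}\|a(\alpha^\star)-a_1\|_{\nabla^2\psi(a_1)}^2$ and the analogous bound for $F_{\alpha^\star}$. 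Summing them and using the linearity identity $F_{\alpha^\star}-F_1=\alpha^\star\langle\cdot,b_2-b_1\rangle$ yields
\[
\alpha^\star\,\langle a_1-a(\alpha^\star),\,b_2-b_1\rangle \;\geq\; \rho\,\|a(\alpha^\star)-a_1\|_{\nabla^2\psi(a_1)}^2.
\]
Applying dual-norm Cauchy--Schwarz to the left-hand side and invoking the hypothesis $\|b_1-b_2\|_{\nabla^{-2}\psi(a_1)}\leq 12\sqrt{\eta}\,C$ gives $\|a(\alpha^\star)-a_1\|_{\nabla^2\psi(a_1)}\leq 12\sqrt{\eta}\,C/\rho$. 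The coordinate-wise inequality $\|v\|_{\nabla^2\psi(a_1)}^2\geq v_i^2/(\eta a_{1,i}^2)$ then translates this into $|a(\alpha^\star)_i-a_{1,i}|\leq (12/\rho)\,\eta C\,a_{1,i}$, and since $12/\rho<18<60$ this is the strict improvement that closes the continuity argument, so $\alpha^\star=1$ and $a_2=a(1)$ inherits the stated bound.

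The chief technical issue is constant-chasing: the hypothesis $\eta\leq 1/(270C)$ is calibrated exactly so that the local Hessian distortion $\rho=(9/11)^2$ is large enough for $12/\rho$ to be strictly less than $60$. A secondary subtlety is that the Taylor remainder requires a Hessian at an intermediate point on the segment from $a_1$ to $a(\alpha^\star)$; that segment lies entirely in the good region, so the uniform lower bound $\rho\,\nabla^2\psi(a_1)$ applies and the argument goes through cleanly.
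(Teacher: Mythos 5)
Your argument is correct, and it reaches the stated bound (in fact with the better constant $12/\rho\approx 18$ in place of $60$), but it takes a genuinely different route from the paper. The paper argues by contradiction in the $a$-space: assuming $\|a_1-a_2\|_{\nabla^2\psi(a_1)}>60\sqrt{\eta}C$, it picks the exit point $a_2'$ on the segment from $a_1$ to $a_2$ at local-norm distance exactly $60\sqrt{\eta}C$, Taylor-expands $F_2$ around $a_1$ alone (splitting $\nabla F_2(a_1)=(b_2-b_1)+\nabla F_1(a_1)$ and using first-order optimality of $a_1$), and shows $F_2(a_2')>F_2(a_1)$, contradicting convexity of $F_2$ together with optimality of $a_2$. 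You instead run a continuity (bootstrap) method in the reward-vector homotopy $b(\alpha)$, and at the critical $\alpha^\star$ you sum the \emph{two} optimality gaps $F_1(a(\alpha^\star))-F_1(a_1)$ and $F_{\alpha^\star}(a_1)-F_{\alpha^\star}(a(\alpha^\star))$, each lower-bounded by local strong convexity, so that the linear terms collapse to $\alpha^\star\inn{a_1-a(\alpha^\star),\,b_2-b_1}$ and Cauchy--Schwarz in the dual norms gives a direct (not contradiction-based) bound $\|a(\alpha^\star)-a_1\|_{\nabla^2\psi(a_1)}\le 12\sqrt{\eta}C/\rho$. The shared core is the same in both proofs --- the local norm at $a_1$, the Hessian comparison $\nabla^2\psi(\xi)\succeq\rho\,\nabla^2\psi(a_1)$ inside a multiplicative box (valid because the box is convex, so the Taylor intermediate point stays inside), and the calibration $\eta\le\frac{1}{270C}$ --- but your symmetric two-point inequality is a cleaner mechanism that buys a tighter constant and avoids the explicit convexity-contradiction step, at the price of needing the continuity of $\alpha\mapsto a(\alpha)$. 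That continuity is the one fact you assert without proof; it is standard and easy to supply (the log-barrier forces a unique interior minimizer of a strictly convex, lower semicontinuous objective over the compact simplex, and a limit-point argument plus uniqueness gives continuity in the linear term), so it is a presentational gap rather than a mathematical one, but it should be stated if this proof were written out in full.
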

\begin{proof}[Proof of Lemma~\ref{lemma: auxiliary}]
First, we prove $\|a_1-a_2\|_{\nabla^2 \psi(a_1)}\leq 60\sqrt{\eta}C$ by contradiction. Assume $\|a_1-a_2\|_{\nabla^2 \psi(a_1)} > 60\sqrt{\eta}C$. Then there exists some $a_2'$ lying in the line segment between $a_1$ and $a_2$ such that $\|a_1-a_2'\|_{\nabla^2 \psi(a_1)}=60\sqrt{\eta}C$. By Taylor's theorem, there exists $\overline{a}$ that lies in the line segment between $a_1$ and $a_2'$ such that 

\begin{align}
F_2(a_2') 
&= F_2(a_1) + \inn{\nabla F_2(a_1), a_2'-a_1} + \frac{1}{2}\|a_2'-a_1\|_{\nabla^2 F_2(\overline{a})}^2 \nonumber \\
&= F_2(a_1) + \inn{b_2-b_1, a_2'-a_1} + \inn{\nabla F_1(a_1), a_2'-a_1} + \frac{1}{2}\|a_2'-a_1\|_{\nabla^2 \psi(\overline{a})}^2  \nonumber \\
&\geq F_2(a_1) - \|b_2-b_1\|_{\nabla^{-2} \psi(a_1)} \|a_2'-a_1\|_{\nabla^2 \psi(a_1)} + \frac{1}{2}\|a_2'-a_1\|_{\nabla^2 \psi(\overline{a})}^2 \nonumber \\
&\geq F_2(a_1) - 12\sqrt{\eta}C\times 60\sqrt{\eta}C + \frac{1}{2}\|a_2'-a_1\|_{\nabla^2 \psi(\overline{a})}^2     \label{eqn: to contradiction}
\end{align}
where in the first inequality we use H\"{o}lder inequality and the first-order optimality condition $\inn{\nabla F_1(a_1), a_2'-a_1}\geq 0$, 
and in the last inequality we use the conditions
$\|b_1-b_2\|_{\nabla^{-2} \psi(a_1)} \leq 12 \sqrt{\eta}C$
and $\|a_1-a_2'\|_{\nabla^2 \psi(a_1)}=60\sqrt{\eta}C$.  
Note that $\nabla^2 \psi(x)$ is a diagonal matrix
and $\nabla^2 \psi(x)_{ii} = \frac{1}{\eta}\frac{1}{x_i^2}$. Therefore for any $i\in[A]$, 
\begin{align*}
60\sqrt{\eta}C =  \|a_2'-a_1\|_{\nabla^2 \psi(a_1)}
= \sqrt{\sum_{j=1}^A \frac{(a_{2,j}'-a_{1,j})^2}{\eta a_{1,j}^2}}
\geq \frac{|a_{2,i}'-a_{1,i}|}{\sqrt{\eta} a_{1,i}}
\end{align*}
 and thus $\frac{|a_{2,i}'-a_{1,i}|}{ a_{1,i}} \leq 60\eta C \leq \frac{2}{9}$, which implies $\max\left\{ \frac{a_{2,i}'}{a_{1,i}}, \frac{a_{1,i}}{a_{2,i}'} \right\}\leq \frac{9}{7}$. Thus the last term in \eqref{eqn: to contradiction} can be lower bounded by
\begin{align*}
 \|a_2'-a_1\|_{\nabla^2 \psi(\overline{a})}^2 
&= \frac{1}{\eta}\sum_{i=1}^A \frac{1}{\overline{a}_i^2}(a_{2,i}'-a_{1,i})^2\geq  \frac{1}{\eta} \left(\frac{7}{9}\right)^2 \sum_{i=1}^A \frac{1}{a_{1,i}^2}(a_{2,i}'-a_{1,i})^2 \\
&\geq 0.6 \|a_2'-a_1\|_{\nabla^2 \psi(a_1)}^2 = 0.6\times \left(60\sqrt{\eta} C\right)^2 = 2160\eta C^2 .  
\end{align*}
Combining with \eqref{eqn: to contradiction} gives
\begin{align*}
F_2(a_2')\geq F_2(a_1) - 720\eta C^2 + \frac{1}{2}\times 2160\eta C^2> F_2(a_1). 
\end{align*}
Recall that $a_2'$ is a point in the line segment between $a_1$ and $a_2$. By the convexity of $F_2$, the above inequality implies $F_2(a_1)<F_2(a_2)$, contradicting the optimality of $a_2$. 

Thus we conclude $\|a_1-a_2\|_{\nabla^2 \psi(a_1)} \leq 60\sqrt{\eta}C$. Since $\|a_1-a_2\|_{\nabla^2 \psi(a_1)} = \sqrt{\sum_{j=1}^A \frac{(a_{1,j}-a_{2,j})^2}{\eta a_{1,j}^2} }\geq \frac{|a_{2,i}-a_{1,i}|}{\sqrt{\eta} a_{1,i}}$ for all $i$, we get $ \frac{|a_{2,i}-a_{1,i}|}{\sqrt{\eta} a_{1,i}} \leq 60\sqrt{ \eta}C$, which implies $|a_{2,i}-a_{1,i}|\leq 60\eta C a_{1,i} $. 
\end{proof}

\begin{proof}[Proof of Lemma~\ref{lem:stability}]
We prove the following stability inequalities
\begin{align}
\left\lvert \pi_k(a) - \pi_{k+1}'(a)  \right\rvert  &\leq  60\eta C \pi_k(a),    \label{eqn: to_prove1}\\
\left\lvert \pi_{k+1}'(a) - \pi_{k+1}(a)  \right\rvert  &\leq  60\eta C \pi_k(a).   \label{eqn: to_prove2}  
\end{align}
Note that \eqref{eqn: to_prove1} and \eqref{eqn: to_prove2} imply
\begin{align}
     \left\lvert \pi_k(a) - \pi_{k+1}(a)  \right\rvert  &\leq 120 \eta C \pi_k(a),    \label{eqn: imply1}
\end{align}
which is the inequality we want to prove. 

We use induction on $k$ to prove \eqref{eqn: to_prove1} and \eqref{eqn: to_prove2}. Note that 
\eqref{eqn: to_prove1} implies 
\begin{align}
     \pi_{k+1}'(a)\leq \pi_k(a) + 60\eta C\pi_k(a)\leq  \pi_k(a) + \frac{60}{270}\pi_k(a) \leq 2\pi_k(a),  \label{eqn: imply2}
\end{align}
and \eqref{eqn: imply1} implies 
\begin{align}
      \pi_{k+1}(a) \leq \pi_{k}(a) + 120\eta C\pi_{k}(a)\leq  \pi_{k}(a) + \frac{120}{270}\pi_k(a) \leq 2\pi_k(a).  \label{eqn: imply3}
\end{align}
Thus, \eqref{eqn: imply2} and \eqref{eqn: imply3} are also inequalities we may use in the induction process. 

\paragraph{Base case. } 
For the case $k=1$, note that 
\begin{align*}
      \begin{cases}
          \pi_1 = \argmin_{\pi \in \Delta_A} D_\psi(\pi ,\pi_1'),   \text{\qquad \qquad (because $\pi_1=\pi_1'$)}   \\
          \pi_{2}' =  \argmin_{\pi \in \Delta_A} \inn{\pi, -\widehat{\beta}_1} + D_\psi(\pi,\pi_1').    
    \end{cases}
\end{align*}
To apply Lemma~\ref{lemma: auxiliary} and obtain \eqref{eqn: to_prove1}, we only need to show $\| \widehat{\beta}_1\|_{\nabla^{-2}\psi(\pi_1)} \leq 12\sqrt{\eta} C$.  Recall $\nabla^2 \psi(u)_{ii} = \frac{1}{\eta}\frac{1}{u_i^2}$ and $\nabla^{-2} \psi(u)_{ii} = \eta u_i^2$.  Thus, 
\begin{align*}
  \|\widehat{\beta}_1\|_{\nabla^{-2}\psi(\pi_1)}^2 
  &\leq \sum_{a=1}^A \eta \pi_{1}(a)^2 \widehat{\beta}_1(a)^2  \leq \eta C^2
\end{align*}
because $\sum_a \pi_1(a)^2\widehat{\beta}_1(a)^2 \leq \left( \sum_a \pi_1(a)\widehat{\beta}_1(a)\right)^2 \leq C^2$ by the condition in Line~\ref{line: R_k condition} of Algorithm~\ref{alg: OOMD extract}. This proves \eqref{eqn: to_prove1} for the base case. 

Now we prove \eqref{eqn: to_prove2} of the base case. Note that
\begin{align}
    \begin{cases}
          \pi_{2}' = \argmin_{\pi\in \Delta_A} D_\psi(\pi, \pi_{2}'), \\
          \pi_{2} =  \argmin_{\pi\in \Delta_A} \inner{\pi, -\widehat{\beta}_1} + D_\psi(\pi,\pi_{2}'). 
    \end{cases}
    \label{eqn: update rule 2}
\end{align}
Similarly, with the help of Lemma~\ref{lemma: auxiliary}, we only need to show $\| \widehat{\beta}_1\|_{\nabla^{-2}\psi(\pi_2')} \leq 12\sqrt{\eta} C$.  This can be verified by 
\begin{align*}
  \|\widehat{\beta}_1\|_{\nabla^{-2}\psi(\pi_2')}^2 \leq \sum_{a=1}^A \eta \pi_{2}'(a)^2 \widehat{\beta}_1(a)^2 
  \leq 4\sum_{a=1}^A \eta \pi_1(a)^2  \widehat{\beta}_1(a)^2
  \leq 4\eta C^2, 
\end{align*}
where the second inequality uses \eqref{eqn: imply2} for the base case (implied by \eqref{eqn: to_prove1} for the base case, which we just proved). 

\paragraph{Induction.}
Assume \eqref{eqn: to_prove1} and \eqref{eqn: to_prove2} hold before $k$. 
To prove \eqref{eqn: to_prove1}, observe that
\begin{align}
    \begin{cases}
          \pi_k = \argmin_{\pi \in \Delta_A} \inner{\pi, -\widehat{\beta}_{k-1}} + D_\psi(\pi ,\pi_k'),  \\
          \pi_{k+1}' =  \argmin_{\pi \in \Delta_A} \inn{\pi, -\widehat{\beta}_k} + D_\psi(\pi,\pi_k').    
    \end{cases}
    \label{eqn: update rule 1}
\end{align}
To apply Lemma~\ref{lemma: auxiliary} and obtain \eqref{eqn: to_prove1}, we only need to show $\| \widehat{\beta}_k - \widehat{\beta}_{k-1} \|_{\nabla^{-2}\psi(\pi_k)} \leq 12\sqrt{\eta} C$.  This can be verified by
\begin{align*}
\|\widehat{\beta}_{k}-\widehat{\beta}_{k-1}\|_{\nabla^{-2} \psi(\pi_k)}^2 
&\leq \sum_{a=1}^A \eta \pi_k(a)^2 \left( \widehat{\beta}_{k}(a) - \widehat{\beta}_{k-1}(a)\right)^2 \\
&\leq 2\eta \sum_{a=1}^A \pi_k(a)^2\left( \widehat{\beta}_{k}(a)^2  + \widehat{\beta}_{k-1}(a)^2 \right) \\
&\leq 2\eta \sum_{a=1}^A \pi_k(a)^2 \widehat{\beta}_{k}(a)^2  + 2\eta \sum_{a=1}^A 4\pi_{k-1}(a)^2 \widehat{\beta}_{k-1}(a)^2   \\
&\leq 10\eta C^2, 
\end{align*}
where the third inequality uses \eqref{eqn: imply3} for $k-1$. 

To prove \eqref{eqn: to_prove2}, we observe: 
\begin{align}
    \begin{cases}
          \pi_{k+1}' = \argmin_{\pi\in \Delta_A} D_\psi(\pi, \pi_{k+1}'), \\
          \pi_{k+1} =  \argmin_{\pi\in \Delta_A} \inner{\pi, -\widehat{\beta}_k} + D_\psi(\pi,\pi_{k+1}'). 
    \end{cases}
    \label{eqn: update rule 2}
\end{align}
Similarly, with the help of Lemma~\ref{lemma: auxiliary}, we only need to show $\|\widehat{\beta}_k\|_{\nabla^{-2}\psi(\pi_{k+1}')}\leq 12\sqrt{\eta}C$. This can be verified by 
\begin{align*}
  \|\widehat{\beta}_k\|_{\nabla^{-2}\psi(\pi_{k+1}')}^2 
  \leq \sum_{a=1}^A \eta \pi_{k+1}'(a)^2 \widehat{\beta}_k(a)^2  
  \leq 4\sum_{a=1}^A \eta \pi_k(a)^2\widehat{\beta}_k(a)^2
  \leq 4\eta C^2, 
\end{align*}
where in the second inequality we use \eqref{eqn: imply2} (implied by \eqref{eqn: to_prove1}, which we just proved). This finishes the proof.
\end{proof}

\subsection{The regret bound of Algorithm~\ref{alg: OOMD extract} --- Proof of Lemma~\ref{lemma: PG reduction to bandit}}
\begin{proof}[Proof of Lemma~\ref{lemma: PG reduction to bandit}]
     By standard analysis for optimistic online mirror descent (e.g, \citep[Lemma 6]{wei2018more}, \citep[Lemma 5]{chiang2012online}), we have (recall $\widehat{\beta}_0$ is the all-zero vector)
\begin{align}
    \langle \tilde{\pi} - \pi_k, \widehat{\beta}_k \rangle \leq D_{\psi}(\tilde{\pi}, \pi_{k}') - D_{\psi}(\tilde{\pi}, \pi_{k+1}') + \langle  \pi_k-\pi_{k+1}', \widehat{\beta}_{k-1} -\widehat{\beta}_k \rangle  \label{eqn: regret analysis decomposition} 
\end{align}
      for any $\tilde{\pi}\in\Delta_A$. Summing over $k$ and telescoping give
\begin{align*}
     \sum_{k=1}^K \langle \tilde{\pi} - \pi_k, \widehat{\beta}_k \rangle \leq D_\psi(\tilde{\pi}, \pi_1') - D_\psi(\tilde{\pi}, \pi_{K+1}') + \sum_{k=1}^K  \langle  \pi_k-\pi_{k+1}', \widehat{\beta}_{k-1} -\widehat{\beta}_k \rangle
     \leq D_\psi(\tilde{\pi}, \pi_1') + \sum_{k=1}^K  \langle  \pi_k-\pi_{k+1}', \widehat{\beta}_{k-1} -\widehat{\beta}_k \rangle.   
\end{align*}
As in \citep{wei2018more}, we pick $\tilde{\pi} = \left(1-\frac{1}{T}\right)\pi^* + \frac{1}{TA}\one_A$, and thus
\begin{align*}
      D_\psi(\tilde{\pi}, \pi_1') 
      &= \psi(\tilde{\pi}) - \psi(\pi_1') - \langle \nabla\psi(\pi_1'), \tilde{\pi}-\pi_1' \rangle \\
      &= \psi(\tilde{\pi}) - \psi(\pi_1') \tag{$\nabla\psi(\pi_1')=-\frac{A}{\eta}\one$ and $\langle \one, \tilde{\pi}-\pi_1' \rangle=0$}\\
      &= \frac{1}{\eta}\sum_{a=1}^A  \log\frac{1}{\tilde{\pi}(a)} - \frac{1}{\eta}\sum_{a=1}^A  \log\frac{1}{\pi_1'(a)} \\
      &\leq \frac{A\log(AT)}{\eta}  - \frac{A\log A}{\eta} = \frac{A\ln T}{\eta}. 
\end{align*}

On the other hand, to bound $\langle \pi_k - \pi_{k+1}',  \widehat{\beta}_{k-1} - \widehat{\beta}_k  \rangle $, we follow the same approach as in  \citep[Lemma 14]{wei2018more}: define $F_k(\pi) = \langle \pi, -\widehat{\beta}_{k-1} \rangle + D_\psi(\pi, \pi_k')$ and $F_{k+1}'(\pi)=\langle \pi, -\widehat{\beta}_k  \rangle  + D_\psi(\pi, \pi_k')$.  Then by definition we have $\pi_k= \argmin_{\pi\in\Delta_A}F_k(\pi)$ and $\pi_{k+1}'=\argmin_{\pi\in\Delta_A}F_{t+1}'(\pi)$. 

Observe that 
\begin{align}
     F_{k+1}'(\pi_k) - F_{k+1}'(\pi_{k+1}')  \nonumber 
     &= (\pi_k-\pi_{k+1}')^\top (\widehat{\beta}_{k-1}-\widehat{\beta}_k) + F_k(\pi_k) - F_k(\pi_{k+1}')   \nonumber \\
     &\leq (\pi_k-\pi_{k+1}')^\top (\widehat{\beta}_{k-1}-\widehat{\beta}_k)   \tag{by the optimality of $\pi_k$} \nonumber \\
     &\leq \left\|  \pi_k - \pi_{k+1}'  \right\|_{\nabla^2\psi(\pi_k)}   \left\|  \widehat{\beta}_{k-1}-\widehat{\beta}_k  \right\|_{\nabla^{-2}\psi(\pi_k)}.  \label{eqn: two way upper bound}
\end{align}

On the other hand, for some $\xi$ that lies on the line segment between $\pi_k$ and $\pi_{k+1}'$, we have by Taylor's theorem and the optimality of $\pi_{k+1}'$, 
\begin{align}
      F_{k+1}'(\pi_k) - F_{k+1}'(\pi_{k+1}')  \nonumber 
      &= \nabla F_{k+1}'(\pi_{k+1}')^\top (\pi_k-\pi_{k+1}') + \frac{1}{2}\left\| \pi_k - \pi_{k+1}'  \right\|_{\nabla^2 F_{k+1}'(\xi)}^2  \nonumber \\
      &\geq \frac{1}{2}\left\| \pi_k - \pi_{k+1}'  \right\|_{\nabla^2 \psi(\xi)}^2   \tag{by the optimality of $\pi_{k+1}'$ and that $\nabla^2 F_{k+1}' = \nabla^2 \psi$} \\
       \label{eqn: two way lower bound}
\end{align}
By Eq.\eqref{eqn: to_prove1} we know $\pi_{k+1}'(a) \in \left[\frac{1}{2}\pi_k(a), 2\pi_{k}(a)\right]$, and hence $\xi(a)\in \left[\frac{1}{2}\pi_k(a), 2\pi_{k}(a)\right]$ holds as well, because $\xi$ is in the line segment between $\pi_k$ and $\pi_{k+1}'$. This implies for any $x$, 
\begin{align*}
     \|x\|_{\nabla^2\psi(\xi)} = \sqrt{\sum_{a=1}^A \frac{x(a)^2}{\eta \xi(a)^2}}\geq  \frac{1}{2}\sqrt{\sum_{a=1}^A \frac{x(a)^2}{\eta \pi_k(a)^2}}=\frac{1}{2}\|x\|_{\nabla^2\psi(\pi_k)}. 
\end{align*}
Combine this with \eqref{eqn: two way upper bound} and \eqref{eqn: two way lower bound}, we get 
\begin{align*}
      \left\|  \pi_k - \pi_{k+1}'  \right\|_{\nabla^2\psi(\pi_k)}   \left\|  \widehat{\beta}_{k-1}-\widehat{\beta}_k  \right\|_{\nabla^{-2}\psi(\pi_k)} \geq \frac{1}{8}\left\|  \pi_k - \pi_{k+1}'  \right\|_{\nabla^2\psi(\pi_k)}^2, 
\end{align*}
which implies $\left\|  \pi_k - \pi_{k+1}'  \right\|_{\nabla^2\psi(\pi_k)}\leq 8  \left\|  \widehat{\beta}_{k-1}-\widehat{\beta}_k  \right\|_{\nabla^{-2}\psi(\pi_k)}$. Hence we can bound the third term in \eqref{eqn: regret analysis decomposition} by 
\begin{align*}
      \left\|  \pi_k - \pi_{k+1}'  \right\|_{\nabla^2\psi(\pi_k)} \left\|  \widehat{\beta}_{k-1}-\widehat{\beta}_k  \right\|_{\nabla^{-2}\psi(\pi_k)} \leq 8 \left\|  \widehat{\beta}_{k-1}-\widehat{\beta}_k  \right\|_{\nabla^{-2}\psi(\pi_k)}^2 = 8\eta \sum_a \pi_k(a)^2 \left(\widehat{\beta}_{k-1}(a)-\widehat{\beta}_{k}(a)\right)^2. 
\end{align*}
Finally, combining everything we have 
\begin{align*}
     &\E\left[\sum_{k=1}^K   \langle \pi^* - \pi_k, \widehat{\beta}_k \rangle \right] \\
     &=\E\left[ \sum_{k=1}^K   \langle \pi^* - \tilde{\pi}, \widehat{\beta}_k \rangle +  \langle \tilde{\pi} - \pi_k, \widehat{\beta}_k \rangle \right] \\
     &\leq \left[ \frac{1}{T}\sum_{k=1}^K \left\langle \pi^* - \frac{1}{A}\one, \widehat{\beta}_k \right\rangle \right] + \order\left( \frac{A\log T}{\eta}+  \eta\sum_{k=1}^K \sum_a \pi_k(a)^2 \left(\widehat{\beta}_{k-1}(a)-\widehat{\beta}_{k}(a)\right)^2 \right), 
\end{align*}
where the expectation of the first term is bounded by $\order\left(\frac{KN}{T}\right) = \order(1)$ by the fact $\E[\widehat{\beta}_k(s)] = \order(N)$ (implied by Lemma~\ref{lemma: PG unbiasedness} and Lemma~\ref{lemma: bounded span of pi}).
This completes the proof.
\end{proof}

\subsection{Proof for Lemma~\ref{lemma: PG regret main term}}
\begin{lemma}[Restatement of Lemma~\ref{lemma: PG regret main term}]
     \begin{align}
     &\E\left[B\sum_{k=1}^K\sum_s\sum_a \mu^*(s)\left( \pi^*(a|s)-\pi_k(a|s) \right) q^{\pi_k}(s,a) \right] \nonumber \\
     &=  \otil\left(\frac{BA\ln T}{\eta} + \eta \frac{TN^3\ratio}{B} + \eta^3 TN^6\right).  \nonumber 
     \end{align}
     With the choice of $\eta = \min\left\{\frac{1}{270(N+1)}, \frac{B\sqrt{A}}{\sqrt{\ratio T N^3}}, \frac{\sqrt[4]{BA}}{\sqrt[4]{TN^6}} \right\}$, the bound becomes 
\begin{align*}
    \otil\left(\sqrt{N^3\ratio AT} + (BAN^2)^{\frac{3}{4}}T^{\frac{1}{4}} + BNA\right)  = \otil\left( \sqrt{\mix^3 \ratio AT} + (\mix^3\hit A)^{\frac{3}{4}}T^{\frac{1}{4}} + \mix^2\hit A\right). 
\end{align*}
\end{lemma}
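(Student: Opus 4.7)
\textbf{Proof proposal for Lemma~\ref{lemma: PG regret main term}.} The plan is to reduce the given quantity to a sum of per-state bandit regrets controlled by Lemma~\ref{lemma: PG reduction to bandit}, and then carefully bound the resulting variance-like term using Lemma~\ref{lemma: PG unbiasedness} and Lemma~\ref{lemma: stability lemma}. First, I would rewrite $q^{\pi_k}(s,a)$ in terms of $\beta^{\pi_k}(s,a)=q^{\pi_k}(s,a)+NJ^{\pi_k}$. Since $\sum_a(\pi^*(a|s)-\pi_k(a|s))=0$, the additive $NJ^{\pi_k}$ drops out, so the inner sum becomes $\sum_a(\pi^*(a|s)-\pi_k(a|s))\beta^{\pi_k}(s,a)$. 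Next, using Eq.~\eqref{eqn: unbiased mean} and the fact that $\pi_k$ and $\pi^*$ are measurable with respect to the history before episode $k$, the tower property lets us replace $\beta^{\pi_k}(s,a)$ by $\widehat{\beta}_k(s,a)$ at a cost of $O(1/T)$ per $(k,s)$; summed with the $B\mu^*(s)$ weights, this gives a negligible $O(1)$ contribution.

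After this replacement, I would apply Lemma~\ref{lemma: PG reduction to bandit} state-by-state and combine with weight $B\mu^*(s)$. The first term gives $O(BA\ln T/\eta)\cdot\sum_s\mu^*(s)=O(BA\ln T/\eta)$. What remains is to bound
\[
B\eta\,\E\!\left[\sum_{k=1}^K\sum_s\mu^*(s)\sum_a\pi_k(a|s)^2\bigl(\widehat{\beta}_k(s,a)-\widehat{\beta}_{k-1}(s,a)\bigr)^2\right].
\]
I would split $\widehat{\beta}_k-\widehat{\beta}_{k-1}=(\widehat{\beta}_k-\beta^{\pi_k})+(\beta^{\pi_k}-\beta^{\pi_{k-1}})+(\beta^{\pi_{k-1}}-\widehat{\beta}_{k-1})$ and apply $(x+y+z)^2\leq 3(x^2+y^2+z^2)$.

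The middle piece is $O(\eta^2 N^6)$ pointwise by Lemma~\ref{lemma: stability lemma}; since $\sum_a\pi_k(a|s)^2\leq 1$ and $\sum_s\mu^*(s)=1$, this term contributes at most $B\eta\cdot K\cdot O(\eta^2 N^6)=O(\eta^3 N^6 T)$, matching the third term in the target bound. For the two stochastic pieces, Eq.~\eqref{eqn: unbiased variance} yields the conditional variance $O\!\left(\tfrac{N^3\log T}{B\pi_k(a|s)\mu^{\pi_k}(s)}\right)$. Multiplying by $\pi_k(a|s)^2$ cancels one factor of $\pi_k(a|s)$, so the $\sum_a$ collapses via $\sum_a\pi_k(a|s)=1$, and we are left with
\[
O\!\left(\eta N^3\log T\cdot \E\!\left[\sum_{k=1}^K\sum_s\tfrac{\mu^*(s)}{\mu^{\pi_k}(s)}\right]\right)\leq O\!\left(\eta N^3\log T\cdot K\cdot\ratio\right)=O\!\left(\eta\tfrac{TN^3\ratio\log T}{B}\right),
\]
using the pointwise bound $\sum_s\mu^*(s)/\mu^{\pi_k}(s)\leq\ratio$ from the definition of $\ratio$. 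Summing the three contributions gives the claimed regret bound.

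The main obstacle I anticipate is the bookkeeping around the variance term: one must correctly invoke the variance bound of Lemma~\ref{lemma: PG unbiasedness} for both $\widehat{\beta}_k$ and $\widehat{\beta}_{k-1}$ (the latter requires conditioning on the history before episode $k-1$ rather than $k$), and one must verify that the $\pi_k(a|s)^2$ weighting in Lemma~\ref{lemma: PG reduction to bandit} exactly cancels the $1/\pi_k(a|s)$ blow-up in the variance estimator so that only the $1/\mu^{\pi_k}(s)$ factor remains, which is precisely what produces $\ratio$ rather than $\hit$. Everything else is a straightforward assembly of the three lemmas together with the standard optimization of $\eta$ under the additional constraint $\eta\leq 1/(270(N+1))$ required by Lemma~\ref{lem:stability}, which accounts for the additive $O(\mix^2\hit A)$ term when the cap on $\eta$ is active.
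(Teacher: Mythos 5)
Your proposal is correct and follows essentially the same route as the paper's proof: rewrite $q^{\pi_k}$ via $\beta^{\pi_k}$, replace it with $\widehat{\beta}_k$ at an $O(1)$ total cost using Eq.~\eqref{eqn: unbiased mean}, invoke Lemma~\ref{lemma: PG reduction to bandit} per state with weight $B\mu^*(s)$, split $\widehat{\beta}_k-\widehat{\beta}_{k-1}$ into the same three pieces bounded by Eq.~\eqref{eqn: unbiased variance} and Lemma~\ref{lemma: stability lemma}, use the $\pi_k(a|s)^2$ cancellation and $\sum_s \mu^*(s)/\mu^{\pi_k}(s)\le\ratio$, and tune $\eta$ under the $\eta\le 1/(270(N+1))$ constraint. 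The only detail worth making explicit (which you flag but do not carry out) is that for the $\widehat{\beta}_{k-1}$ variance piece the paper first uses the stability bound $\pi_k(a|s)\le 2\pi_{k-1}(a|s)$ so that the $1/\pi_{k-1}(a|s)$ factor cancels correctly.
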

\begin{proof}
    For any $s$, 
    \begin{align}
        &\E\left[\sum_{k=1}^K\sum_a (\pi^*(a|s)-\pi_k(a|s))q^{\pi_k}(s,a) \right] \nonumber \\
        &=\E\left[\sum_{k=1}^K\sum_a (\pi^*(a|s)-\pi_k(a|s))\beta^{\pi_k}(s,a) \right] \tag{by the definition of $\beta^{\pi_k}$ and that $\sum_{a}(\pi^*(a|s)-\pi_k(a|s))J^{\pi_k}=0$}\\
        &\leq \E\left[\sum_{k=1}^K\sum_a (\pi^*(a|s)-\pi_k(a|s)) \E_k \left[\widehat{\beta}_k(s,a)\right] \right] + \order \left(\frac{K}{T}\right) \tag{by Eq.~\eqref{eqn: unbiased mean}} \\
        &= \order\left(\frac{A \ln T}{\eta}\right) + \order \left(\eta \E\left[ \sum_{k=1}^K \sum_a \pi_k(a|s)^2 (\widehat{\beta}_k(s,a)-\widehat{\beta}_{k-1}(s,a))^2 \right]\right) \tag{by Lemma~\ref{lemma: PG reduction to bandit}}  \\
        &\leq \order \left(\frac{A \ln T}{\eta} + \eta N^2\right) +
        \order \left(\eta\E\left[ \sum_{k=2}^K \sum_a \pi_k(a|s)^2 (\widehat{\beta}_k(s,a)-\beta^{\pi_k}(s,a))^2 \right] \right) \nonumber  \\
        &\qquad + \order\left(\eta\E\left[ \sum_{k=2}^K \sum_a \pi_k(a|s)^2 ( \beta^{\pi_k}(s,a)-\beta^{\pi_{k-1}}(s,a))^2 \right] \right) \nonumber  \\
        &\qquad +  \order\left(\eta\E\left[ \sum_{k=2}^K \sum_a \pi_k(a|s)^2 (\beta^{\pi_{k-1}}(s,a)-\widehat{\beta}_{k-1}(s,a))^2 \right] \right),  \label{eqn: PG regret final decompose}
    \end{align}
    where the last line uses the fact $(z_1+z_2+z_3)^2\leq 3z_1^2+3z_2^2+3z_3^2$.
    The second term in \eqref{eqn: PG regret final decompose} can be bounded using Eq.~\eqref{eqn: unbiased variance}: 
    \begin{align*}
         &\order \left(\eta\E\left[ \sum_{k=2}^K \sum_a \pi_k(a|s)^2 (\widehat{\beta}_k(s,a)-\beta^{\pi_k}(s,a))^2 \right]\right) \\
         &=  \order\left(\eta\E\left[ \sum_{k=2}^K \sum_a \pi_k(a|s)^2 \frac{N^3\log T}{B\pi_k(a|s)\mu^{\pi_k}(s)} \right]  \right)\\
         &=  \order\left(\eta\E\left[ \sum_{k=2}^K  \frac{N^3 \log T}{B\mu^{\pi_k}(s)} \right] \right). 
    \end{align*}
    The fourth term in \eqref{eqn: PG regret final decompose} can be bounded similarly, except that we first use Lemma~\ref{lem:stability} to upper bound $\pi_k(a|s)$ by $2\pi_{k-1}(a|s)$. Eventually this term is upper bounded by $\order\left( \eta \E\left[ \sum_{k=2}^{K}  \frac{N^3 \log T}{B\mu^{\pi_{k-1}}(s)} \right] \right) = \order\left(  \eta\E\left[ \sum_{k=1}^{K}  \frac{N^3 \log T}{B\mu^{\pi_k}(s)} \right] \right)$. 

 The third term in \eqref{eqn: PG regret final decompose} can be bounded using Lemma~\ref{lemma: stability lemma}: 
\begin{align*}
    &\order\left( \eta\E\left[ \sum_{k=2}^K \sum_a \pi_k(a|s)^2 ( \beta^{\pi_k}(s,a)-\beta^{\pi_{k-1}}(s,a))^2 \right]\right) \\
    &= \order\left( \eta\E\left[ \sum_{k=2}^K \sum_a \pi_k(a|s)^2 (\eta N^3)^2 \right] \right) \\
    &= \order\left( \eta^3 KN^6\right). 
\end{align*}
Combining all these bounds in \eqref{eqn: PG regret final decompose}, we get 
\begin{align*}
     \E\left[\sum_{k=1}^K\sum_a (\pi^*(a|s)-\pi_k(a|s))q^{\pi_k}(s,a) \right] = \order\left( \frac{A\ln T}{\eta} + \eta \E\left[ \sum_{k=1}^K  \frac{N^3 \log T}{B\mu^{\pi_k}(s)} \right] + \eta^3 KN^6  \right). 
\end{align*}
Now multiplying both sides by $B\mu^*(s)$ and summing over $s$ we get 
\begin{align*}
      \E\left[B\sum_{k=1}^K\sum_s \sum_a \mu^*(s) (\pi^*(a|s)-\pi_k(a|s))q^{\pi_k}(s,a) \right] 
       &= \order\left( \frac{BA\ln T}{\eta} + \eta \E\left[ \sum_{k=1}^K\sum_s  \frac{N^3 (\log T) \mu^*(s) }{\mu^{\pi_k}(s)} \right] + \eta^3 BKN^6  \right) \\
      &\leq \order\left( \frac{BA\ln T}{\eta} +  \eta \ratio KN^3 (\log T)  + \eta^3 BKN^6  \right) \\
      &=  \otil\left( \frac{BA}{\eta} +  \eta \ratio \frac{TN^3}{B}  + \eta^3 TN^6  \right)   \tag{$T=BK$}
\end{align*}
Choosing $\eta = \min\left\{\frac{1}{270(N+1)}, \frac{B\sqrt{A}}{\sqrt{\ratio T N^3}}, \frac{\sqrt[4]{BA}}{\sqrt[4]{TN^6}} \right\}$ ($\eta\leq \frac{1}{270(N+1)}$ is required by Lemma~\ref{lem:stability}), we finally obtain
\begin{align*}
     \E\left[B\sum_{k=1}^K\sum_s \sum_a \mu^*(s) (\pi^*(a|s)-\pi_k(a|s))q^{\pi_k}(s,a) \right]  
     &= \otil\left(\sqrt{N^3\ratio AT} + (BAN^2)^{\frac{3}{4}}T^{\frac{1}{4}} + BNA\right) \\
     &= \otil\left( \sqrt{\mix^3 \ratio AT} + (\mix^3\hit A)^{\frac{3}{4}}T^{\frac{1}{4}} + \mix^2\hit A\right). 
\end{align*}
\end{proof}


\section{Experiments}
\label{app:experiments}
In this section, we compare the performance of our proposed algorithms and previous model-free algorithms. We note that model-based algorithms (UCRL2, PSRL, \dots) typically have better performance in terms of regret but require more memory. For a fair comparison, we restrict our attention to model-free algorithms.

Two environments are considered: a randomly generated MDP and  JumpRiverSwim. Both of the environments consist of 6 states and 2 actions. The reward function and the transition kernel of the random MDP are chosen uniformly at random. The JumpRiverSwim environment is a modification of the RiverSwim environment \cite{strehl2008analysis,ouyang2017learningbased} with a small probability of jumping to an arbitrary state at each time step. 

The standard RiverSwim models a swimmer who can choose to swim either left or right in a river. The states are arranged in a chain and the swimmer starts from the leftmost state ($s = 1$). If the swimmer chooses to swim left, i.e., the direction of the river current, he is always successful. If he chooses to swim right, he may fail with a certain probability. The reward function is: $r(1, \text{left}) = 0.2$, $r(6, \text{right}) = 1$ and $r(s, a) = 0$ for all other states and actions. The optimal policy is to always swim right to gain the maximum reward of state $s = 6$. The standard RiverSwim is not an ergodic MDP and does not satisfy the assumption of the MDP-OOMD algorithm. To handle this issue, we consider the JumpRiverSwim environment which has a small probability $0.01$ of moving to an arbitrary state at each time step. This small modification provides an ergodic environment.

\begin{figure}
\begin{center}
\includegraphics[width=0.45\textwidth]{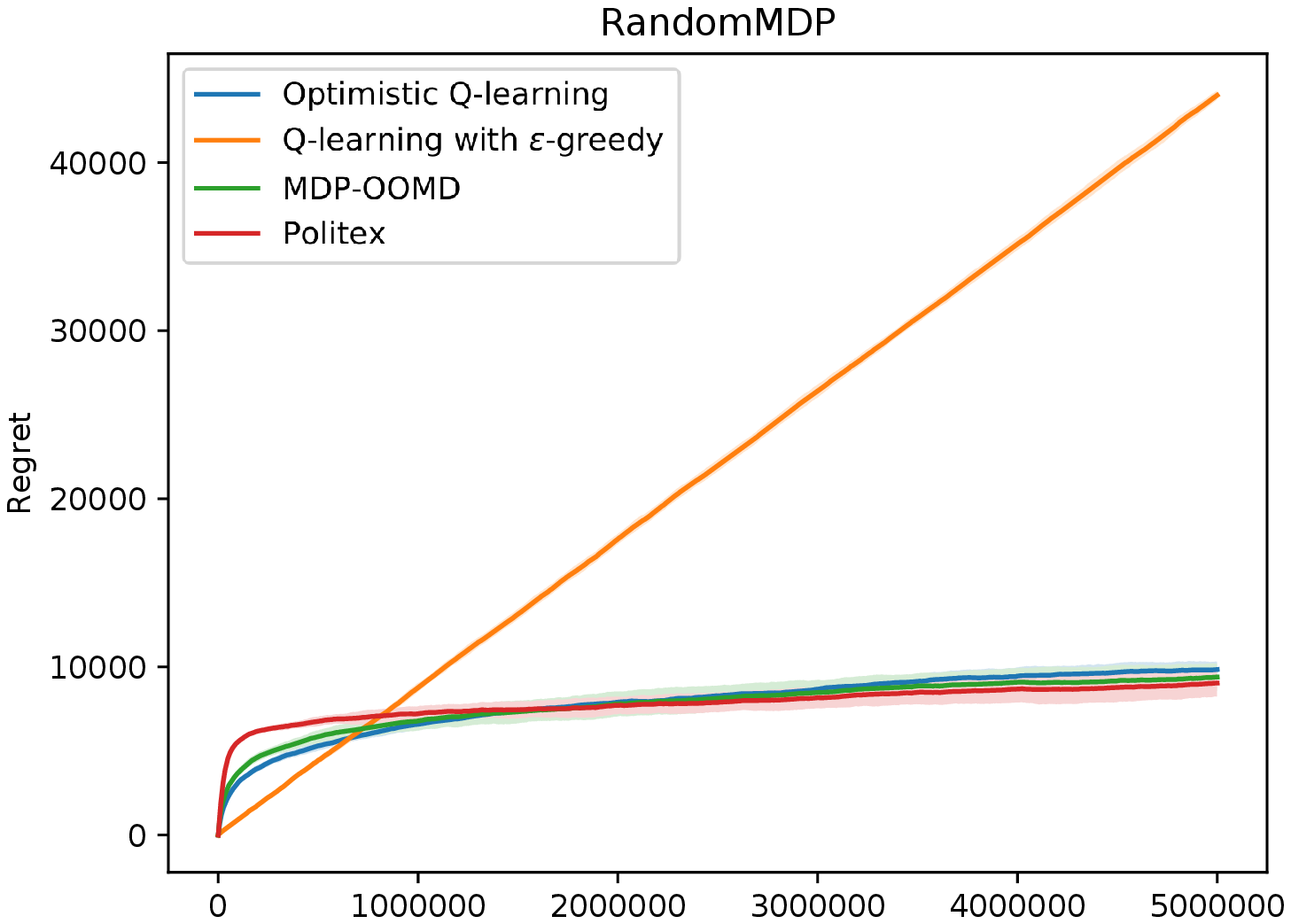}
\includegraphics[width=0.45\textwidth]{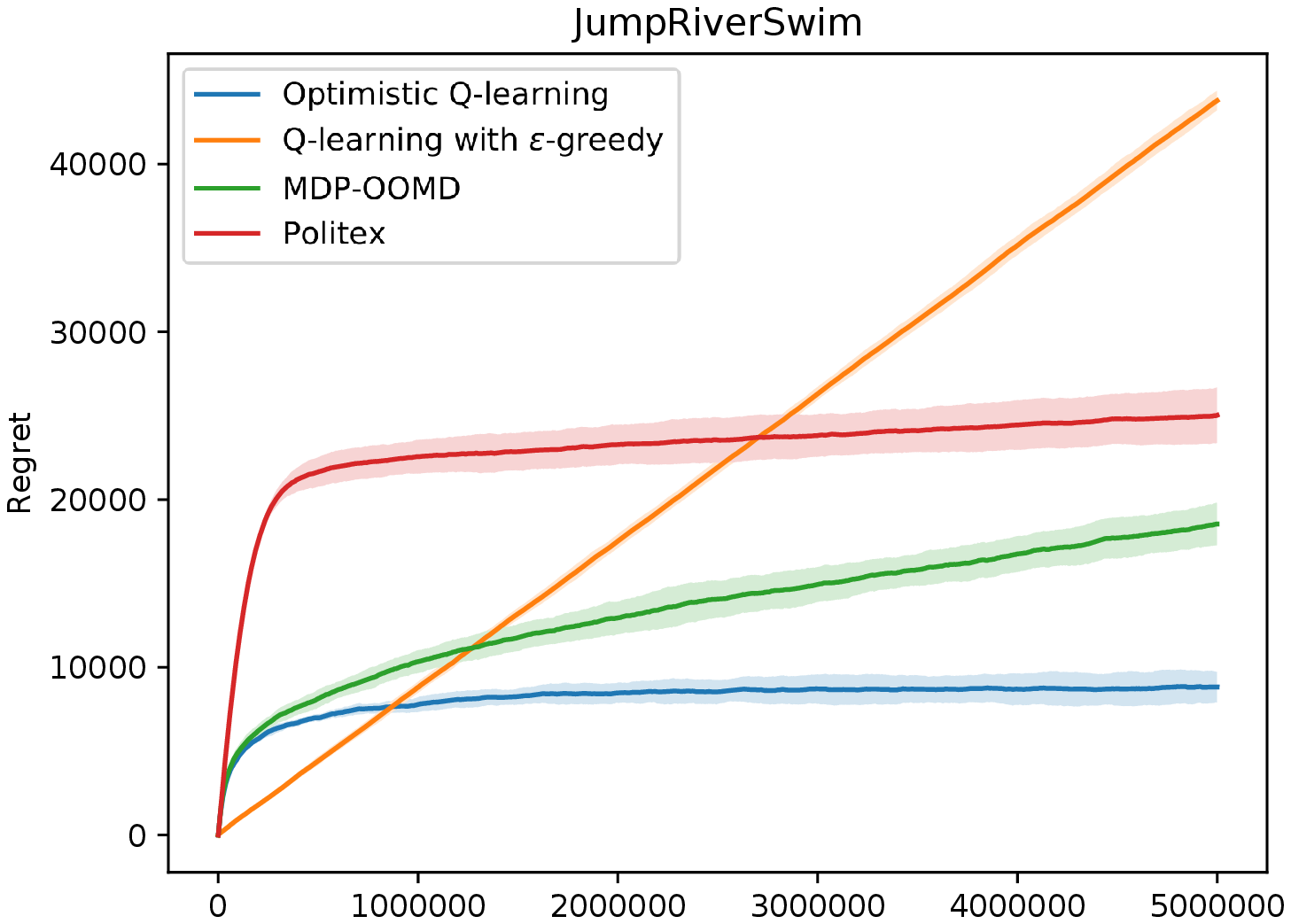}
\caption{Performance of model-free algorithms on random MDP (left) and JumpRiverSwim (right). The standard Q-learning algorithm with $\epsilon$-greedy exploration suffers from linear regret. The \textsc{Optimistic Q-learning} and MDP-OOMD algorithms achieve sub-linear regret. The shaded area denotes the standard deviation of regret over multiple runs.}
\label{figure: experiment}
\end{center}
\end{figure}

We compare our algorithms with two benchmark model-free algorithms. The first benchmark is the standard Q-learning with $\epsilon$-greedy exploration. Figure~\ref{figure: experiment} shows that this algorithm suffers from linear regret, indicating that the naive $\epsilon$-greedy exploration is not efficient. The second benchmark is the \politex algorithm by~\citet{abbasi2019politex}. The implementation of \politex is based on the variant designed for the tabular case, which is presented in their Appendix F and Figure 3. \politex usually requires longer episode length than MDP-OOMD (see Table \ref{tab: hyper parameters}) because in each episode it needs to accurately estimate the Q-function, rather than merely getting an unbiased estimator of it as in MDP-OOMD. 
Figure \ref{figure: experiment} shows that the proposed \textsc{Optimistic Q-learning}, MDP-OOMD algorithms, and the \politex algorithm by~\citet{abbasi2019politex} all achieve similar performance in the RandomMDP environment.  In the JumpRiverSwim environment, the Optimistic Q-learning algorithm outperforms the other three algorithms. Although the regret upper bound for \textsc{Optimistic Q-learning} scales as $\otil(T^{2/3})$ (Theorem \ref{thm: q-learning}), which is worse than that of MDP-OOMD (Theorem \ref{thm: policy optimization thm}), Figure \ref{figure: experiment} suggests that in the environments that lack good mixing properties, \textsc{Optimistic Q-learning} algorithm may perform better. The detail of the experiments is listed in Table \ref{tab: hyper parameters}.

\begin{table*}[t]
\caption{Hyper parameters used in the experiments. These hyper parameters are optimized to perform the best possible result for all the algorithms. All the experiments are averaged over 10 independent runs for a horizon of $5 \times 10^6$.  For the \politex algorithm, $\tau$ and $\tau'$ are the lengths of the two stages defined in Figure 3 of \cite{abbasi2019politex}. }
\label{tab: hyper parameters}
\begin{center}
\renewcommand{\arraystretch}{1.3}
\begin{tabular}{ |c|l|l| }
\hline
 & \textbf{Algorithm} & \textbf{Parameters} \\ \hline
\multirow{4}{*}{Random MDP} 
 & Q-learning with $\epsilon$-greedy & $\epsilon=0.05$  \\
 & Optimistic Q-learning & $H = 100, c = 1$, $b_\tau = c \sqrt{H/\tau}$ \\
 & MDP-OOMD & $N = 2, B = 4, \eta=0.01$  \\ 
 & \politex &  $\tau=1000$, $\tau'=1000, \eta=0.2$  \\ \hline
 \hline
\multirow{4}{*}{JumpRiverSwim} 
 & Q-learning with $\epsilon$-greedy & $\epsilon=0.03$  \\
 & Optimistic Q-learning & $H = 100, c = 1$, $b_\tau = c \sqrt{H/\tau}$ \\
 & MDP-OOMD & $N = 10, B = 30, \eta=0.01$  \\ 
& \politex &  $\tau=3000$, $\tau'=3000, \eta=0.2$  \\ \hline
\end{tabular}
\end{center}
\end{table*}

\end{document}